\newtheorem*{rep@theorem}{\rep@title}
\newcommand{\newreptheorem}[2]{%
\newenvironment{rep#1}[1]{%
 \def\rep@title{#2 \ref{##1}}%
 \begin{rep@theorem}}%
 {\end{rep@theorem}}}
\newtheorem{example}{Example}
\newtheorem{observation}{Observation}
\newtheorem{rem}{Remark}
\newtheorem{theorem}{Theorem}
\newtheorem{corollary}{Corollary}
\newtheorem{lemma}[theorem]{Lemma}
\theoremstyle{remark}
\theoremstyle{definition}
\renewcommand{\>}{\rightarrow}
\newcommand{\E}{\mathbf{E}}
\renewcommand{\P}{\mathbf{P}}
\newcommand{\R}{\mathbb{R}}
\newcommand{\cX}{{\mathcal{X}}}
\newcommand{\cY}{{\mathcal{Y}}}
\newcommand{\cL}{{\mathcal{L}}}
\newcommand{\cU}{{\mathcal{U}}}
\newcommand{\cO}{\mathcal{O}}
\newcommand{\0}{{\mathbf{0}}}
\newcommand{\TPR}{\textrm{\textup{TPR}}}
\newcommand{\TNR}{\textrm{\textup{TNR}}}
\newcommand{\sign}{\textrm{\textup{sign}}}
\newcommand{\bz}{\mathbf{z}}
\newcommand{\bL}{\mathbf{L}}
\newcommand{\bH}{\mathbf{H}}
\newcommand{\argmin}[1]{\underset{#1}{\operatorname{argmin}}}
\newcommand{\amin}[1]{\operatorname{argmin}_{#1}}
\newcommand{\bell}{\boldsymbol{\ell}}
\newcommand{\bw}{\hat{\mathbf{g}}}
\newcommand{\hbw}{\hat{\mathbf{g}}}
\newcommand{\bu}{\mathbf{u}}
\newcommand{\by}{\mathbf{y}}
\newcommand{\bx}{\mathbf{x}}
\newcommand{\boldf}{\mathbf{f}}
\newcommand{\thetaspace}{\R^d}
\newcommand{\boldI}{\mathbf{I}}
\newcommand{\boldO}{\mathbf{0}}
\newcommand{\bM}{\mathbf{M}}
\newcommand{\truncg}{\textrm{\textup{trunc}}(\bw)}
\icmltitlerunning{Optimizing Black-box Metrics with Adaptive Surrogates}
\begin{document}

\twocolumn[
\icmltitle{Optimizing Black-box Metrics with Adaptive Surrogates}



\icmlsetsymbol{equal}{*}

\begin{icmlauthorlist}
\icmlauthor{Qijia Jiang}{stan}
\icmlauthor{Olaoluwa Adigun}{usc}
\icmlauthor{Harikrishna Narasimhan}{goo}
\icmlauthor{Mahdi Milani Fard}{goo}
\icmlauthor{Maya Gupta}{goo}
\end{icmlauthorlist}

\icmlaffiliation{goo}{Google Research, USA}
\icmlaffiliation{usc}{University of Southern California}
\icmlaffiliation{stan}{Stanford University}

\icmlcorrespondingauthor{Harikrishna Narasimhan}{hnarasimhan@google.com}

\icmlkeywords{Black-box metrics, Projected Gradient Descent, Finite-difference}

\vskip 0.3in
]



\printAffiliationsAndNotice{}  

\begin{abstract}
We address the problem of training models with black-box and hard-to-optimize metrics by expressing the metric as a monotonic function of a small number of easy-to-optimize surrogates. We pose the training problem as an optimization over a relaxed surrogate space, which we solve by estimating local gradients for the metric and performing inexact convex projections. We analyze  gradient estimates based on finite differences and local linear interpolations, and show convergence of our approach under smoothness  assumptions with respect to the surrogates. Experimental results on classification and ranking problems verify the proposal performs on par with methods that know the mathematical formulation, and adds notable value when the form of the metric is unknown.
\end{abstract}

\section{Introduction}
We consider the problem of training a machine learning model when the true evaluation metric is difficult to optimize on the training set. This general problem arises with many flavors and in different scenarios. For example, we may have a black-box metric whose mathematical expression is unknown or difficult to approximate with a convex training loss. The  latter is particularly true with non-decomposable evaluation metrics, such as the F-measure or ranking metrics like Precision@$K$, where it is not straight-forward to construct a differentiable objective that closely approximates the metric.

Another example is when the training labels are only a proxy for the true label. This  arises in problems where one has access to cheap-to-acquire noisy labels, such as clicks, but wishes to optimize for a more expensive label, such as whether users rate a result as good. If we have access to a small auxiliary validation set with true labels, how can this information be used to influence the training loss?  Similar examples also arise when the training data has noisy features and we have a small validation set with  clean features, or in machine learning fairness problems where the training data contains group-dependent noise, but we may have access to a small set of auxiliary clean data.



In many of the above scenarios, one wishes to optimize a black-box metric $M$  over $d$ model parameters, but does not have access to explicit gradients for $M$, nor is it practical to obtain reliable gradient estimates when $d$ is large. We provide a general solution to this problem by choosing $K \ll d$ convex surrogate losses, and expressing $M$ as an \textit{unknown} monotonic function $\psi: \R_+^K \> \R$ of the $K$  surrogates. We then reformulate the original problem as an optimization of $\psi$ over the $K$-dimensional surrogate space. The choice of surrogates can be as simple as the hinge losses on positive and negative 
samples, which should work well for metrics like the F-measure, or the surrogates can be chosen to be a family of different convex losses to handle robustness to training noise given a small set of clean validation samples. 

%
%
%

Our strategy is to estimate gradients for the unknown function $\psi$ with respect to its $K$ inputs by measuring changes in the metric $M$ and the $K$ surrogates for different perturbations on the model, and use the  estimates for $\nabla\psi$ to perform \textit{projected gradient descent} over the $K$-dimensional  surrogate space. We show how the projection step can be implemented inexactly but with convergence guarantees by solving a convex problem in the original $d$ parameters. We are thus able to \textit{adaptively} combine the $K$ surrogates to align well with the target metric $M$.

The main contributions of this paper include:
\vspace{-8pt}
\begin{enumerate}
\itemsep-0.1em 
\item A novel formulation that poses the problem of optimizing a black-box metric as a lower-dimensional problem in a surrogate space.
\item A projected gradient descent based training algorithm using finite-differences and local linear interpolations to estimate  gradients.
\item Theoretical results showing convergence to a  stationary point 
under smoothness assumptions on $\psi$.
\item Experiments showing that the proposed approach works as well as methods that take advantage of the form of the metric if known, but can give substantial gains  
when the metric truly is a black-box. 
\end{enumerate}


\section{Related Work}

There has been much work on directly  optimizing specialized classes of evaluation metrics during training. These include
approaches that relax the metric using convex surrogates  \citep{Joachims:2005,Kar+14,Narasimhan+15b,Kar+16}, 
plug-in or post-shift methods that tune a threshold on estimates of class probabilities \citep{Ye+12,Koyejo+14,Narasimhan+14,Yan+18}, reduction approaches that formulate a sequence of cost-sensitive learning tasks \citep{Parambath+14,Narasimhan+15,Alabi+18, Narasimhan18}, and  approaches that use constrained optimization and game-based formulations \citep{Eban+17,Narasimhan+19}.

However, all the above approaches require the evaluation metric to be available in closed-form. Of these, the closest to ours is the approach of \citet{Narasimhan+15}, which reformulates the  learning problem as an optimization problem over the space of confusion matrices. To ensure the constraint set is convex, this approach requires the use of stochastic classifiers, and the  theoretical guarantees assume that the metrics are convex or pseudo-convex in the  confusion matrix. In contrast, we do not require stochastic classifiers, and can handle general metrics.

Recently, there has been some work on optimizing evaluation metrics that are only available as a black-box. \citet{Zhao+19} approximate black-box metrics with a weighted training loss where the weighting function acts on a low-dimensional embedding of each example, and a validation set is used to estimate the parameters of the example-weighting function. A related approach by \citet{Ren+18} uses meta-gradient descent  to re-weight the training examples to handle training
set biases and label noise.  In contrast we model the unknown metric as a function of surrogate losses, and directly estimate the metric gradients, rather than estimating a weighting function on each example.  

\citet{Huang+19} also propose jointly adaptively learning a metric with the model training. They use a parametric form for their learned metric, whereas we nonparametrically estimate the metric gradients.  They use reinforcement learning to  align the training objective's optimum with that of the true metric, 
whereas we use gradient descent over a surrogate space. 
They do not provide any theoretical guarantees.

\citet{Grabocka+19} express the metric as a set function that 
maps each prediction to an embedding
and maps the average embedding across all examples to the predicted metric. 
They jointly optimize the parameters for the loss and the model. This approach is similar to ours in that it expresses the metric as a function on surrogate losses, and attempts to learn that function. However, our approach is different in two key points. First, we take as \emph{given} known-useful surrogate losses,
whereas they \emph{learn} decomposable
surrogate mappings from scratch. 
Second, they parameterize their surrogate functions and final mapping as neural networks, whereas we nonparametrically adaptively estimate the local gradients. They provide limited theoretical guarantees. 

Similar to \citet{Grabocka+19}, the work of \citet{Wu+18} also learns a parameterized metric (e.g. as a neural network). An auxiliary parametric ``teacher" model is used to adaptively learn the parameters for the metric that will maximize performance on a validation set. They do not provide theoretical guarantees.  

\section{Problem Setup and High-level Approach}
\label{sec:formulation}
Let $\cX$ be some instance space and  $\cY$ be  the label space. 
Let $f_\theta: \cX \> \R$ be a  model parametrized by $\theta \in \thetaspace$
that outputs a score $f_\theta(\bx)$ for instance $\bx  \in  \cX$. One can use this score to make a prediction; e.g. for binary classification problems, one predicts $\sign(f_\theta(\bx))$.
We measure performance w.r.t.\ a test distribution $D$ over $\cX \times \cY$. We consider two scenarios, one where we are provided a training sample $S$ of $n$ examples directly drawn from $D$, and the other where the training sample $S$ is drawn from a noisy distribution, and we are provided a smaller clean validation set from $D$. 

The performance of $f_\theta$ is evaluated by a metric $M: \thetaspace \> [0,1]$ computed on $D$, 
where $M$ may be as simple as the error rate $M_{err}(\theta) = \E_{(\bx, y)\sim D}\left[yf_\theta(\bx) > 0\right]$ (or an estimate), 
or $M$ may be a complex, non-decomposable metric such as Precision@$K$ that depends on the scores and the distribution in a more intricate manner.
We consider settings where the form of $M$ is unknown, and the metric is available only as a black-box, i.e.,\ for a given $\theta \in \thetaspace$, we can evaluate $M(\theta)$. The goal is to learn a good $f_\theta$ by solving:
\begin{equation}
\min_{\theta \in \thetaspace}\, M(\theta).
\label{eq:opt}
\vspace{-5pt}
\end{equation}


\subsection{Reformulation with Surrogates}
\label{sec:re-formulation}
To optimize (\ref{eq:opt}), one could directly estimate gradients of $M$ with respect to the $d$ parameters, but $d$ is usually too large for that to be practical. To relax (\ref{eq:opt}) to a more tractable problem, we take as given $K$ \textit{convex} surrogate loss functions $\ell_1, \ldots, \ell_K \colon \thetaspace \> \R_+$ where $K \ll d$, and express $M$ as an \textit{unknown} non-decreasing function of the $K$ surrogates, with an \textit{unknown} slack:  
\begin{align*}
M(\theta) = \psi(\ell_1(\theta), \ldots, \ell_K(\theta)) \,+\, \epsilon(\theta),
\end{align*}
where  $\psi\colon \mathbb{R}_+^K \rightarrow [0,1]$ is \textit{monotonic} but possibly non-convex, and the slack $\epsilon\colon \mathbb{R}^d \rightarrow [-1,1]$ determines how well the metric can be approximated by the $K$ surrogates. 
Note that this decomposition of $M$ 
is not unique. Our results hold for any such decomposition, but to enable a tighter analysis we consider a $\psi$ for which the associated worst-case slack over all $\theta$, i.e.,\ $\max_{\theta\in \R^d} |\epsilon(\theta)|$ is the minimum.

Here are examples of target metrics and convex surrogates.

\begin{example}[\textbf{Classification Metrics}]
\label{exmp:gmean}
\emph{
Consider the task of minimizing the  G-mean metric given by
$
 1 - \sqrt{\TPR \times \TNR},
$
where TPR is the true positive rate and TNR is the true negative rate. This metric promotes high accuracies on both the positive and negative class and is popular for classification tasks where there is class imbalance \citep{Daskalaki+06}. Possible surrogates for this metric include the average logistic or hinge losses on the positive and negatives examples as these serve as proxies for the TPR and TNR. It is reasonable to assume monotonic $\psi$ here, since lower surrogate values tend to produce better TPR and TNR values, and in turn lower G-means. The F-measure is another popular metric that can be written as a monotonic function of the TPR and TNR \cite{Koyejo+14}, and there again the average positive and negative losses would make good surrogates.}
\end{example}

\begin{example}[\textbf{Misaligned Training Data}] 
\emph{
Consider minimizing a metric using a training dataset that is noisy or misaligned with the test distribution, but we have access to a small validation set with clean data. The metric $M$ here is evaluated on the clean validation set, and the surrogates $\ell_1, \ldots, \ell_K$ might be convex $l_p$ losses on the training data with different values of $p>1$ to tune the noise robustness. In this case, the precise mathematical relationship $\psi$ between the validation metric and the  surrogates is unknown.
}
\end{example}

\begin{example}[\textbf{ML Fairness Problems}] 
\emph{
For blackbox ML fairness metrics, good surrogates might be logistic losses on the positive and negative samples for different groups.
}
\end{example}

\begin{example}[\textbf{Ranking Metrics}]
\emph{
Consider optimizing a ranking metric such as precision@$K$. While there are different convex surrogates available for this metric \cite{Joachims05, Narasimhan+15b}, the surrogate that performs the best can vary with the application. We have also observed in practice that sometimes setting a different value of $K$ in the training loss produces a better precision@$K$ during evaluation time.
The proposed set-up gives us a way to combine multiple available ranking surrogates (possibly with different $K$ values) to align well with the test metric. 
}
\end{example}


\subsection{High-level Approach}
Let $\cL := \{(\ell_1(\theta), \ldots, \ell_K(\theta))\,|\,\theta \in\R^d\}$ be the set of feasible surrogate profiles. We then seek to approximate \eqref{eq:opt} by ignoring the slack $\epsilon$ and posing the problem as an optimization of $\psi$ over the $K$-dimensional set $\cL$:
\begin{equation}
\min_{\bell \in \cL}\, \psi(\bell).
\label{eq:hippo}
\end{equation}
Our high-level idea is to solve this re-formulated problem by applying \textit{projected gradient descent} over $\cL$.

However, there are many challenges in implementing this idea. First, while each $\ell_k$ is convex, the space of feasible surrogates $\cL$ is not necessarily a convex set. Second, the function $\psi$ is unknown to us, and therefore we  need to estimate gradients for $\psi$ with only access to the metric $M$ and the surrogates $\bell$. Third, we would need to implement projections onto the $K$-dimensional surrogate space without explicitly constructing this set. 




\vspace{-3pt}
\section{Surrogate Projected Gradient Descent}
\label{sec:pgd}
We now explain how we tackle the above challenges. 
\vspace{-5pt}
\subsection{Convexifying the Surrogate Space}  To turn (\ref{eq:hippo}) into a problem over a convex domain, we define the \textit{epigraph} of the convex surrogate function profiles:
\[
\cU := \{\bu \in \R_+^K~|~ \bu \geq \bell(\theta) ~\text{for some}~ \theta \in \thetaspace\}\, .
\]
\begin{observation}
\label{obs:u-convex}
$\cU$ is a convex superset of $\cL$.
\end{observation}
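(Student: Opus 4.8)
The plan is to verify the two assertions separately: that $\cL \subseteq \cU$, and that $\cU$ is convex. The containment is essentially immediate, and the convexity is where the structure of the problem (convexity of each $\ell_k$, plus convexity of the domain $\thetaspace = \R^d$) gets used; this is the only step with any content, though it is still short.

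For the containment, I would take an arbitrary point $\bell(\theta) = (\ell_1(\theta),\ldots,\ell_K(\theta)) \in \cL$. Since each $\ell_k$ maps into $\R_+$, we have $\bell(\theta) \in \R_+^K$, and trivially $\bell(\theta) \geq \bell(\theta)$ coordinatewise, so $\theta$ itself witnesses membership of $\bell(\theta)$ in $\cU$. Hence $\cL \subseteq \cU$.

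For convexity, I would pick $\bu, \bu' \in \cU$ with witnesses $\theta, \theta' \in \thetaspace$, i.e.\ $\bu \geq \bell(\theta)$ and $\bu' \geq \bell(\theta')$ coordinatewise, and fix $\lambda \in [0,1]$. The candidate witness for $\lambda\bu + (1-\lambda)\bu'$ is $\theta'' := \lambda\theta + (1-\lambda)\theta'$, which lies in $\thetaspace$ because $\R^d$ is a vector space. Then for each coordinate $k$, convexity of $\ell_k$ gives
\[
\ell_k(\theta'') \;=\; \ell_k\big(\lambda\theta + (1-\lambda)\theta'\big) \;\leq\; \lambda\,\ell_k(\theta) + (1-\lambda)\,\ell_k(\theta') \;\leq\; \lambda\, u_k + (1-\lambda)\, u'_k,
\]
so $\bell(\theta'') \leq \lambda\bu + (1-\lambda)\bu'$; moreover $\lambda\bu + (1-\lambda)\bu' \in \R_+^K$ as a convex combination of vectors in $\R_+^K$. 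Thus $\lambda\bu + (1-\lambda)\bu' \in \cU$, establishing convexity.

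There is no real obstacle here — the ``hard part,'' such as it is, is simply recognizing that $\cL$ need not be convex (so one cannot argue directly on $\cL$) and that passing to the upward-closed epigraph $\cU$ is exactly what lets the pointwise convexity of the $\ell_k$'s be combined with convexity of $\R^d$. I would keep the write-up to a few lines and emphasize that this is precisely why the subsequent algorithm operates over $\cU$ rather than $\cL$.
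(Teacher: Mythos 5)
Your proof is correct and is essentially the paper's argument: the paper observes that $\{(\theta,\bu)\,:\,\ell_i(\theta)\leq u_i\ \forall i\}$ is convex as an intersection of sublevel sets of convex functions and (implicitly) that $\cU$ is its projection onto the $\bu$-coordinates, while you simply unfold that projection argument by exhibiting the witness $\lambda\theta+(1-\lambda)\theta'$ explicitly. You also verify the containment $\cL\subseteq\cU$, which the paper leaves as trivial; there is no substantive difference.
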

 \vspace{-4pt}
We then optimize $\psi$ over this  $K$-dimensional convex set:
\begin{equation}
\min_{\bu \in \cU}\, \psi(\bu).
\label{eq:surrogate-opt}
\end{equation} 
This relaxation preserves the optimizer for \eqref{eq:hippo} because $\psi$ is monotonic  and $\cU$ consists of upper bounds on surrogate profiles in $\cL$: 
\begin{observation}
\label{obs:u-opt}
For any $\bu^* \in \argmin{\bu \in \cU}\,\psi(\bu)$, there exists $\bell^* \in \cL, \, \bell^* \leq \bu^*$, such that $\psi(\bell^*) = \psi(\bu^*)$.\vspace{-1pt}
\end{observation}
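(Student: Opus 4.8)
The plan is to unwind the definition of the epigraph set $\cU$ to produce an explicit witness in $\cL$, and then combine monotonicity of $\psi$ with the optimality of $\bu^*$ to force equality. Concretely, since $\bu^* \in \cU$, by the very definition of $\cU$ there is some $\theta^* \in \thetaspace$ with $\bell(\theta^*) \le \bu^*$ (coordinatewise). I would set $\bell^* := \bell(\theta^*) = (\ell_1(\theta^*), \ldots, \ell_K(\theta^*))$, which lies in $\cL$ by construction and satisfies $\bell^* \le \bu^*$; thus the two structural claims about $\bell^*$ hold automatically, and it remains only to establish $\psi(\bell^*) = \psi(\bu^*)$.

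For that I would argue two inequalities. First, because $\psi$ is monotonic, i.e.\ non-decreasing with respect to the coordinatewise partial order on $\R_+^K$, and $\bell^* \le \bu^*$, we get $\psi(\bell^*) \le \psi(\bu^*)$. Second, by Observation~\ref{obs:u-convex} we have $\cL \subseteq \cU$, so $\bell^* \in \cU$ is a feasible point for $\min_{\bu \in \cU}\psi(\bu)$; since $\bu^*$ is a minimizer of that problem, $\psi(\bu^*) \le \psi(\bell^*)$. Combining the two inequalities yields $\psi(\bell^*) = \psi(\bu^*)$, which is exactly the claim.

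There is essentially no hard step here: the only points requiring care are (i) pinning down the meaning of ``monotonic'' for $\psi$ — that it is non-decreasing in each coordinate, consistent with its use in the surrogate reformulation — and (ii) citing $\cL \subseteq \cU$ from Observation~\ref{obs:u-convex} rather than re-deriving it (if one wants self-containedness, note $\bell(\theta) \ge \bell(\theta)$ witnesses $\bell(\theta) \in \cU$ for every $\theta$). No compactness, continuity, or convexity of $\psi$ is needed; only monotonicity of $\psi$ and the fact that $\cU$ is an upward-closed enlargement of $\cL$ are used.
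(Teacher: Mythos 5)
Your proof is correct and follows exactly the argument the paper intends (the paper only sketches this observation in one sentence, noting that $\psi$ is monotonic and $\cU$ consists of upper bounds on points of $\cL$): extract a witness $\bell^* = \bell(\theta^*) \leq \bu^*$ from the definition of $\cU$, use monotonicity for $\psi(\bell^*) \leq \psi(\bu^*)$, and use optimality of $\bu^*$ over $\cU \supseteq \cL$ for the reverse inequality. No issues.
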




\subsection{Projected Gradient Descent over $\cU$}
We then perform projected gradient descent over 
$\cU$. 
We maintain iterates $\bu^t$ in $\cU$, and at each step, (i) estimate the gradient 
of $\psi$ w.r.t.\ the $K$-dimensional point $\bu^t$, (ii) perform a descent step: 
$\tilde{\bu}^{t+1} = \bu^{t} - \eta\nabla\psi(\bu^t)$, for some $\eta > 0$, and (iii) project  $\tilde{\bu}^{t+1}$ onto $\cU$ to
get the next iterate $\bu^{t+1}$.


In order to implement these steps without knowing $\psi$, or having direct access to the set $\cU$, we simultaneously maintain iterates $\theta^t$ in the original parameter space that map to iterates $\bu^t \in \cU$, i.e., for which $\bu^t =\bell(\theta^t)$.

Now to estimate gradients without direct access to $\psi$, we measure changes in the $K$ surrogates $\bell(\cdot)$ and changes in the metric $M(\cdot)$ at different perturbations of $\theta^t$ 
and compute estimates of $\nabla\psi(\bu^t))$ 
based on finite-differences or local linear interpolations. 
To compute projections without direct access to $\cU$, we formulate a convex optimization  problem over the original parameters $\theta$, and show that this results in an over-constrained projection onto $\cU$.

Thus we maintain iterates $(\bu^t, \theta^t)$ such that $\bu^t = \bell(\theta^t)$, and execute the following at every iteration:
$$
\tilde{\bu}^{t+1} = \bu^t \,-\, \eta\, \text{{gradient}}_\psi(\theta^t;\, M, \bell)
\vspace{-5pt}
$$
$$
(\bu^{t+1}, \theta^{t+1}) = \text{{project}}_{\cU}(\tilde{\bu}^{t+1};\, \bell).
$$
Figure \ref{fig:surrogate-pgd} gives a schematic description of the updates.
The gradient computation takes the current $\theta^t$ as input and probes $M$ and $\bell$ to return an estimate of $\nabla\psi(\bu^t)$. We elaborate on how we estimate gradients  in Section \ref{sec:grad-estimation}. The projection computation takes the updated $\tilde{\bu}^{t+1}$ as input and 
returns a point $\bu^{t+1}$ in $\cU$ and an associated
$\theta^{t+1}$ such that $\bu^{t+1} = \bell(\theta^{t+1})$. We explain this next.


        
        
        
        
        

\begin{figure}
\centering
\includegraphics[scale=0.25]{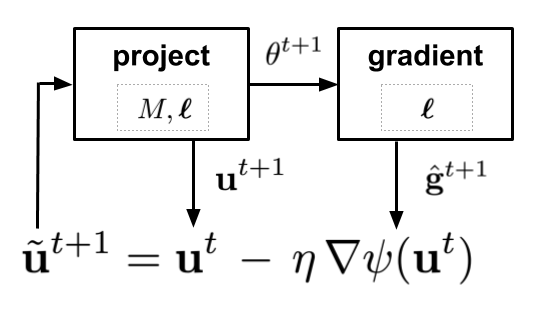}
\vspace{-6pt}
\caption{\textbf{PGD over $K$-dimensional set $\cU$}.\ `project' performs an over-constrained projection onto $\cU$.
`gradient' probes $M$ and $\ell$ returns an estimate $\hbw^{t+1} \in \R^K$ for $\nabla \psi$.}
\label{fig:surrogate-pgd}
\vspace{-8pt}
\end{figure}

\begin{figure}
    \centering
    \begin{tikzpicture}[scale=6.5]
        \fill[fill=green!20!white]
        (0,0) -- (-3mm,0mm) arc (-160:-87:3mm);
        \fill[fill=green!20!white] (-0.1mm,0) rectangle (-3mm,2mm);
        \fill[fill=green!20!white] (-0.1mm,2mm) rectangle (1mm,-2mm);
        \draw[line width=0.5mm,color=green] (-3mm,0)  arc[radius = 3mm, start angle= -160, end angle= -87];
        \draw[line width=0.5mm,color=green] (-3mm,0)  -- (-3mm, 2mm);
        \draw[line width=0.5mm,color=green] (-0.01mm,-1.97mm)  -- (1mm,-1.97mm);

        \draw[->, dashed] (-0.28,-0.05) -- (-0.38,-0.05);
        \node at (-0.4,-0.05) {$\cL$};
        \node at (-1mm,1mm) {$\cU$};
        
        \draw[->] (-4.3mm,-2.5mm) -- (1mm,-2.5mm);
        \draw[->] (-4.3mm,-2.5mm) -- (-4.3mm,2mm);
        
        \node at (1.6mm,-2.5mm) {\scriptsize $\ell_1(\theta)$};
        \node at (-4.3mm,2.3mm) {\scriptsize $\ell_2(\theta)$};
        
        \node at (-1.75mm,-1.2mm) {\scriptsize${\bu_a}$};
        \node at (-2.1mm,-1.3mm)[circle,fill,inner sep=1.5pt]{};
        \draw[->] (-2.6mm,-1.95mm) -- (-2.17mm,-1.37mm);
        
        \node at (-0.5mm,-1.95mm)[circle,fill,inner sep=1.5pt]{};
        \node at (-0.5mm,0.1mm) {\scriptsize$\tilde{\bu}_b$};
        \draw[->] (-0.5mm,-0.2mm) -- (-0.5mm,-1.85mm);
        \node at (-0.5mm,-0.2mm)[circle,fill,inner sep=1.5pt]{};
        \node at (-0.5mm,-2.3mm) {\scriptsize$\bu_b$};

        \node at (-2.62mm,-1.95mm)[circle,fill,inner sep=1.5pt]{};
        \node at (-3mm,-1.9mm) {\scriptsize$\tilde{\bu}_a$};
    \end{tikzpicture}  
    \vspace{-5pt}
    \caption{\textbf{Over-constrained projection.} The space of surrogate profiles $\cL = \{(\ell_1(\theta), \ell_2(\theta)) \,|\, \theta \in \R^d\}$ is a non-convex set (solid line), and its epigraph $\cU = \{\bu \geq \bell\,|\, \bell \in \cL\}$ is convex (shaded region). For the point $\tilde{\bu}_a$ outside $\cU$, the solution $\bu_a$ to \eqref{eq:project} is the same as the exact projection $\Pi(\tilde{\bu}_a)$ onto $\cU$. For the point $\tilde{\bu}_b$ inside the set,  $\Pi(\tilde{\bu}_b)$ = $\tilde{\bu}_b$, whereas $\bu_b$ is one of many solutions to \eqref{eq:project} on the boundary and with $\bu_b \leq  \Pi(\tilde{\bu}_b)$ in each coordinate. 
    }
    \vspace{-7pt}
    \label{fig:project}
\end{figure}

\subsection{Over-constrained Projection}
To implement the projection without explicit access to $\cU$, we set up an optimization over $\theta$ by penalizing a clipped $L_2$-distance between the surrogate profile $\bell(\theta)$  and  $\tilde{\bu}^{t+1}$: 
\begin{align}
\theta^{t+1} &\in \argmin{\theta \in \thetaspace}\,\|\big(\bell(\theta) \,-\, \tilde{\bu}^{t+1}\big)_+\|^2 \nonumber \\
\bu^{t+1} &= \bell(\theta^{t+1}), \label{eq:project}
\end{align}
where $(z)_+ := \max\{0, z\}$ is applied element-wise and $\|\cdot\|$ is the $L_2$-norm. 
Note that we penalize errors in only one direction (i.e. the errors where $\ell_k(\theta) \geq \tilde{u}^{t+1}_k$). This has the advantage of the optimization problem being convex. Moreover, as we show below, 
\eqref{eq:project} 
results in an over-constrained projection: any solution $\bu^{t+1}$ to \eqref{eq:project} is feasible (i.e.\ is in $\cU$), and for a monotonic $\psi$, yields a $\psi$-value that is no worse than what we would get with an exact projection. 
\begin{lemma}
Let $\bu^+$ be the exact projection of $\tilde{\bu}^{t+1} \in \R^K_+$ onto $\cU$. For any solution $\bu^{t+1}$ to \eqref{eq:project}, we have
$\bu^{t+1} \in \cU$, $\bu^{t+1} \leq \bu^+$, and for a monotonic $\psi$,  $\psi(\bu^{t+1}) \leq \psi(\bu^+)$. 
 \label{lem:u-project}
\end{lemma}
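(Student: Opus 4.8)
The plan is to establish the three claims in order: feasibility of $\bu^{t+1}$, the coordinate-wise bound $\bu^{t+1} \leq \bu^+$, and then the $\psi$-value comparison as an immediate consequence of monotonicity. Feasibility is the easiest: by construction $\bu^{t+1} = \bell(\theta^{t+1})$ for the minimizer $\theta^{t+1}$, and $\cL \subseteq \cU$ by Observation~\ref{obs:u-convex}, so $\bu^{t+1} \in \cU$ with no further work.

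The heart of the argument is $\bu^{t+1} \leq \bu^+$. First I would characterize the exact projection $\bu^+ = \Pi(\tilde{\bu}^{t+1})$. Since $\cU$ is an \emph{upward-closed} convex set (if $\bu \in \cU$ and $\bu' \geq \bu$ then $\bu' \in \cU$), its projection has a special structure: one can show that $\bu^+ \geq \tilde{\bu}^{t+1}$ whenever $\tilde{\bu}^{t+1}$ is replaced by a slightly smaller point, or more directly that $\|(\bell(\theta) - \tilde{\bu}^{t+1})_+\|^2$ at $\theta = \theta^{t+1}$ equals the squared distance from $\tilde{\bu}^{t+1}$ to the set $\{\bu' : \bu' \geq \bell(\theta^{t+1})\} \cap \cU$ restricted to the ``positive orthant shift.'' The cleaner route: note that for \emph{any} $\theta$, the clipped objective $\|(\bell(\theta) - \tilde{\bu}^{t+1})_+\|^2$ equals $\mathrm{dist}^2(\tilde{\bu}^{t+1},\, \{\bu : \bu \geq \bell(\theta)\})$, the squared Euclidean distance from $\tilde{\bu}^{t+1}$ to the upward orthant based at $\bell(\theta)$. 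Since each such orthant is contained in $\cU$ (as $\bell(\theta) \in \cL \subseteq \cU$ and $\cU$ is upward-closed), we get
\[
\min_{\theta}\|(\bell(\theta) - \tilde{\bu}^{t+1})_+\|^2 \;=\; \min_{\theta}\,\mathrm{dist}^2\big(\tilde{\bu}^{t+1},\, \{\bu \geq \bell(\theta)\}\big) \;\geq\; \mathrm{dist}^2(\tilde{\bu}^{t+1}, \cU) \;=\; \|\tilde{\bu}^{t+1} - \bu^+\|^2.
\]
Conversely, since $\bu^+ \in \cU$ there is $\theta'$ with $\bell(\theta') \leq \bu^+$, and then $\|(\bell(\theta') - \tilde{\bu}^{t+1})_+\|^2 \leq \|(\bu^+ - \tilde{\bu}^{t+1})_+\|^2 \leq \|\bu^+ - \tilde{\bu}^{t+1}\|^2$, so the two optima coincide and are both equal to $\|\tilde{\bu}^{t+1} - \bu^+\|^2$. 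Now for the minimizer $\theta^{t+1}$, the point $\bar{\bu} := \max(\bell(\theta^{t+1}), \tilde{\bu}^{t+1})$ (coordinate-wise max) satisfies $\bar{\bu} \geq \tilde{\bu}^{t+1}$, lies in $\cU$, and has $\|\bar{\bu} - \tilde{\bu}^{t+1}\|^2 = \|(\bell(\theta^{t+1}) - \tilde{\bu}^{t+1})_+\|^2 = \mathrm{dist}^2(\tilde{\bu}^{t+1},\cU)$; hence $\bar{\bu}$ \emph{is} the exact projection $\bu^+$ (uniqueness of Euclidean projection onto a convex set). Therefore $\bu^{t+1} = \bell(\theta^{t+1}) \leq \max(\bell(\theta^{t+1}), \tilde{\bu}^{t+1}) = \bu^+$ coordinate-wise.

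The third claim then follows instantly: $\psi$ is non-decreasing in each coordinate, so $\bu^{t+1} \leq \bu^+$ gives $\psi(\bu^{t+1}) \leq \psi(\bu^+)$. The main obstacle is the middle step — specifically, showing that the clipped-$L_2$ minimizer's surrogate profile, when ``lifted'' by taking the coordinate-wise max with $\tilde{\bu}^{t+1}$, reproduces the genuine Euclidean projection onto $\cU$; this rests on the identity $\|(\bv - \bw)_+\|^2 = \mathrm{dist}^2(\bw, \bv + \R_+^K)$ together with the upward-closedness of $\cU$, and on the uniqueness of projections onto the closed convex set $\cU$ (closedness of $\cU$ may need a mild assumption, e.g. continuity of $\bell$ and closedness of $\cL$, or one can work with $\overline{\cU}$ throughout). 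I would also remark, matching Figure~\ref{fig:project}, that when $\tilde{\bu}^{t+1} \in \cU$ already, the inequality $\bu^{t+1} \leq \bu^+ = \tilde{\bu}^{t+1}$ can be strict in some coordinates, which is exactly the ``over-constrained'' behavior being advertised.
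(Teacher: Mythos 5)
Your proof is correct, but it reaches the key fact $\bu^+=\max\{\tilde{\bu}^{t+1},\bell(\theta^{t+1})\}$ by a genuinely different route than the paper. The paper (in its auxiliary Lemma on exact projection) rewrites the projection onto $\cU$ as the constrained program $\min_{\bu,\theta}\frac12\|\bu-\tilde{\bu}^{t+1}\|^2$ s.t.\ $\bu\geq\bell(\theta)$, and then verifies the KKT conditions at $(\bu^+,\theta^{t+1})$ with multipliers $\lambda_i=u_i^+-\tilde u_i$, using the first-order optimality condition of \eqref{eq:project} and Slater's condition. You instead use the pointwise identity $\|(\bv-\bw)_+\|^2=\mathrm{dist}^2(\bw,\,\bv+\R_+^K)$, observe that $\cU=\bigcup_{\theta}\big(\bell(\theta)+\R_+^K\big)$ so that the optimal value of \eqref{eq:project} equals $\mathrm{dist}^2(\tilde{\bu}^{t+1},\cU)$, and then identify $\max\{\bell(\theta^{t+1}),\tilde{\bu}^{t+1}\}$ as a point of $\cU$ attaining that distance, hence equal to $\bu^+$ by uniqueness of projection onto a convex set. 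This is more elementary (no duality or constraint qualification) and, in my view, more transparent about \emph{why} the clipped objective computes a projection; the paper's KKT verification is more mechanical but also certifies optimality of $\theta^{t+1}$ directly from the stationarity condition. Two minor points: your opening suggestion that ``$\bu^+\geq\tilde{\bu}^{t+1}$'' is a false start (it is not needed and not generally how you use it) — the ``cleaner route'' you then give is the actual argument; and the closedness caveat is not really needed here, since the lemma presupposes that the exact projection $\bu^+$ exists, and uniqueness then follows from convexity of $\cU$ alone (two distinct minimizers of the distance would have their midpoint in $\cU$ strictly closer). Your conversely-direction inequality chain is also fine, though the union representation of $\cU$ already gives the two optimal values as equal in one line.
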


 \begin{figure}
 \vspace{-12pt}
 \begin{algorithm}[H]
\caption{Surrogate Projected Gradient Descent}
\label{algo:pgd}
\begin{algorithmic}[1]
   \STATE \textbf{Input:} Black-box metric $M$, surrogate loss functions $\ell_1, \ldots, \ell_K:\mathbb{R}^d \rightarrow \mathbb{R}_+^K$, hyper-parameters: $T, \eta$
   \STATE Initialize $\theta^1 \in \thetaspace, \bu^1 = \bell(\theta^1)$
   \FOR{$t=1$ {\bfseries to} T}
   \STATE \textbf{Gradient estimate:}\ Obtain an estimate $\hbw^t$ for gradient $\nabla\psi(\bu^t)$ by invoking Algorithms 2 or 3 with inputs $\theta^t$, $M$ and $\ell_1, \ldots, \ell_K$
    \STATE \textbf{Gradient update:} $\tilde{\bu}^{t+1} = \bu^t \,-\, \eta\, \hbw^t$
   \STATE \textbf{Over-constrained projection:} Solve:
         \vspace{-5pt}
   $$
   \theta^{t+1} \in \argmin{\theta \in \thetaspace}\,\|\big(\bell(\theta) \,-\, \tilde{\bu}^{t+1}\big)_+\|^2
     \vspace{-5pt}
   $$
   to accuracy 
   $\mathcal{O}\big(\frac{1}{\beta^2 T}\big)$
   and set $\bu^{t+1} = \bell(\theta^{t+1})$
   \ENDFOR
  \end{algorithmic}
 \end{algorithm}
 \vspace{-15pt}
\end{figure}
Problem \eqref{eq:project} may not have a unique solution. For example, when $\tilde{\bu}^{t+1}$ is in the interior of $\cU$, the exact projection $\bu^+$ is the same as $\tilde{\bu}^{t+1}$, whereas
the solutions to \eqref{eq:project} are the points $\bu$ on the boundary of $\cU$ with $\bu \leq \bu^+$
(see Figure \ref{fig:project}).
As $\psi$ is monotonic,
picking any of these solutions for the next iterate 
doesn't hurt the  convergence of the algorithm.
%
%
%
%
%
%

An outline of the projected gradient descent with this inexact projection is presented in Algorithm \ref{algo:pgd}. 
One can interpret the algorithm as \textit{adaptively} combining the $K$ surrogates $\ell_k$'s to optimize the  metric $M$ (see Appendix \ref{app:prox} for the details).
\vspace{-2pt}


\subsection{Convergence Guarantee}
We show convergence of Algorithm \ref{algo:pgd} to a stationary point of $\psi(\bell(\cdot))$. 
 Since we probe $M$ to estimate gradients for $\psi$, the errors in the estimate would depend on how closely $\psi(\bell(\cdot))$ approximates $M$, and in turn on the magnitude of the slack term $\epsilon$. We assume here that the gradient estimation error $\E\left[\|\hbw^t \,-\, \nabla\psi(\bell(\theta^t))\|^2\right]$ at each step $t$ is bounded by a $\kappa_\epsilon \in \R_+$ that depends on the slack $\epsilon$. 
 In Section \ref{sec:grad-estimation}, we present gradient estimates that satisfy this condition. 
\begin{theorem}[Convergence of Algorithm \ref{algo:pgd}] 
\label{thm:meta-result}
Let $M(\theta) = \psi(\bell(\theta)) + \epsilon(\theta)$, for a $\psi$ that is monotonic, $\beta$-smooth and $L$-Lipschitz, and the worst-case slack $\max_{\theta \in \R^d}|\epsilon(\theta)|$ is the minimum among all such decompositions of $M$.

Suppose each $\ell_k$ is $\gamma$-smooth and $\Phi$-Lipschitz in $\theta$ with $\|\bell(\theta)\|\leq G, \, \forall \theta$. Suppose the gradient estimates $\hbw^t$
  satisfy $\E\left[\|\hbw^t \,-\, \nabla\psi(\bell(\theta^t))\|^2\right] \leq \kappa_{\epsilon}, ~\forall t \in [T]$ and the projection step satisfies
$\|(\bell(\theta^{t+1}) - \tilde{\bu}^t)_+\|^2 \leq
\min_{\theta \in \thetaspace}\|(\bell(\theta)- \tilde{\bu}^t)_+\|^2 \,+\, \mathcal{O}(\frac{1}{\beta^2 T}), ~\forall t \in [T]$. Set stepsize $\eta = \frac{1}{\beta^2}$.

Then  Algorithm \ref{algo:pgd} converges to an approximate stationary point of $\psi(\bell(\cdot))$:
 \vspace{-3pt}
 \begin{align*}
 \min_{1\leq t\leq T}&\E\left[\|\nabla \psi(\bell(\theta^t))\|^2\right] \leq
C\bigg(
 \frac{\beta}{\sqrt{T}} + \sqrt{\kappa_\epsilon} + \sqrt{L}\kappa_\epsilon^{1/4} 
 \bigg),
  \vspace{-3pt}
\end{align*}
where the expectation is over the randomness in the gradient estimates, and 
$C = \cO\big(KL\big(\gamma\big(G+\frac{L}{\beta^2}\big)+\Phi^2\big)\big)$.


\end{theorem}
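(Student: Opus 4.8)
The plan is to analyze this as inexact projected gradient descent on the composite function $F(\theta) := \psi(\bell(\theta))$ over the parameter space, treating the iterates $\bu^t = \bell(\theta^t)$ as living in $\cU$. The core is a standard descent-lemma argument, but with two sources of error bookkeeping: (i) the gradient we apply is $\hbw^t$, not $\nabla\psi(\bu^t)$, and its expected squared error is bounded by $\kappa_\epsilon$; and (ii) the projection is inexact, solved only to accuracy $\cO(1/(\beta^2 T))$. Because $\psi$ itself is non-convex, we cannot hope for convergence to the global optimum — only to a stationary point — which is exactly what the statement claims via the $\min_t \E\|\nabla\psi(\bell(\theta^t))\|^2$ quantity.

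First I would set up the smoothness of the composite map. Since each $\ell_k$ is $\gamma$-smooth and $\Phi$-Lipschitz and $\psi$ is $\beta$-smooth and $L$-Lipschitz, a chain-rule computation shows $F = \psi \circ \bell$ is $\tilde\beta$-smooth with $\tilde\beta = \cO(K(\beta\Phi^2 + L\gamma))$ (the two terms coming, respectively, from the Hessian of $\psi$ composed with the Jacobian of $\bell$, and from $\nabla\psi$ paired with the Hessians of the $\ell_k$). Next I would write the one-step progress bound. Using $\tilde\beta$-smoothness of $F$ along the segment from $\bu^t$ to $\bu^{t+1}$, and the fact — from Lemma~\ref{lem:u-project} — that the over-constrained projection produces a point $\bu^{t+1} \in \cU$ that is coordinatewise $\le$ the exact projection $\bu^+$ of $\tilde\bu^{t+1}$, hence (by monotonicity of $\psi$) satisfies $\psi(\bu^{t+1}) \le \psi(\bu^+)$, I can replace $\bu^{t+1}$ by the exact projection $\bu^+$ in the function-value decrease and only help myself. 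Then the classical projected-gradient inequality (firm nonexpansiveness of the Euclidean projection onto the convex set $\cU$, guaranteed convex by Observation~\ref{obs:u-convex}) gives, after expanding $\|\bu^+ - \bu^t\|^2$ and using $\tilde\bu^{t+1} = \bu^t - \eta\hbw^t$,
\begin{align*}
F(\theta^{t+1}) \;\le\; F(\theta^t) \;-\; c\,\eta\,\|\nabla\psi(\bu^t)\|^2 \;+\; (\text{terms in }\eta\|\hbw^t - \nabla\psi(\bu^t)\| ) \;+\; (\text{projection-error terms}),
\end{align*}
with $\eta = 1/\beta^2$ chosen so that $\eta\tilde\beta \le \cO(1)$ (which is where the constant $C$ absorbs $KL(\gamma(G + L/\beta^2) + \Phi^2)$ — note $G + L/\beta^2$ appears because the iterates stay bounded, $\|\bu^t\| \le G$, and the gradient step has size $\le \eta L$).

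Then I would telescope from $t=1$ to $T$. The left side becomes $\sum_t \eta\,\|\nabla\psi(\bu^t)\|^2$, the function-value differences collapse to $F(\theta^1) - F(\theta^{T+1}) \le 1$ since $\psi$ maps into $[0,1]$, the projection errors sum to $T \cdot \cO(1/(\beta^2 T)) = \cO(1/\beta^2)$, and — crucially — the cross terms involving the gradient error need to be handled in expectation. Here I would take expectations, apply Jensen/Cauchy–Schwarz so that $\E\|\hbw^t - \nabla\psi\| \le \sqrt{\E\|\hbw^t - \nabla\psi\|^2} \le \sqrt{\kappa_\epsilon}$, and for the term where the gradient error is multiplied by $\|\nabla\psi(\bu^t)\|$ itself (coming from $L$-Lipschitzness entering a cross term), use Young's inequality $ab \le \frac12(a^2 + b^2)$ to split it — absorbing half of $\|\nabla\psi\|^2$ back into the left side and leaving a residual of order $L\sqrt{\kappa_\epsilon}$, whose square root contributes the $\sqrt{L}\kappa_\epsilon^{1/4}$ term. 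Dividing through by $\eta T = T/\beta^2$ and lower-bounding $\sum_t \|\nabla\psi(\bu^t)\|^2 \ge T \min_t \|\nabla\psi(\bu^t)\|^2$ yields the stated bound $\cO(\beta/\sqrt{T} + \sqrt{\kappa_\epsilon} + \sqrt{L}\kappa_\epsilon^{1/4})$.

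**The main obstacle** will be carefully tracking the gradient-error cross terms through the projected-gradient inequality — in ordinary (error-free, exact-projection) analysis one gets a clean $-\eta\|\nabla F\|^2$, but here $\nabla F(\theta^t) = J_{\bell}(\theta^t)^\top \nabla\psi(\bu^t)$ while the descent direction is $\hbw^t \approx \nabla\psi(\bu^t)$ applied in $\bu$-space, and the projection is onto $\cU$, not onto $\cL$ directly; reconciling "stationarity in $\bu$-space over $\cU$" with "the iterate is realized by some $\theta^t$ with $\bell(\theta^t) = \bu^t$" requires the over-constrained projection lemma to do real work, and one must be careful that the inexactness of the projection does not break the feasibility $\bu^{t+1} \in \cU$ needed at the start of the next iteration. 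A secondary subtlety is justifying that $\|\nabla\psi(\bell(\theta^t))\|^2$ — rather than some projected/reduced gradient — is the right stationarity measure; this works because $\cU$ is a full-dimensional epigraph set, so an interior stationary point of the constrained problem is an unconstrained stationary point of $\psi$ on $\cU$, and the boundary behaviour is controlled by monotonicity (the negative gradient of a monotone $\psi$ always points into $\cU$). I would isolate these structural facts as the preliminary observations and then let the telescoping computation run mechanically.
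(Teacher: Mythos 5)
There is a genuine gap in your plan, and it sits exactly at the point you dismiss as a ``secondary subtlety.'' Your telescoped descent inequality in $\bu$-space does not produce $\sum_t \|\nabla\psi(\bu^t)\|^2$ on the left-hand side; it produces the squared norm of the \emph{gradient mapping} $P(\bu^t, \nabla\psi(\bu^t)) = \frac{1}{\eta}\big(\bu^t - \Pi_{\cU}(\bu^t - \eta\nabla\psi(\bu^t))\big)$, and these two quantities do not coincide here. Your justification --- that the negative gradient of a monotone $\psi$ points into $\cU$, so the boundary is harmless --- is backwards: monotonicity gives $\nabla\psi \geq 0$ coordinatewise, so $-\nabla\psi$ points \emph{downward}, i.e.\ out of the upward-closed epigraph $\cU$, while every iterate $\bu^t = \bell(\theta^t)$ sits on the lower boundary $\cL$ of $\cU$; the projection is therefore active at every single step. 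At the constrained optimum of $\min_{\bu\in\cU}\psi(\bu)$ the $K$-dimensional gradient $\nabla\psi$ need not vanish at all (only the projected gradient does), so a bound of the claimed form on $\min_t\|\nabla\psi(\bu^t)\|^2$ is not generally true. Note also that the theorem's stationarity measure is the $d$-dimensional composite gradient $\sum_k \nabla_k\psi(\bu^t)\,\nabla_\theta\ell_k(\theta^t)$, which can be small through cancellation among the $\nabla_\theta\ell_k$ even when $\nabla\psi(\bu^t)$ is not --- this is precisely the KKT condition of the constrained problem --- so it cannot be reached by bounding $\|\nabla\psi(\bu^t)\|$ and multiplying by a Jacobian norm.

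The missing idea is the paper's second stage. After establishing (via a careful inexact-projection lemma) that $\min_t\E\|P(\bu^t,\nabla\psi(\bu^t))\|^2 = \cO(\beta^2/T + \kappa_\epsilon + L\sqrt{\kappa_\epsilon})$, they observe that a small gradient mapping means $\theta^t$ approximately minimizes $Q(\theta) := \|(\bell(\theta) - \tilde{\bu}^{t+1})_+\|^2$ with $\tilde{\bu}^{t+1} = \bu^t - \eta\nabla\psi(\bu^t)$. Since $Q$ is $\omega$-smooth with $\omega = 2K[(G + L/\beta^2)\gamma + \Phi^2]$, standard smoothness gives $\|\nabla Q(\theta^t)\|^2 \leq 2\omega\,(Q(\theta^t) - \min_\theta Q)$; and because $(\ell_k(\theta^t) - \tilde{u}_k^{t+1})_+ = \eta(\nabla_k\psi(\bu^t))_+ = \eta\nabla_k\psi(\bu^t)$ by monotonicity, one has $\nabla Q(\theta^t) = 2\eta\,\nabla_\theta\psi(\bell(\theta^t))$ exactly. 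That identity is how the $\theta$-space gradient enters, and it is where the constant $C$ comes from --- note the $\gamma G$ term in $C$, which your chain-rule smoothness $\cO(K(\beta\Phi^2 + L\gamma))$ for $\psi\circ\bell$ cannot produce, a sign that your route does not yield the stated bound. As written, your outline would at best recover the paper's intermediate $\cU$-space lemma, but cannot close the loop to the theorem's conclusion without this second mechanism or an equivalent.
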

\begin{rem}[\textbf{Stationary point of $M$}]\,
\emph{
When the gradient estimation error  $\kappa_\epsilon$ 
is small and the number of steps $T \> \infty$, the algorithm reaches a model $\theta$ with a small gradient norm $\|\nabla\psi(\bell(\cdot))\|$. If additionally the slack term $\epsilon$ is Lipschitz in $\theta$, 
then this implies that the algorithm also converges to an approximate stationary point of  the metric $M$.
}
\vspace{-5pt}
\end{rem} 

The proof of Theorem \ref{thm:meta-result} proceeds in two parts. We first show that the algorithm converges to an approximate stationary point of $\psi$ over $\cU$. For this, we extend recent results \cite{Ghadimi2016} on convergence of projected gradient descent for smooth non-convex objectives. 
\if 0
\begin{lemma}[Convergence in $\cU$-space]
\label{lem:converge_in_u}
 Define the gradient mapping at $\bu  \in \cU$ for a vector $g \in \R^K$ as $P(\bu,\,g) := \frac{1}{\eta}(\bu-\Pi_{\cU}(\bu \,-\, \eta \cdot g))$, where $\Pi_{\cU}(z)$ denotes the projection of $z$ onto $\cU$. Then under the assumptions of Theorem \ref{thm:meta-result}, 
\begin{eqnarray*}
\min_{1\leq t\leq T}\E\left[\|{P}(\bu^t, \nabla \psi(\bu^t))\|^2\right]~\leq~
    \cO\left(\frac{\beta^2}{T} \,+\, \kappa  \,+\, L\sqrt{\kappa} 
    \right).
\end{eqnarray*}
\end{lemma}
\fi
We then exploit the smoothness of the surrogates $\bell$ to show that this result translates to the algorithm converging to an approximate stationary point of $\psi(\bell(\cdot))$ w.r.t.\ $\theta$. 

\begin{rem}[\textbf{Prior convergence results}]
\emph{
A key difference between our analysis and prior works on zeroth-order gradient methods \citep{Duchi, Ghadimi2016, Nesterov+17} is that we do not directly optimize the given objective over the space of parameters $\theta$, and instead perform an optimization over a relaxed surrogate space $\cU$ that is not directly specified, and do so 
using inexact projections and approximate gradient estimates.
}
\end{rem} 

\section{Gradient Estimation Techniques}
\label{sec:grad-estimation}
We now address the issue of estimating the gradient $\hbw^t$ of $\psi$ at a given $\bell(\theta^t)$ without explicit access to $\psi$. We provide an algorithm based on finite-differences, and another  based on local linear interpolations. We also show error bounds for these algorithms, i.e.\ bound the errors $\kappa_\epsilon$ in Theorem \ref{thm:meta-result}. 

\subsection{Finite Differences}
\label{sec:fd}
We first consider the case where both the surrogates $\bell$ and metric $M$ are evaluated on the same  sample.  Let  $\boldf_\theta := [f_\theta(\bx_1), \ldots, f_\theta(\bx_n)]^\top \in \R^n $ denote the scores of the model $\theta$ computed on the $n$ training examples.
We overload notation and use $M(\boldf_\theta, \by)$ 
to denote the value of the evaluation metric $M$ on the model scores $\boldf_\theta \in \R^n$ and labels $\by \in \cY^n$. 
 Similarly, we use   $\ell_k(\boldf_\theta, \by)$ to denote the value of surrogate loss $\ell_k$ on $\boldf_\theta$ and $\by$.

We present  our method in Algorithm \ref{algo:finite-diff}. We adopt a standard finite-difference gradient estimate \citep{Nesterov+17}, which requires us to perturb the surrogates $\bell$ with random Gaussian vectors $Z^1, \ldots, Z^m \sim  \mathcal{N}(\0, \mathbf{I}_K)$, evaluate $\psi$ at the perturbed surrogate profiles, and calculate 
\[\frac{1}{m}\sum_{j=1}^m\frac{\psi(\bell(\boldf_\theta, \by) \,+\, \sigma Z^j) \,-\, \psi(\bell(\boldf_\theta, \by))}{\sigma},\] 
for $\sigma > 0$. In our case, we cannot directly perturb the surrogates $\bell$ and evaluate changes in $\psi$. 
Instead, we perturb the scores $\boldf_\theta$ so that the corresponding changes in $\bell$ follows a Gaussian distribution, and evaluate the difference between the metric $M$  at the original and perturbed scores. This is possible, for example, when each $\ell_k(\boldf_\theta, \by)$ is an average of point-wise losses $\phi_k(y_if_\theta(x_i))$ on different subsets of the data, for some invertible function $\phi_k: \R \> \R$, in which case, it is easy to compute the right  amount of perturbation to the scores $\boldf_\theta$ to produce the desired perturbation in $\ell_k$. 


\begin{lemma}[Finite difference estimate]
\label{lem:fd}
Let $M$ be as defined in Theorem \ref{thm:meta-result} and $|\epsilon(\theta)| \leq \bar{\epsilon}, \forall \theta$.
 Let $\hbw$ be returned by Algorithm \ref{algo:finite-diff} for a given $\theta'$, $m$ perturbations and $\sigma = \frac{\sqrt{\bar{\epsilon}}}{\sqrt{K}\beta^2}$. 
\begin{align*}
\E\left[\|\hbw \,-\, \nabla\psi(\bell(\theta'))\|^2\right] 
\,\leq\,
\cO\left(\frac{L^2K}{m} + \bar{\epsilon}K^2\beta^2\right),
\end{align*}
where the expectation is over the random perturbations.
\vspace{-5pt}
\end{lemma}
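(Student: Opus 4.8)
The plan is to decompose the squared error into a ``finite-difference bias'' term and a ``perturbation-noise'' term, handle each using standard zeroth-order smoothing analysis, and then track how the slack $\epsilon$ enters when we replace $\psi$-evaluations by $M$-evaluations. Write $\hat\psi_\sigma(\bu) := \E_{Z\sim\mathcal{N}(\0,\mathbf{I}_K)}[\psi(\bu+\sigma Z)]$ for the Gaussian smoothing of $\psi$. The classical fact (Nesterov--Spokoiny) is that for a $\beta$-smooth $\psi$, $\nabla\hat\psi_\sigma(\bu)$ is close to $\nabla\psi(\bu)$ — the bias is $\mathcal{O}(\sigma\beta\sqrt{K})$ in norm, hence $\mathcal{O}(\sigma^2\beta^2 K)$ after squaring — and the single-sample estimator $\frac{\psi(\bu+\sigma Z)-\psi(\bu)}{\sigma}Z$ is an unbiased estimate of $\nabla\hat\psi_\sigma(\bu)$ with second moment $\mathcal{O}(L^2 K)$ (using $L$-Lipschitzness of $\psi$, which bounds the relevant difference quotients). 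Averaging $m$ i.i.d.\ copies cuts the variance part to $\mathcal{O}(L^2K/m)$. So if we could evaluate $\psi$ exactly we would already get the claimed $\mathcal{O}(L^2K/m + \sigma^2\beta^2 K)$ bound.

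The wrinkle is that Algorithm~\ref{algo:finite-diff} does not evaluate $\psi$ directly: it perturbs the scores $\boldf_\theta$ so that $\bell$ moves by (approximately) $\sigma Z^j$, and it reads off $M$ at the perturbed and unperturbed scores. Since $M = \psi\circ\bell + \epsilon$, the difference quotient the algorithm actually forms is
\[
\frac{M(\text{perturbed}) - M(\boldf_\theta,\by)}{\sigma}
= \frac{\psi(\bell(\theta')+\sigma Z^j) - \psi(\bell(\theta'))}{\sigma} + \frac{\epsilon(\text{perturbed}) - \epsilon(\theta')}{\sigma}.
\]
The first piece is exactly the ideal zeroth-order increment; the second is an error term of magnitude at most $2\bar\epsilon/\sigma$ per coordinate of $Z^j$, contributing $\mathcal{O}(\bar\epsilon^2 K/\sigma^2)$ to the squared error after multiplying by $Z^j$ and taking expectations (the $\|Z^j\|^2$ factor gives the extra $K$). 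So the total bound is of the form $\mathcal{O}(L^2K/m + \sigma^2\beta^2 K + \bar\epsilon^2 K/\sigma^2)$, and the last two terms are balanced by choosing $\sigma^2 \asymp \bar\epsilon/(\beta^2 \sqrt{K})$ — wait, let me check: setting $\sigma^2\beta^2 K \asymp \bar\epsilon^2 K/\sigma^2$ gives $\sigma^4 \asymp \bar\epsilon^2/\beta^2$, i.e.\ $\sigma \asymp \sqrt{\bar\epsilon}/\sqrt{\beta}$, and then each of the two terms is $\mathcal{O}(\bar\epsilon \beta K)$. That does not match the stated $\bar\epsilon K^2\beta^2$, so the intended accounting must be slightly different — presumably the per-step slack error is treated as $\mathcal{O}(\bar\epsilon K/\sigma^2)$ (not squared, because $\epsilon$ is bounded in $[-1,1]$ so $|\epsilon(\cdot)-\epsilon(\cdot)|\le 2\bar\epsilon$ is already the relevant scale and it appears linearly in an $\ell_2$-to-$\ell_1$ style bound), giving total $\mathcal{O}(L^2K/m + \sigma^2\beta^2 K + \bar\epsilon K/\sigma^2)$; balancing $\sigma^2\beta^2 K$ against $\bar\epsilon K/\sigma^2$ yields $\sigma^2 \asymp \sqrt{\bar\epsilon}/\beta$, hmm, still not matching. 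I would reverse-engineer from the prescribed $\sigma = \sqrt{\bar\epsilon}/(\sqrt{K}\beta^2)$: then $\sigma^2\beta^2 K = \bar\epsilon/\beta^2 \le \bar\epsilon$ (absorbed), while the slack term $\bar\epsilon^2\cdot(\text{something})/\sigma^2 = \bar\epsilon^2 \cdot (\text{something}) \cdot K\beta^4/\bar\epsilon = \bar\epsilon K\beta^4\cdot(\text{something})$; to land on $\bar\epsilon K^2\beta^2$ the ``something'' should be $K/\beta^2$, which is plausible if the slack contributes $\bar\epsilon\cdot\E\|Z\|^2/\sigma^2 \cdot (1/\beta^2)$-type factors with $\E\|Z\|^2 = K$. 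I would nail down these constants carefully in the writeup; the scaling of $\sigma$ given in the statement is the anchor.

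The main obstacle is precisely this bookkeeping: correctly propagating the slack $\bar\epsilon$ through the difference quotient (including the $\|Z^j\|$ amplification and the $1/\sigma$ blowup), verifying that the score-to-surrogate perturbation is carried out exactly (or controlling any additional error if $\phi_k$ is only approximately invertible / the per-coordinate perturbations are not independent), and choosing $\sigma$ so that the bias-from-smoothing and bias-from-slack terms combine into the stated $\mathcal{O}(L^2K/m + \bar\epsilon K^2\beta^2)$. The zeroth-order part (smoothing bias $\mathcal{O}(\sigma^2\beta^2K)$, variance $\mathcal{O}(L^2K/m)$) is standard and I would cite Nesterov--Spokoiny / \citet{Nesterov+17} rather than rederive it; the genuinely new content is the slack term, so that is where I would concentrate the argument.
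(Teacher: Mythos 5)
Your approach is the same as the paper's: write $\hbw = \hbw_1 + \hbw_2$ with $\hbw_1$ the ideal finite-difference estimator of $\nabla\psi_\sigma(\bell(\theta'))$ and $\hbw_2$ the slack contribution, bound $\hbw_1$ via the Nesterov--Spokoiny smoothing lemmas (unbiasedness for $\nabla\psi_\sigma$, variance $\cO(L^2K/m)$, smoothing bias), and bound the slack term separately. Your \emph{first} accounting of the slack is exactly what the paper does: $\E\|\hbw_2\|^2 \le \frac{4\bar{\epsilon}^2}{\sigma^2 m}\sum_j \E\|Z^j\|^2 = \cO(\bar{\epsilon}^2 K/\sigma^2)$, i.e.\ quadratic in $\bar{\epsilon}$ with a $\|Z^j\|^2$ amplification. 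You should not have talked yourself out of it --- the alternative ``linear in $\bar{\epsilon}$'' accounting you float is not what happens and has no justification. The reason your balancing did not land on the stated constants is a convention issue, not a structural one: in this paper ``$\beta$-smooth'' is used throughout to mean the gradient is $\beta^2$-Lipschitz (see the $\frac{\beta^2}{2}\eta^2$ terms in the convergence proof), so the smoothing bias is $\|\nabla\psi_\sigma - \nabla\psi\| \le \frac{\sigma\beta^2}{2}(K+3)^{3/2}$, i.e.\ $\cO(\sigma^2\beta^4K^3)$ after squaring, not your $\cO(\sigma^2\beta^2K)$. Plugging $\sigma^2 = \bar{\epsilon}/(K\beta^4)$ then makes the smoothing-bias term $\cO(\bar{\epsilon}K^2)$ and the slack term $\cO(\bar{\epsilon}K^2\beta^4)$, which is how the paper's own computation concludes (its final displayed bound absorbs these into $\cO(L^2K/m + \bar{\epsilon}K^2\beta^2)$, with a $\beta^2$-vs-$\beta^4$ discrepancy that is the paper's, not yours). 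With your original quadratic slack term retained and the $\beta^2$-gradient-Lipschitz convention adopted, your writeup would reproduce the paper's proof.
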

This gives a bound on $\kappa_\epsilon$ in Theorem \ref{thm:meta-result} when Algorithm \ref{algo:finite-diff} is used for gradient estimates. Note the error depends on the slack magnitude $\bar{\epsilon}$, and decreases with more perturbations. 

\begin{figure}
\vspace{-10pt}
 \begin{algorithm}[H]
\caption{Finite-difference Gradient Estimate}
\label{algo:finite-diff}
    \begin{algorithmic}[1]
   \STATE \textbf{Input:} $\theta' \in \mathbb{R}^d$, $M$, $\ell_1, \ldots, \ell_K$ 
   \STATE \textbf{Hyper-parameters}:
  Num\ of perturbations
  $m$, $\sigma$
   \STATE Draw $Z^1, \ldots, Z^m \sim \mathcal{N}(\0, \mathbf{I}_K)$
  \STATE Find $\Delta^j \in \R^n$ s.t.\ $\bell(\boldf_{\theta'} \,+\, \Delta^j, \by) \,=\, \bell(\boldf_{\theta'}, \by) \,+\, \sigma Z^j$, for $j=1, \ldots, m$
   \STATE $\displaystyle \hbw = \frac{1}{m}\sum_{j=1}^m\frac{M(\boldf_{\theta'} \,+\, \Delta^j, \by) \,-\, M(\boldf_{\theta'}, \by) }{\sigma}Z^j$
   \STATE \textbf{Output}:  $\hbw$ 
 \end{algorithmic}
\end{algorithm} 
\vspace{-15pt}
\begin{algorithm}[H]
\caption{Linear Interpolation Gradient Estimate}
\label{algo:ls}
    \begin{algorithmic}[1]
   \STATE \textbf{Input:} $\theta' \in \R^d$, $M$, $\ell_1, \ldots, \ell_K$ 
   \STATE \textbf{Hyper-parameters}:
  Num\ of perturbations
  $m$, $\sigma$
  \STATE Draw $Z_1^1, \ldots, Z_1^m, Z_2^1, \ldots, Z_2^m \sim \mathcal{N}(\0, \mathbf{I}_d)$
  \STATE $\bH_{j,:} = \bell(\theta' + \sigma Z_1^j) \,-\, \bell(\theta' + \sigma Z_2^j),~j = 1, \ldots, m
  $
  \STATE $\bM_{j,:} = M(\theta' + \sigma Z_1^j) \,-\, M(\theta' + \sigma Z_2^j),~j = 1, \ldots, m$
   \STATE $\displaystyle \hbw \,\in\,\argmin{\bw \in \R^K}\,
   \|\bH\bw - \bM\|^2
   $ 
   \STATE \textbf{Output}:  $\hbw$ 
 \end{algorithmic}
\end{algorithm}
\vspace{-20pt}
\end{figure}

\subsection{Local Linear Interpolations}
The finite-difference approach is not applicable to settings where the  metric is evaluated on a validation sample but the surrogates are evaluated on training examples (as in Example 2), or where finding the right amount of perturbation on the scores is difficult. For such cases we present a
local linear interpolation based approach in Algorithm \ref{algo:ls}, where we perturb the model parameters $\theta$ instead of the scores.

We use the fact that a smooth function $\psi$ can be locally approximated by a linear function, and estimate the gradient of $\psi$ of at $\bell(\theta)$  by perturbing $\theta$, measuring the corresponding differences in the surrogates $\bell$ and the metric $M$, and fitting a linear function from the surrogate differences to the metric differences. Specifically, for $d$ model parameters, 
we draw two independent sets of $d$-dimensional Gaussian perturbations $Z_1^1, \ldots, Z_1^m, Z_2^1, \ldots, Z_2^m \sim \mathcal{N}(\0, \mathbf{I}_d)$, and return a linear fit from $\bH = [\bell(\theta + \sigma Z_1^j) \,-\, \bell(\theta + \sigma Z_2^j)]_{j=1}^m$ to  $\bM = [M(\theta + \sigma Z_1^j) \,-\, M(\theta + \sigma Z_2^j)]_{j=1}^m$. 


\begin{lemma}[Linear interpolation estimate]
\label{thm:ls}
Let $M$ be defined as in Theorem \ref{thm:meta-result} and
$|\epsilon(\theta)| \leq \bar{\epsilon}, \forall \theta$. Assume each $\ell_k$ is 
$\Phi$-Lipschitz in $\theta$ w.r.t.\ the $L_\infty$-norm,
and $\|\bell(\theta)\|\leq G\,\, \forall \theta$.
Suppose for a given $\theta'$, $\sigma$ and perturbation count $m$, the expected covariance matrix for the left-hand-side of the linear system $\bH$ is well-conditioned, and has the smallest singlular value $\lambda_{\min}(\sum_{i=1}^m \E[\mathbf{H}_i\mathbf{H}_i^{\top}]) = 
\mathcal{O}(m\sigma^2\Phi^2)$.
Then  
setting $\sigma = \tilde{\cO}\left(\frac{G^{1/3}\bar{\epsilon}^{1/3}}{\Phi K^{3/2}\beta^{1/3}}\right)$ and $m = \tilde{\cO}\left(\frac{G^4K^9\beta^2}{\bar{\epsilon}^2}\right)$, 
Algorithm \ref{algo:ls} returns  w.h.p.\ (over draws of random perturbations)  a gradient estimate $\hbw$ that satisfies:
$$\|\hbw \,-\, \nabla\psi(\bell(\theta'))\|^2
\,\leq\,
\tilde{\cO}\left(G^{1/3}\bar{\epsilon}^{1/3}K^3\beta^{2/3}\right)\, .
\vspace{-5pt}
$$
\end{lemma}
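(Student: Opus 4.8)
The plan is to analyze Algorithm \ref{algo:ls} as an ordinary least-squares regression of metric differences onto surrogate differences, and control the estimation error by a bias-variance decomposition. First I would write $M(\theta'+\sigma Z_1^j) - M(\theta'+\sigma Z_2^j) = \psi(\bell(\theta'+\sigma Z_1^j)) - \psi(\bell(\theta'+\sigma Z_2^j)) + \epsilon(\theta'+\sigma Z_1^j) - \epsilon(\theta'+\sigma Z_2^j)$, so that $\bM_{j,:} = \bH_{j,:}\,\nabla\psi(\bell(\theta')) + r_j + e_j$, where $r_j$ is a Taylor-remainder term and $e_j$ is the slack difference term with $|e_j| \le 2\bar\epsilon$. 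Since $\psi$ is $\beta$-smooth, the remainder satisfies $|r_j| \le \frac{\beta}{2}\|\bell(\theta'+\sigma Z_1^j) - \bell(\theta'+\sigma Z_2^j)\|^2 = \frac{\beta}{2}\|\bH_{j,:}\|^2$; and since each $\ell_k$ is $\Phi$-Lipschitz w.r.t.\ $L_\infty$ on the perturbations of size $\sigma$ in each coordinate, $\|\bH_{j,:}\|^2 = \tilde{\cO}(\sigma^2\Phi^2 K)$ w.h.p.\ (the $\log$ factor comes from controlling $\|Z_1^j\|_\infty,\|Z_2^j\|_\infty$ over $m$ draws). So the "noise" attached to each equation is of size $\tilde{\cO}(\sigma^2\Phi^2 K\beta + \bar\epsilon)$.

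Next I would invoke the closed-form least-squares solution $\hbw = (\bH^\top\bH)^{-1}\bH^\top\bM$, giving $\hbw - \nabla\psi(\bell(\theta')) = (\bH^\top\bH)^{-1}\bH^\top(\r + \mathbf{e})$. I would bound the operator norm of $(\bH^\top\bH)^{-1}\bH^\top$ using the well-conditioning assumption: with $\lambda_{\min}(\sum_i \E[\bH_i\bH_i^\top]) = \Theta(m\sigma^2\Phi^2)$, a matrix-concentration argument (matrix Bernstein or a standard covariance-estimation bound) shows that w.h.p.\ $\lambda_{\min}(\bH^\top\bH) = \Omega(m\sigma^2\Phi^2)$ once $m$ is polynomially large in $K$ and the relevant ratios — this is where the stated $m = \tilde{\cO}(G^4 K^9\beta^2/\bar\epsilon^2)$ comes from. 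Consequently $\|(\bH^\top\bH)^{-1}\bH^\top\|_{\mathrm{op}} = \tilde{\cO}\big(1/\sqrt{m\sigma^2\Phi^2}\big)$, and $\|\r+\mathbf{e}\|^2 \le m\big(\tilde{\cO}(\sigma^2\Phi^2 K\beta) + 2\bar\epsilon\big)^2$, so
\[
\|\hbw - \nabla\psi(\bell(\theta'))\|^2 \;=\; \tilde{\cO}\!\left(\frac{\sigma^4\Phi^4 K^2\beta^2 + \bar\epsilon^2}{\sigma^2\Phi^2}\right) \;=\; \tilde{\cO}\!\left(\sigma^2\Phi^2 K^2\beta^2 + \frac{\bar\epsilon^2}{\sigma^2\Phi^2}\right).
\]
Balancing the two terms by choosing $\sigma^2\Phi^2 \asymp \bar\epsilon/(K\beta)$, i.e.\ $\sigma = \tilde{\cO}\big(\sqrt{\bar\epsilon}/(\Phi\sqrt{K\beta})\big)$, yields an error of order $\tilde{\cO}(\bar\epsilon K\beta)$. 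To land on exactly the claimed $\sigma = \tilde{\cO}(G^{1/3}\bar\epsilon^{1/3}/(\Phi K^{3/2}\beta^{1/3}))$ and bound $\tilde{\cO}(G^{1/3}\bar\epsilon^{1/3}K^3\beta^{2/3})$, I would need to be more careful about where the radius $\sigma$ interacts with the domain (the norm bound $\|\bell(\theta)\|\le G$ presumably forces $\sigma$ small enough that perturbed points stay in a region where $G$ controls $\|\bH_{j,:}\|$, introducing the $G$-dependence and changing the balancing exponents), and re-optimize $\sigma$ accordingly.

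The main obstacle is the spectral lower bound on $\bH^\top\bH$: establishing that the empirical Gram matrix inherits the conditioning of its expectation with only polynomially many perturbations, despite the rows $\bH_{j,:}$ being nonlinear (difference-of-$\bell$) functions of Gaussians rather than Gaussians themselves. This requires a sub-exponential tail / truncation argument on $\|\bH_{j,:}\|$ (using Lipschitzness and Gaussian concentration) combined with a matrix concentration inequality, and it is what drives the large sample complexity $m = \tilde{\cO}(G^4K^9\beta^2/\bar\epsilon^2)$. A secondary subtlety is that the remainder $r_j$ is correlated with $\bH_{j,:}$ (both depend on the same perturbation), so the bias term $(\bH^\top\bH)^{-1}\bH^\top\r$ must be bounded deterministically via $\|\r\|$ rather than via a mean-zero-noise argument — fortunately the smoothness bound $|r_j| \le \frac\beta2\|\bH_{j,:}\|^2$ makes this direct.
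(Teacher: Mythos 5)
Your approach is the same as the paper's: write the least-squares estimate in closed form, decompose $\hbw-\nabla\psi(\bell(\theta'))$ into a Taylor-remainder part and a slack part, lower-bound the smallest eigenvalue of the empirical Gram matrix via matrix concentration (the paper uses matrix Chernoff, with $\lambda_{\max}$ of each summand controlled by $\|\mathbf{h}'_i\|,\|\mathbf{h}''_i\|\leq 2G$), control the remainder via sub-Gaussianity of the surrogate perturbations (which is where the $L_\infty$-Lipschitz assumption and the $\log d$ factors enter), and balance $\sigma$. The one place where your bookkeeping genuinely diverges from the paper's, and which explains why your balancing lands on $\tilde{\cO}(\bar\epsilon K\beta)$ rather than the stated exponents, is the treatment of $(\bH^\top\bH)^{-1}\bH^\top$: you bound it as a single operator by $1/\sigma_{\min}(\bH)=\tilde{\cO}(1/(\sqrt{m}\sigma\Phi))$, whereas the paper splits it as $\|(\bH^\top\bH)^{-1}\|_{\mathrm{op}}\cdot\|\bH^\top(\text{residual})\|\leq \lambda_{\min}^{-1}(\bH^\top\bH)\cdot\big(\sum_j\|\bH_{j,:}\|\,|r_j+e_j|\big)$ and then bounds the slack contribution crudely by Cauchy--Schwarz using the deterministic bound $\|\bH_{j,:}\|\leq 2G$ --- this is forced because the slack differences $e_j$ are neither mean-zero nor independent of $\bH$, so no cancellation is available; this is where the $G$ actually enters (not, as you guessed, through $\sigma$ being constrained by the domain), yielding the competing terms $\sigma\Phi K^{9/2}\log(d)^2\beta$ and $G\bar\epsilon/(\sigma^2\Phi^2)$ whose balance produces the stated $\sigma$ and the $G^{1/3}\bar\epsilon^{1/3}K^3\beta^{2/3}$ rate. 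Your route is equally legitimate (arguably tighter on the slack term, since $\|\mathbf{e}\|/\sigma_{\min}(\bH)\leq 2\bar\epsilon/(\sigma\Phi)$ beats $G\bar\epsilon/(\sigma^2\Phi^2)$ for small $\sigma$), so to match the lemma as stated you would either adopt the paper's per-row bound or accept a differently-shaped final bound. One small correction: since the gradient in your linear term is evaluated at $\bell(\theta')$, the $\beta$-smoothness remainder is $\frac{\beta}{2}(\|\mathbf{h}'_j\|^2+\|\mathbf{h}''_j\|^2)$ with $\mathbf{h}'_j=\bell(\theta'+\sigma Z_1^j)-\bell(\theta')$, not $\frac{\beta}{2}\|\bH_{j,:}\|^2$; both are $\tilde{\cO}(\sigma^2\Phi^2 K)$ w.h.p., so this does not affect the rate.
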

We show in Appendix \ref{app:ls-proof} how this high probability statement can then be used to derive a bound on the expected errors $\kappa_\epsilon$ in Theorem \ref{thm:meta-result}. 
Prior works provide error bounds on a similar gradient estimate under an assumption that the perturbation matrix $\bH$ can be chosen to be invertible \citep{Conn08, Conn09, Berahas19}. 
In our case, however, $\bH$ is not chosen explicitly, but instead contains measurements of changes in surrogates for random perturbations on $\theta$. Hence to show an error bound, we need a slightly subtle condition on the correlation structure of the surrogates (that essentially says the variance of the perturbed surrogates are large enough and the rates are not strongly correlated with each other), which we express as a condition on the smallest singular value of the covariance of $\bH$. 

\subsection{Handling Non-smooth Metrics}
For $\psi$ that is non-smooth and Lipschitz, we extend the finite difference gradient estimate in Section \ref{sec:fd} with a two-step perturbation. We draw two sets of Gaussian vectors $Z_1^1, \ldots, Z_1^m, Z_2^1, \ldots, Z_2^m \sim \mathcal{N}(\0, \mathbf{I}_K)$ and approximately calculate
\[\frac{1}{m}\sum_{j=1}^m\frac{\psi(\bell(\boldf_\theta, \by) + \sigma_1 Z_1^j + \sigma_2 Z_2^j) - \psi(\bell(\boldf_\theta, \by) + \sigma_1Z_1^j)}{\sigma_2}\] 
for $\sigma_1, \sigma_2 > 0$, by perturbing $\bell$ through the scores $\mathbf{f}_\theta$ and measuring changes in  $M$ instead of $\psi$. 
This approach computes a finite-difference gradient estimate for a smooth approximation  to the original $\psi$, given by $\psi_{\sigma_1}(\bu) := \E\left[\psi(\bu \,+\, \sigma_1 Z_1)\right]$, where $Z_1 \sim \mathcal{N}(\0, \mathbf{I}_K)$. 
We provide error bounds  in Appendix \ref{app:non-smooth} by building on recent work by  \citet{Duchi}, and discuss asymptotic convergence of Algorithm \ref{algo:pgd} as $\sigma_1 \> 0$. 

\section{Experiments}
\label{sec:expts}
We present experiments to show the proposed approach, Algorithm \ref{algo:pgd}, 
is able to perform as well as  methods that take advantage of a metric's form where available, and is also able to provide gains for metrics that are truly a black-box. We consider a simulated classification task, fair classification with noisy features, a ranking task and classification with proxy labels. The datasets we use are listed in Table \ref{tab:datasets}. 

We use the linear interpolation approach in Algorithm \ref{algo:ls} for estimating gradients, as this is the most practical among the proposed estimation methods, and applicable when the surrogates and metrics are evaluated on different samples. We use linear models, and tune hyper-parameters such as step sizes and the perturbation parameter $\sigma$ for gradient estimation using a held-out validation set. We run the projected gradient descent with 250 outer iterations and 1000 perturbations. For the projection step, we run 100 iterations of Adagrad.
See Appendix \ref{app:expts} for more details and a discussion on perturbations.
The code has been made available.

\begin{table}
    \centering
        \vspace{-5pt}
    \caption{Datasets used in our experiments.}
    \vspace{3pt}
    \label{tab:datasets}
    \begin{tabular}{lrrr}
        \hline
        Dataset & \#instances & \#features & Groups \\
        \hline
        Simulated & 5000 & 2 & -\\
        COMPAS & 4073 & 31 & M/F
        \\
        Adult & 32561 & 122 & M/F
        \\
        Credit  & 30000 & 89 & M/F
        \\
        Business  & 11560 & 36 & C/NC
        \\
        KDD Cup 08 & 102294 &117  &-
        \\
        \hline
    \end{tabular}
    \vspace{-5pt}
\end{table}

\begin{table}[t]
\centering
\vspace{-5pt}
\caption{Test G-mean on sim.\ data. \emph{Lower} is better.}
\vspace{3pt}
\label{tab:gmean}
\begin{tabular}{lccc}
	\hline
& LogReg & PostShift & Proposed \\\hline
Simulated	 &1.000	 &0.848 &\textbf{0.803} \\
	\hline
\end{tabular}
\vspace{-10pt}
\end{table}

\begin{figure}[t]
\centering
\includegraphics[scale=0.4]{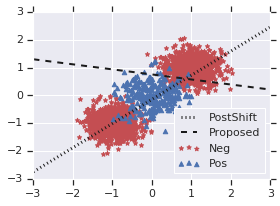}
\caption{Hyperplanes learned by proposed method and PostShift on simulated data. }
\vspace{-10pt}
\label{fig:hyperplanes}
\end{figure}

\begin{table*}[t]
\centering
\vspace{-5pt}
\caption{Average test macro F-measure across groups with clean features. \textit{Higher} is better. Despite having only black-box access to the metric, our approach performs comparable to methods that take advantage of the form of the metric.}
\vspace{3pt}
\label{tab:fm-clean}
\begin{tabular}{lccccc}
	\hline
& LogReg & PostShift & RelaxedFM & GenRates & Proposed \\\hline
Business	 &0.793 	 &0.789 	 &0.794 	 &0.793 	 &\textbf{0.796} \\
COMPAS	 &0.560 	 &\textbf{0.631} 	 &0.614 	 &0.620 	 &0.629 \\
Adult	 &\textbf{0.668} 	 &0.664 	 &{0.665} 	 &0.654 	 &{0.665} \\
Default	 &0.467 	 &\textbf{0.536} 	 &0.525 	 &0.532 	 &0.533 \\
	\hline
\end{tabular}
\vspace{-10pt}
\end{table*}

\begin{figure*}[t]
\centering
    \begin{subfigure}[b]{0.245\textwidth}
    \centering
        \includegraphics[scale=0.42]{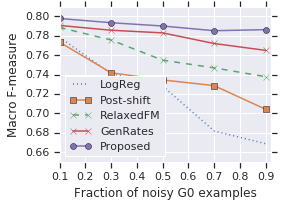}
        \caption{Business}
    \end{subfigure}\hspace{-2pt}
    \begin{subfigure}[b]{0.245\textwidth}
        \includegraphics[scale=0.43]{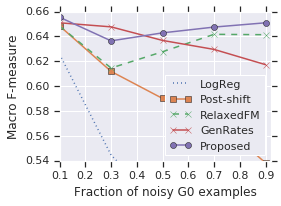}
        \caption{Adult}
    \end{subfigure}
    \begin{subfigure}[b]{0.245\textwidth}
        \includegraphics[scale=0.43]{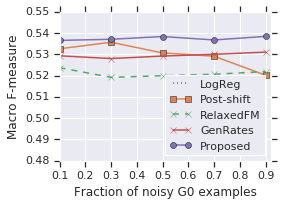}
        \caption{Default}
    \end{subfigure}
    \begin{subfigure}[b]{0.245\textwidth}
        \includegraphics[scale=0.42]{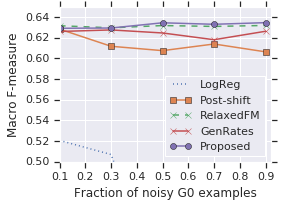}
        \caption{COMPAS}
    \end{subfigure}
    \vspace{-6pt}
\caption{Test macro F-measure across groups for varying noise levels, averaged over 5  trials. \textit{Higher} is better.}
\label{fig:fairness-noise}
\end{figure*}

\subsection{Optimizing G-mean on Simulated Data}
We first  apply our approach to maximize a non-black box evaluation metric:
$
\text{G-mean} = 1 - \sqrt{\TPR \times \TNR},
$
described in Example \ref{exmp:gmean}. 
We consider a simulated binary classification task in two dimensions, containing 10\% positives and 90\% negatives. The positive examples are drawn from a Gaussian with mean $[0,0]$ and covariance matrix $0.2 \times \boldI_2$. The negative examples are drawn from a mixture of two Gaussians centered at $[-1,-1]$ and $[1,1]$, with equal priors, and with a covariance matrix of $0.1 \times \boldI_2$.

We apply our method with two surrogate functions: the average hinge losses on the positive and negative examples.
The results are shown  in Table \ref{tab:gmean}. 
We compare against two baselines: logistic regression that optimizes a standard cross-entropy loss,  and a plug-in or post-shift approach that shifts the a threshold on the logistic regression model to optimize G-mean \citep{Narasimhan+14}. Because of the class imbalance,  logistic regression  learns to always predict the majority negative class and yields zero true positive rate and as a result a poor G-mean. Post-shift produces a better G-mean, but the proposed method performs the best. It is clear from the resulting decision boundaries  shown in Figure \ref{fig:hyperplanes} that the proposed method learns the better linear separator.


\subsection{Macro F-measure with Noisy Features}
\label{sec:expt-fm}
For this experiment we consider training a classifier with fairness goals defined on binary protected attributes. We seek to maximize the average F-measure across the groups:
\[
\text{Macro $F_1$} ~=~ \frac{1}{2}\sum_{G \in \{0,1\}} \frac{2 \times \text{Precision}_G \times \text{Recall}_G}{\text{Precision}_G + \text{Recall}_G},
\]
where $\text{Precision}_G$ and $\text{Recall}_G$ are the precision and recall on protected group $G$.  Optimizing a sum of F-measures is harder than optimizing the binary F-measure because the summation destroys its pseudo-convexity property  \citep{Narasimhan+19}. 


We use four fairness datasets: (1) \textit{COMPAS}, where the goal is to predict recidivism with \textit{gender} as the protected attribute \citep{Angwin+16};  (2) \textit{Adult}, where the goal is to predict if a person's income is more than 50K/year, and we take \textit{gender} as the protected group \citep{uci}; (3) \textit{Credit Default}, where the task is to predict whether a customer would default on his/her credit card payment, and we take \textit{gender} as the protected group \citep{uci}; 
(4) \textit{Business Entity Resolution}, a proprietary dataset from a large internet services company, where the goal is to predict whether a pair of business descriptions refer to identical businesses, and we consider \textit{non-chain} businesses  as protected. In each case, we split the data into train-validation-test sets in the ratio $4/9:2/9:1/3$.

\textbf{Training with no noise.}\ The first set of experiments tests if the proposed approach is able to match the  performance of existing methods that are customized to optimize the macro F-measure. We compare against (i) plain logistic regression method, (ii) a plug-in or post-shift method that tunes a threshold  on the logistic regression model to maximize the F-measure \citep{Koyejo+14, Narasimhan+14}, (iii) an approach that optimizes a continuous relaxation to the F-measure that replaces the indicators with the hinge loss, and (iv) the recent ``generalized rates'' approach of \citet{Narasimhan+19} for optimizing metrics that are a sum of ratios.  
We apply our approach using four surrogate losses, each one is the hinge loss averaged over either the positive or negative examples, calculated separately for each of the two groups. As seen in Table \ref{tab:fm-clean}, despite having only black-box access to the metric, the proposed approach performs comparable to 
the other methods that are directly tailored to optimize the macro F-measure.

\textbf{Training with noisy features.}
 The second set of experiments evaluates the performance of these methods when the training set has noisy features for just one of the groups, while the smaller validation set contains clean features. We use our approach to adaptively combine the same four surrogate losses computed on the noisy training set to best optimize the macro F-measure on the clean validation set. 

We chose a certain fraction of the examples at random from one of the groups, which we refer to as group 0, and for these examples, we add Gaussian noise to the real features (with mean 0 and the same standard deviation as the feature), and flip the binary features with probability 0.9. Figure \ref{fig:fairness-noise} shows the test F-measure for the different methods with varying fraction of noisy examples in group 0. Except for logistic regression, all other methods have access to the validation set: post-shift uses the validation set to tune a threshold on the logistic regression model; the RelaxedFM and GenRates method optimize their loss on the training set, but pick the best model iterate using the validation set. The proposed approach is able to make the best use of the validation set, and consistently performs the best across most noise levels.


\subsection{Ranking to Optimize PRBEP}
\label{sec:expt-ranking}
We next consider a ranking task, where the goal is to learn a scoring function $f$ that maximizes
the precision-recall break-even point (PRBEP), i.e. yields maximum precision at the threshold where precision and recall are equal. PRBEP is a special case of Precision@$K$ when $K$ is set to the number of positive examples in the dataset.
For this task, we experiment with the KDD Cup 2008 breast cancer detection data set \citep{Rao+08} popularly used in this literature \citep{Kar+15, Mackey+18}.  We randomly split this dataset 60/20/20 for training, validation, and test.

Since the break-even point for a dataset is not known before-hand, we use surrogates that approximate precision at different recall thresholds $\tau$.
We use the quantile-based surrogate losses of \citet{Mackey+18} 
with $\tau = 0.25, 0.5, 0.75$. 
As a comparison, we optimize the avg-precision@$K$ surrogate provided by \citet{Kar+15}. As seen in Table \ref{tab:prbep},  the proposed approach is able to learn a better training loss by combining the three quantile surrogates, and yields the best PRBEP on the both the training and test sets. 

\begin{table}[t]
\centering
\caption{Train and test PRBEP on KDD Cup 2008 data. \textit{Higher} is better. 
}
\vspace{3pt}
\label{tab:prbep}
\begin{tabular}{lccc}
	\hline
& LogReg &  \citet{Kar+15}  & Proposed \\\hline
Train & 0.480 & 0.473 &\textbf{0.546} \\
Test & 0.472 & 0.441 &\textbf{0.480} \\
	\hline
\end{tabular}
\end{table}

\begin{table}[t]
\centering
\caption{Test classification error where the training labels are only proxy labels with unknown relationship to the true labels. The proposed method was run with both hinge and sigmoid surrogates. \textit{Lower} is better. 
} 
\vspace{3pt}
\label{tab:proxy}
\begin{tabular}{lccccc}
	\hline
 & LogReg & PostShift & Hinge & Sigmoid
 \\\hline
 Adult
& 0.333
& 0.322
& \textbf{0.314}
& \textbf{0.314}
\\
Business 
& 0.340
& {0.251}
& 0.256
& \textbf{0.236}
\\
	\hline
\end{tabular}
\vspace{-10pt}
\end{table}

\subsection{Classification with Proxy Labels}
\label{sec:expt-proxy}
Next, we consider classification tasks where the training labels are proxies for the true labels, but the validation data has the true labels. We seek to minimize the classification error on the validation set by combining hinge loss surrogates evaluated separately on the positive and negative training examples. While the theory requires convex losses, we experiment with also running the algorithm with non-convex sigmoid losses as surrogates. 


For the Adult data, we predict whether a candidate's gender is female, and take the marital-status-wife feature as the proxy label. For the Business Entity Resolution data, we predict whether a pair of business descriptions refer to the same business, and use the has-same-phone-number feature as a proxy label. 

We compare with a logistic regression model trained with the proxy labels and a post-shift method that corrects the logistic regression threshold to minimize classification error on the validation data. As expected logistic regression  yields the highest test error. On Adult, both variants of the proposed method are better than PostShift. On Business, 
the proposed method performs slightly worse than PostShift when run with hinge surrogates, but yields notable improvements when run with sigmoid surrogates, which are tighter relaxations to the true errors.


\section{Discussion}
There is currently a lot of interest 
in training 
models with better alignment with evaluation metrics. Here, we have investigated a simple method that directly estimates only the needed gradients for gradient descent training, and does not require assuming a parametric form. This simplicity enabled us to provide rigorous theoretical guarantees. 

Experimentally, 
our approach was as good as strategies that take advantage of a metric's form (where available), 
gave notable gains over baselines for black-box ranking, and was  significantly better than post-shifting for experiments with group-dependent noise.
%
%
For the proxy label experiments, however, the results 
were mixed, with the proposed method requiring a tighter surrogate relaxation to perform better than post-shift. 
Post-shift is a strong baseline --  in theory, for many metrics it is optimal to simply post-shift the Bayes class probability model $\P(y=1|x)$
with a suitable threshold $\beta$
\citep{Koyejo+14, Yan+18}. 
 Post-shift only has one degree of freedom, which limits it, but also 
 enables choosing $\beta$ to directly optimize the \textit{true} metric. In contrast, our method acts through surrogate losses to optimize the target metric.  
We argue that post-shift should be a required baseline for experiments on custom metric optimization.

We look forward to seeing further theoretical analysis for handling black-box metrics, and further experimentation comparing methods with fewer but smarter parameters to those with more flexible modeling. 





\bibliographystyle{apalike}
\bibliography{references1,references2,references3}

\clearpage
\newpage
\appendix

\onecolumn
\allowdisplaybreaks
\begin{center}
    \Large \textbf{Optimizing Black-box Metrics with Adaptive Surrogates}\\[10pt]
    \large \textbf{Appendix}\\[20pt]
\end{center}

\textbf{Notations.} We use $[K]$ to denote $\{1, \ldots, K\}$. We use $\|\cdot\|$ to denote the $L_2$-norm. Unless specified otherwise, all smoothness and Lispchitz definitions are with respect to the $L_2$-norm.

\section{Proofs for Lemmas and Theorems}
\label{app:proofs}
\subsection{Proof of Observation~\ref{obs:u-convex}}
\begin{proof}
To see that the vector $[u_1,\cdots,u_K]$ belongs to a convex set, since by assumption $\{\ell_i\}_{i=1}^K$ are convex functions, therefore the set of constraints $\ell_i(\theta)\leq u_i$ defines a convex set in $[\theta,u_1,\cdots,u_K]$ as intersection of sublevel sets of convex functions are convex. 
\end{proof}


\subsection{Proof of Lemma~\ref{lem:u-project}}
\begin{replemma}{lem:u-project}[Restated]
Let $\bu^+$ be the exact projection of $\tilde{\bu}^{t+1} \in \R^K_+$ onto $\cU$. For any solution $\bu^{t+1}$ to \eqref{eq:project}, we have
$\bu^{t+1} \in \cU$, $\bu^{t+1} \leq \bu^+$, and for a monotonic $\psi$,  $\psi(\bu^{t+1}) \leq \psi(\bu^+)$. 
\end{replemma}
We first show how one can compute an exact projection onto $\cU$, and show that the projection described in Lemma \ref{lem:u-project} implements this approximately.
 \begin{lemma}[Exact projection]
The projection $\bu^+$ of $\tilde{\bu}^{t+1} \in \R^K_+$ onto $\cU$ 
is given by:
 \begin{enumerate}[(i)]
     \item $\theta^{t+1} \in \argmin{\theta \in \thetaspace}\, \frac{1}{2}\|\big(\bell(\theta) \,-\, \tilde{\bu}^{t+1}\big)_+\|^2;~~~\bu^{t+1} = \bell(\theta^{t+1})$
     \item $u_k^+ = \max\{\tilde{u}_k,\, u_k^{t+1}\},~\forall k \in [K]$,
 \end{enumerate}
  where $(z)_+ = \max\{0, z\}$, applied element-wise.
  \label{lem:u-exact-projection}
 \end{lemma}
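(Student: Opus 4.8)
The plan is to reduce the Euclidean projection onto $\cU$ to the unconstrained minimization in step (i) by exploiting the epigraph structure of $\cU$ together with the coordinate-wise separability of the squared $L_2$-distance. Concretely, write the projection as $\bu^+ \in \argmin{\bu \in \cU}\, \tfrac12\|\bu - \tilde{\bu}^{t+1}\|^2$, and use that $\bu \in \cU$ if and only if there is some $\theta \in \thetaspace$ with $\bu \geq \bell(\theta)$ (the requirement $\bu \in \R_+^K$ being automatic, since $\bell(\theta) \geq \0$). This lets me pull the minimization apart as
\[
\min_{\bu \in \cU} \tfrac12\|\bu - \tilde{\bu}^{t+1}\|^2 \;=\; \min_{\theta \in \thetaspace}\ \min_{\bu\, \geq\, \bell(\theta)} \tfrac12\|\bu - \tilde{\bu}^{t+1}\|^2 .
\]

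The next step is to solve the inner problem for fixed $\theta$. Since $\|\bu - \tilde{\bu}^{t+1}\|^2 = \sum_{k=1}^K (u_k - \tilde u_k^{t+1})^2$ separates across coordinates and the constraint is the box $u_k \geq \ell_k(\theta)$, the minimizer is $u_k = \max\{\ell_k(\theta), \tilde u_k^{t+1}\}$ for each $k$. The resulting optimal value is $\tfrac12\sum_k (\ell_k(\theta) - \tilde u_k^{t+1})_+^2 = \tfrac12\|(\bell(\theta) - \tilde{\bu}^{t+1})_+\|^2$: when $\ell_k(\theta) \leq \tilde u_k^{t+1}$ the term contributes $0$, and when $\ell_k(\theta) > \tilde u_k^{t+1}$ it contributes $(\ell_k(\theta) - \tilde u_k^{t+1})^2$, which is exactly what the clipping $(\cdot)_+$ records. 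Substituting back shows $\min_{\bu \in \cU}\tfrac12\|\bu - \tilde{\bu}^{t+1}\|^2 = \min_{\theta \in \thetaspace}\tfrac12\|(\bell(\theta) - \tilde{\bu}^{t+1})_+\|^2$, i.e.\ the objective of step (i).

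To conclude, I take $\theta^{t+1}$ to be any minimizer of the right-hand side (step (i)) and $\bu^{t+1} = \bell(\theta^{t+1})$; then the coordinate-wise optimal choice from the inner problem is precisely $u_k^+ = \max\{\tilde u_k^{t+1}, u_k^{t+1}\}$ (step (ii)), and the pair attains the minimum in the display above, so $\bu^+$ as defined is the projection. Uniqueness of $\bu^+$ is immediate from strong convexity of $\tfrac12\|\cdot - \tilde{\bu}^{t+1}\|^2$ over the convex set $\cU$ (Observation \ref{obs:u-convex}).

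The only real subtlety — the ``hard part'' such as it is — is a technical one about attainment: the inner minimization is always solved (closed box constraint, coercive objective), but the outer minimizer $\theta^{t+1}$ in step (i) need not exist unless one assumes enough regularity/coercivity on the $\ell_k$, and $\cU$ itself need not be closed in full generality even though it is convex. I would dispatch this by noting that under the smoothness/Lipschitz assumptions used elsewhere in the paper the infimum is attained (or, failing that, interpret the statement in the limiting sense and invoke closedness of the closure of $\cU$); the rest is the routine separability argument above.
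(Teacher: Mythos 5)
Your proof is correct, and it takes a genuinely different route from the paper's. The paper works with the joint constrained formulation $\min_{\bu,\theta} \tfrac12\|\bu-\tilde{\bu}^{t+1}\|^2$ subject to $\bu \geq \bell(\theta)$, writes down its KKT conditions, constructs explicit dual variables $\lambda_i = u_i^+ - \tilde{u}_i$ from the first-order optimality condition of step (i), and invokes Slater's condition to certify that $\bu^+$ is primal optimal. You instead perform a partial minimization: since $\cU$ is the union over $\theta$ of the boxes $\{\bu \geq \bell(\theta)\}$, you eliminate $\bu$ first by solving the inner box-constrained problem coordinate-wise in closed form, which yields $u_k = \max\{\ell_k(\theta), \tilde{u}_k\}$ and the value function $\tfrac12\|(\bell(\theta)-\tilde{\bu}^{t+1})_+\|^2$ — exactly the objective of step (i). Your argument is more elementary and arguably more illuminating: it explains \emph{why} the clipped objective arises (it is the value function of the inner minimization), and it needs neither differentiability of the $\ell_k$, nor duality, nor a constraint qualification — convexity of the $\ell_k$ plays no role in the equivalence itself, only in making step (i) a tractable convex program. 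The paper's KKT route, by contrast, verifies a specific candidate against optimality certificates and leans on Slater and on gradients of the $\ell_i$. Your closing remark on attainment of the outer infimum (and on $\cU$ possibly failing to be closed) flags a real technicality that the paper also elides; handling it by the standing regularity assumptions, as you suggest, is the right resolution.
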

\begin{proof}
It is easy to see that step (i) is a convex problem because each $\ell_k$ is convex in $\theta$, and both $(\cdot)_+$ and $\|\cdot\|^2$ are convex and monotonic in their arguments, making the composition $\|\big(\bell(\theta) \,-\, \tilde{\bu}\big)_+\|^2$ also convex in $\theta$. 

To perform the projection, since step (i) above is a convex problem, the optimality condition gives 
\[\sum_{i=1}^K (\ell_i(\theta^+)-\tilde{u}_i)_+\cdot \mathbbm{1}\{\ell_i(\theta^+)-\tilde{u}_i > 0\}\cdot \nabla_\theta \ell_i(\theta^+) = \boldO_d\]
which is the same as
\begin{equation}
    \sum_{i=1}^K (u_i^+-\tilde{u}_i)\cdot \nabla_\theta \ell_i(\theta^+) = \boldO_d
\label{eq:u-opt}
\end{equation}
by the second step of the procedure. We shall use \eqref{eq:u-opt} to show that $u_i^+$ is the projection in the $\mathcal{U}$-space.

The projection in the $\mathcal{U}$-space can equivalently be written as the following convex problem
\begin{equation*}
\begin{aligned}
& \underset{u_1,\cdots,u_K, \theta}{\text{minimize}}
& & \frac{1}{2} \sum_{i=1}^K (u_i-\tilde{u}_i)^2 \\
& \text{subject to}
& & u_i-\ell_i(\theta) \geq 0\quad \forall i\in [K]\, .
\end{aligned}
\end{equation*}
Introducing the dual variable $\lambda \in \mathbb{R}^K$ and the KKT condition of the problem becomes
\[\sum_{i=1}^K (u_i-\tilde{u}_i)-\sum_{i=1}^K \lambda_i = 0\quad\quad \sum_{i=1}^K \lambda_i\cdot\nabla_\theta\ell_i(\theta) = 0\]
\[u_i-\ell_i(\theta) \geq 0\quad\quad \lambda_i \geq 0 \quad \quad \lambda_i(u_i-\ell_i(\theta)) = 0 \quad \forall i\in[K]\]
if $(u_1,\cdots,u_K,\theta)$ and $\lambda$ are optimal. 

Taking $\lambda_i = u_i^+-\tilde{u}_i$ and $\theta = \theta^+$ with $u_i = u_i^+$, one can easily verify using \eqref{eq:u-opt} that all the conditions hold. Since the optimization problem satisfies Slater's constraint qualification and therefore we can conclude that the primal optimal solution is $\mathbf{u}^+$, as defined in the lemma statement.
\end{proof}

We go on to prove Lemma \ref{lem:u-project}.

\begin{proof}[Proof of Lemma \ref{lem:u-project}]
Because $\bu^{t+1}$ is the surrogate loss at $\theta^{t+1}$, it clearly lies in $\cL$ and hence in the superset $\cU \supseteq \cL$. Next, notice that the over-constrained projection $\bu^{t+1}$ in Lemma \ref{lem:u-project} is the same as step (i) in the exact projection in Lemma \ref{lem:u-project}, with step (ii) giving us  that the exact projection $u_k^+ = \max\{\tilde{u}_k,\, u_k^{t+1}\},~\forall k \in [K]$.
It follows that: $u_{k}^{t+1} \leq u^{+}_{k}, \forall k \in [K].$ So for a monotonic $\psi$, we have $\psi(\bu^{t+1}) \leq \psi(\bu^+)$.
\end{proof}

\subsection{Proof of Theorem \ref{thm:meta-result}}
\begin{reptheorem}{thm:meta-result}[Restated]
Let $M(\theta) = \psi(\bell(\theta)) + \epsilon(\theta)$, for a $\psi$ that is monotonic, $\beta$-smooth and $L$-Lipschitz, and the worst-case slack $\max_{\theta \in \R^d}|\epsilon(\theta)|$ is the minimum among all such decompositions of $M$.

Suppose each $\ell_k$ is $\gamma$-smooth and $\Phi$-Lipschitz in $\theta$ with $\|\bell(\theta)\|\leq G, \, \forall \theta$. Suppose the gradient estimates $\hbw^t$
  satisfy $\E\left[\|\hbw^t \,-\, \nabla\psi(\bell(\theta^t))\|^2\right] \leq \kappa_{\epsilon}, ~\forall t \in [T]$ and the projection step satisfies
$\|(\bell(\theta^{t+1}) - \tilde{\bu}^t)_+\|^2 \leq
\min_{\theta \in \thetaspace}\|(\bell(\theta)- \tilde{\bu}^t)_+\|^2 \,+\, \mathcal{O}(\frac{1}{\beta^2 T}), ~\forall t \in [T]$. Set stepsize $\eta = \frac{1}{\beta^2}$.

Then  Algorithm \ref{algo:pgd} converges to an approximate stationary point of $\psi(\bell(\cdot))$:
 \vspace{-3pt}
 \begin{align*}
 \min_{1\leq t\leq T}&\E\left[\|\nabla \psi(\bell(\theta^t))\|^2\right] \leq
C\bigg(
 \frac{\beta}{\sqrt{T}} + \sqrt{\kappa_\epsilon} + \sqrt{L}\kappa_\epsilon^{1/4} 
 \bigg),
  \vspace{-3pt}
\end{align*}
where the expectation is over the randomness in the gradient estimates, and 
$C = \cO\big(KL\big(\gamma\big(G+\frac{L}{\beta^2}\big)+\Phi^2\big)\big)$.
\end{reptheorem}

While the above theorem prescribes a specific  learning rate $\eta$ for the projected gradient descent, in our experiments, we tune $\eta$ using a held-out validation set.

The proof proceeds in two parts. 
In Section \ref{app-sub:converge_in_u}, we first show that the algorithm converges to an approximate stationary point of $\psi$ over $\cU$. In Section \ref{app-sub:u_to_theta}, we then translate this a guarantee in $\theta$, i.e.\ we show that the algorithm converges to an approximate stationary point of $\psi(\bell(\cdot))$ over $\theta$. 

\subsubsection{Convergence in $\cU$-space}
\label{app-sub:converge_in_u}
\begin{lemma}
\label{lem:converge_in_u}
 Define the gradient mapping at $\bu  \in \cU$ for a vector $g \in \R^K$ as $P(\bu,\,g) := \frac{1}{\eta}(\bu-\Pi_{\cU}(\bu \,-\, \eta \cdot g))$, where $\Pi_{\cU}(z)$ denotes the projection of $z$ onto $\cU$. Then under the assumptions of Theorem \ref{thm:meta-result}, 
\begin{eqnarray*}
\min_{1\leq t\leq T}\E\left[\|{P}(\bu^t, \nabla \psi(\bu^t))\|^2\right]~\leq~
    \cO\left(\frac{\beta^2}{T} \,+\, \kappa_\epsilon  \,+\, L\sqrt{\kappa_\epsilon} 
    \right).
\end{eqnarray*}
\end{lemma}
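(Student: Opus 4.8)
The plan is to adapt the analysis of inexact projected gradient descent for $\beta$-smooth nonconvex objectives \citep{Ghadimi2016}, accounting for the two sources of inexactness here: approximate gradients $\hbw^t$ and the over-constrained (rather than exact) projection. Write $\text{err}^t := \hbw^t - \nabla\psi(\bu^t)$, so that the hypothesis gives $\E\|\text{err}^t\|^2\le\kappa_\epsilon$ and, by Jensen, $\E\|\text{err}^t\|\le\sqrt{\kappa_\epsilon}$; set $\tilde\bu^{t+1} := \bu^t-\eta\hbw^t$ with $\eta=1/\beta^2$, and let $\delta := \cO(1/(\beta^2T))$ denote the per-step projection accuracy. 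The key preliminary step is to reinterpret the algorithm's projection as an \emph{approximate Euclidean projection onto $\cU$}. By Lemma~\ref{lem:u-exact-projection} the exact projection $\bu^+ := \Pi_\cU(\tilde\bu^{t+1})$ equals $\max\{\tilde\bu^{t+1},\bell(\theta^+)\}$ where $\theta^+$ exactly minimizes the penalty $\|(\bell(\theta)-\tilde\bu^{t+1})_+\|^2$, and its squared distance to $\tilde\bu^{t+1}$ equals the optimal penalty value. Hence the lifted point $\hat\bu^{t+1} := \max\{\tilde\bu^{t+1},\bell(\theta^{t+1})\}$ lies in $\cU$ and satisfies $\|\hat\bu^{t+1}-\tilde\bu^{t+1}\|^2 \le \min_{\bu\in\cU}\|\bu-\tilde\bu^{t+1}\|^2 + \delta$; moreover $\bu^{t+1}=\bell(\theta^{t+1})\le\hat\bu^{t+1}$ coordinatewise, so by monotonicity of $\psi$ we get $\psi(\bu^{t+1})\le\psi(\hat\bu^{t+1})$, and $2$-strong convexity of $\bu\mapsto\|\bu-\tilde\bu^{t+1}\|^2$ gives $\|\hat\bu^{t+1}-\bu^+\|\le\sqrt\delta$.

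I would next derive a one-step descent inequality for $\psi$ in terms of the realized gradient mapping $\bar P^t := \tfrac1\eta(\bu^t-\hat\bu^{t+1})$. $\beta$-smoothness gives $\psi(\hat\bu^{t+1}) \le \psi(\bu^t) - \eta\langle\nabla\psi(\bu^t),\bar P^t\rangle + \tfrac{\beta\eta^2}{2}\|\bar P^t\|^2$, and expanding the approximate-projection inequality $\|\hat\bu^{t+1}-\tilde\bu^{t+1}\|^2\le\|\bu^t-\tilde\bu^{t+1}\|^2+\delta$ (valid since $\bu^t\in\cU$) yields $\langle\hbw^t,\bar P^t\rangle\ge\tfrac12\|\bar P^t\|^2-\tfrac{\delta}{2\eta^2}$. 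Substituting $\langle\nabla\psi(\bu^t),\bar P^t\rangle=\langle\hbw^t,\bar P^t\rangle-\langle\text{err}^t,\bar P^t\rangle$ and using $\eta=1/\beta^2$ (with $\beta$ bounded below by a constant, w.l.o.g., so the $\tfrac{\beta\eta^2}{2}$ term is absorbed) produces
\[
\psi(\bu^{t+1}) \;\le\; \psi(\bu^t) \;-\; \Omega(\tfrac{1}{\beta^2})\,\|\bar P^t\|^2 \;+\; \tfrac1{\beta^2}\langle\text{err}^t,\bar P^t\rangle \;+\; \cO(\beta^2\delta).
\]
For the cross term I would use the \emph{crude} bound $\|\bar P^t\|\le \|\hat P^t\|+\beta^2\sqrt\delta$ and $\|\hat P^t\|\le\|\hbw^t\|\le L+\|\text{err}^t\|$, where $\hat P^t := \tfrac1\eta(\bu^t-\Pi_\cU(\tilde\bu^{t+1}))$, which follows from non-expansiveness of $\Pi_\cU$ together with $\bu^t\in\cU$ and $\|\nabla\psi\|\le L$; with $\delta=\cO(1/(\beta^2T))$ this reads $\|\bar P^t\|\lesssim L+\|\text{err}^t\|+\beta/\sqrt T$, so $\tfrac1{\beta^2}\langle\text{err}^t,\bar P^t\rangle \lesssim \tfrac1{\beta^2}\big(L\|\text{err}^t\|+\|\text{err}^t\|^2\big)+\tfrac1{\beta\sqrt T}\|\text{err}^t\|$. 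It is precisely using $\|\bar P^t\|\lesssim L$ here, rather than a sharper AM-GM split, that produces the $L\sqrt{\kappa_\epsilon}$ term in the statement.

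Then I would telescope over $t=1,\dots,T$, use $\psi\in[0,1]$ so that $\sum_t(\psi(\bu^t)-\psi(\bu^{t+1}))\le 1$, take expectations over the perturbation randomness, substitute $\E\|\text{err}^t\|^2\le\kappa_\epsilon$, $\E\|\text{err}^t\|\le\sqrt{\kappa_\epsilon}$ and $\delta=\cO(1/(\beta^2T))$, and divide by $T$. The $\cO(\beta^2\delta)$ terms sum to $\cO(1)$, the $\|\text{err}^t\|$ terms contribute $L\sqrt{\kappa_\epsilon}$, $\kappa_\epsilon$, and a $\beta\sqrt{\kappa_\epsilon}/\sqrt T$ term that AM-GM absorbs into $\beta^2/T+\kappa_\epsilon$, leaving $\tfrac1T\sum_t\E\|\bar P^t\|^2 \le \cO(\beta^2/T+\kappa_\epsilon+L\sqrt{\kappa_\epsilon})$, hence the same bound on $\min_{1\le t\le T}\E\|\bar P^t\|^2$. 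Finally, to pass from $\bar P^t$ to the stated $P(\bu^t,\nabla\psi(\bu^t))$, the triangle inequality with non-expansiveness of $\Pi_\cU$ (swapping $\hbw^t$ for $\nabla\psi(\bu^t)$ costs $\|\text{err}^t\|$) and $\|\hat\bu^{t+1}-\bu^+\|\le\sqrt\delta$ gives $\|P(\bu^t,\nabla\psi(\bu^t))\|\le\|\bar P^t\|+\|\text{err}^t\|+\beta^2\sqrt\delta$; squaring, averaging and using $\beta^4\delta=\cO(\beta^2/T)$ only adds lower-order terms.

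The main obstacle is the projection: the algorithm never forms $\Pi_\cU$ — it solves a convex program over $\theta$ to additive accuracy $\cO(1/(\beta^2T))$ — so most of the care goes into (i) certifying, via Lemmas~\ref{lem:u-exact-projection} and \ref{lem:u-project}, that this yields a genuine $\delta$-approximate Euclidean projection onto $\cU$, with the $\theta$-space objective accuracy translating through the Euclidean penalty and $2$-strong convexity into an $\cU$-space distance guarantee; (ii) invoking monotonicity of $\psi$ so the realized iterate $\bu^{t+1}=\bell(\theta^{t+1})$ is no worse than the lifted point $\hat\bu^{t+1}$ used in the descent inequality; and (iii) propagating the $\sqrt\delta$ discrepancy between the approximate and exact projections cleanly through the several gradient-mapping comparisons. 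A lesser subtlety is handling the error term at the right granularity so the $L\sqrt{\kappa_\epsilon}$ rate is matched rather than improved away.
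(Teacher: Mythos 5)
Your proposal is correct and follows essentially the same route as the paper: lift the iterate via $\max\{\bell(\theta^{t+1}),\tilde{\bu}^{t+1}\}$, certify through the exact-projection characterization (Lemma~\ref{lem:u-exact-projection}) that the $\theta$-space accuracy yields a $\delta$-approximate Euclidean projection onto $\cU$, run the smooth-nonconvex descent recursion on the resulting inexact gradient mapping with the crude $\|\bar P^t\|\lesssim L$ bound producing the $L\sqrt{\kappa_\epsilon}$ term, telescope, and convert back to $P(\bu^t,\nabla\psi(\bu^t))$ by non-expansiveness plus the $\sqrt{\delta}$ projection discrepancy. The one point to tighten is your inner-product inequality: the value-gap argument gives only $\langle\hbw^t,\bar P^t\rangle\geq\tfrac12\|\bar P^t\|^2-\tfrac{\delta}{2\eta^2}$, whose coefficient exactly cancels the smoothness term when $\eta$ equals the reciprocal of the smoothness constant, so you should either take $\eta$ a constant factor smaller or, as the paper does (Lemma~\ref{lem:inexact_projection}, part 2), use the exact projection's first-order optimality condition plus the $\sqrt{2\alpha}$ perturbation to recover the sharper constant $\tfrac34$.
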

 
Before we prove this result, we will find it useful to state the following lemma.
\begin{lemma}[Properties of inexact projection]
 \label{lem:inexact_projection}
 Fix $\bu \in \cU$ where $\cU$ is a convex set and
 arbitrary vectors $g_1, g_2 \in\R^K$. Let 
 \[\theta^+_1 \in \argmin{\theta \in \thetaspace}\; \frac{1}{2}\|(\bell(\theta) \,-\, (\bu \,-\, \eta g_1))_+\|^2
 \quad \text{and} \quad  
 \theta^+_2 \in \argmin{\theta \in \thetaspace}\; \frac{1}{2}\|(\bell(\theta) \,-\, (\bu \,-\, \eta g_2))_+\|^2,\]
 and let ${\bu}_1^+ = \max\{\bell(\theta^+_1),\, \bu - \eta g_1\}$ and 
${\bu}_2^+ = \max\{\bell(\theta^+_2),\, \bu - \eta g_2\}$. Define the gradient mapping $P(\bu,\,g_1) := \frac{1}{\eta}(\bu-\bu_1^+)$ and ${P}(\bu,\,g_2) := \frac{1}{\eta}(\bu-{\bu}_2^+)$.
Denote $\tilde{\theta}^+_1, \tilde{\theta}^+_2$ as approximate minimizers such that 
 \begin{equation}
 \frac{1}{2}\|(\bell(\tilde{\theta}^+_1) \,-\, (\bu - \eta g_1))_+\|^2 \,\leq\,
\frac{1}{2}\|(\bell(\theta^+_1) \,-\, (\bu - \eta g_1))_+\|^2\,+\, \alpha
\label{eq:theta-tilde-1}
\end{equation}
 and
 \begin{equation}
\frac{1}{2} \|(\bell(\tilde{\theta}^+_2) \,-\, (\bu - \eta g_2))_+\|^2 \,\leq\,
\frac{1}{2}\|(\bell(\theta^+_2) \,-\, (\bu - \eta g_2))_+\|^2\,+\, \alpha,
\label{eq:theta-tilde-2}
\end{equation}
and let $\widetilde{\bu}_1^+ = \max\{\bell(\tilde{\theta}^+_1),\, \bu - \eta g_1\}$ and 
$\widetilde{\bu}_2^+ = \max\{\bell(\tilde{\theta}^+_2),\, \bu - \eta g_2\}$.
  Define the corresponding gradient mapping 
  $\tilde{P}(\bu,\,g_1) := \frac{1}{\eta}(\bu-\widetilde{\bu}_1^+)$ and $\tilde{P}(\bu,\,g_2) := \frac{1}{\eta}(\bu-\widetilde{\bu}_2^+)$. 
  Then the following holds:
 \begin{enumerate}
     \item  $\|\tilde{P}(\bu,g_1) \,-\, P(\bu,g_1)\| \,\leq\, \frac{\sqrt{2\alpha}}{\eta}$. 
     \item $\langle g_1,\tilde{P}(\bu,g_1)\rangle \geq \frac{3}{4}\|\tilde{P}(\bu,g_1)\|^2-\frac{2\alpha}{\eta^2}$. 
     \item  $\|\tilde{P}(\bu,\,g_1 )\| \,\leq\,\|g_1\| \,+\, \frac{\sqrt{2\alpha}}{\eta} $.
     \item $\|P(\bu,g_1)-P(\bu,g_2)\| \leq \|g_1-g_2\|$. 
     \item $\|\tilde{P}(\bu,g_1)-\tilde{P}(\bu,g_2)\|\leq \|g_1-g_2\|\,+\,2\frac{\sqrt{2\alpha}}{\eta}$.     
 \end{enumerate}
 \end{lemma}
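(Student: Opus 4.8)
\textbf{Proof proposal for Lemma~\ref{lem:inexact_projection}.}

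The plan is to treat each claim as a consequence of two basic facts: (a) by Lemma~\ref{lem:u-exact-projection}, the point $\bu_1^+ = \max\{\bell(\theta_1^+),\, \bu - \eta g_1\}$ is \emph{exactly} the Euclidean projection $\Pi_\cU(\bu - \eta g_1)$, so $P(\bu,g_1)$ is the usual gradient mapping, and (b) the approximate minimizer $\tilde\theta_1^+$ produces an ``approximate projection'' $\widetilde{\bu}_1^+$ that is close to $\bu_1^+$ in a quantitative way controlled by $\alpha$. So first I would establish the quantitative link in (b): since $h(\theta) := \tfrac12\|(\bell(\theta) - (\bu-\eta g_1))_+\|^2$ is convex (as noted in the proof of Lemma~\ref{lem:u-exact-projection}) and, more importantly, the induced objective in the $\cU$-space, $u \mapsto \tfrac12\|u - (\bu-\eta g_1)\|^2$ over the convex set $\cU$, is $1$-strongly convex, an additive suboptimality of $\alpha$ in objective value forces $\|\widetilde{\bu}_1^+ - \bu_1^+\|^2 \le 2\alpha$. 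Dividing by $\eta$ gives claim~1 directly, and claim~3 then follows from claim~1 and the triangle inequality together with the standard nonexpansiveness bound $\|P(\bu,g_1)\| \le \|g_1\|$ (which is itself $\|\bu - \Pi_\cU(\bu-\eta g_1)\|/\eta \le \|g_1\|$ by nonexpansiveness of $\Pi_\cU$ and $\Pi_\cU(\bu)=\bu$).

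Next I would handle claim~4, the $1$-Lipschitzness of the exact gradient mapping in the gradient argument. This is a textbook property: $P(\bu,g_i) = \tfrac1\eta(\bu - \Pi_\cU(\bu-\eta g_i))$, and since $\Pi_\cU$ is $1$-Lipschitz (nonexpansive), $\|P(\bu,g_1)-P(\bu,g_2)\| = \tfrac1\eta\|\Pi_\cU(\bu-\eta g_1) - \Pi_\cU(\bu-\eta g_2)\| \le \tfrac1\eta\cdot\eta\|g_1-g_2\| = \|g_1-g_2\|$. Claim~5 is then just claim~4 plus claim~1 applied to both $g_1$ and $g_2$: $\|\tilde P(\bu,g_1) - \tilde P(\bu,g_2)\| \le \|\tilde P(\bu,g_1) - P(\bu,g_1)\| + \|P(\bu,g_1)-P(\bu,g_2)\| + \|P(\bu,g_2)-\tilde P(\bu,g_2)\| \le \tfrac{\sqrt{2\alpha}}{\eta} + \|g_1-g_2\| + \tfrac{\sqrt{2\alpha}}{\eta}$.

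The one claim requiring genuine care is claim~2, $\langle g_1, \tilde P(\bu,g_1)\rangle \ge \tfrac34\|\tilde P(\bu,g_1)\|^2 - \tfrac{2\alpha}{\eta^2}$, which is the inequality the outer convergence argument actually needs to get a descent bound. For the \emph{exact} gradient mapping the clean fact is $\langle g_1, P(\bu,g_1)\rangle \ge \|P(\bu,g_1)\|^2$ (from the variational characterization of the projection: $\langle \bu - \eta g_1 - \Pi_\cU(\bu-\eta g_1),\, \bu - \Pi_\cU(\bu-\eta g_1)\rangle \le 0$, rearranged). I would start there, then write $\tilde P(\bu,g_1) = P(\bu,g_1) + \delta$ with $\|\delta\| \le \tfrac{\sqrt{2\alpha}}{\eta}$ by claim~1, substitute, and bound the cross terms: $\langle g_1, \tilde P\rangle = \langle g_1, P\rangle + \langle g_1,\delta\rangle \ge \|P\|^2 - \|g_1\|\|\delta\|$, while $\|P\|^2 \ge \|\tilde P\|^2 - 2\langle \tilde P,\delta\rangle - \|\delta\|^2 \ge$ a similar expression; combining and applying Young's inequality to split $\|g_1\|\|\delta\|$ and the other cross terms into a $\tfrac14\|\tilde P\|^2$ slack plus an $O(\alpha/\eta^2)$ remainder should yield the stated constants. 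The delicate bookkeeping is making sure the accumulated error terms collapse to exactly $\tfrac{2\alpha}{\eta^2}$ and the coefficient of $\|\tilde P\|^2$ lands at $\tfrac34$ rather than some other fraction — one may need to use $\|g_1\| \le \|\tilde P\| + \tfrac{\sqrt{2\alpha}}{\eta}$ (claim~3 rearranged) to express everything in terms of $\|\tilde P\|$ and $\alpha$ before applying Young. I expect this constant-chasing in claim~2 to be the only nontrivial obstacle; claims~1,~3,~4,~5 are essentially immediate from strong convexity and nonexpansiveness of the projection.
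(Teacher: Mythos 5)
Your treatment of claims 1, 3, 4, and 5 is correct and matches the paper's in substance: claim 1 is exactly the strong-convexity/suboptimality argument the paper uses (the paper phrases it via $F_{g_1}(\bz)=g_1^{\top}\bz+\frac{1}{2\eta}\|\bz-\bu\|^2$, which is your squared-distance objective up to a constant and a $1/\eta$ scaling), claim 4 is the standard nonexpansiveness of $\Pi_\cU$ (the paper rederives it from the two variational inequalities rather than citing it, but that is the same fact), and claims 3 and 5 are the same triangle-inequality assemblies in both write-ups.

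The genuine gap is in your plan for claim 2. Writing $\tilde{P}=P+\delta$ and bounding $\langle g_1,\delta\rangle\ge-\|g_1\|\,\|\delta\|$ by Cauchy--Schwarz introduces the term $\|g_1\|\,\|\delta\|$, and there is no way to absorb it into $\frac{1}{4}\|\tilde{P}\|^2+O(\alpha/\eta^2)$: the norm $\|g_1\|$ can be arbitrarily large relative to $\|\tilde{P}(\bu,g_1)\|$ (take $\bu$ on the boundary of $\cU$ and $g_1$ a huge outward-pointing vector, so that $\Pi_\cU(\bu-\eta g_1)\approx\bu$ and $P\approx 0$ while $\|g_1\|\to\infty$). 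Your suggested fix, ``$\|g_1\|\le\|\tilde P\|+\sqrt{2\alpha}/\eta$ from claim 3 rearranged,'' is backwards --- claim 3 gives $\|\tilde P\|\le\|g_1\|+\sqrt{2\alpha}/\eta$, i.e.\ a \emph{lower} bound on $\|g_1\|$, not the upper bound you would need. The information you are discarding is that the $\alpha$-suboptimality controls more than $\|\delta\|$: from $F_{g_1}(\widetilde{\bu}_1^+)\le F_{g_1}(\bu_1^+)+\alpha/\eta$ one gets directly
\[
g_1^{\top}(\bu_1^+-\widetilde{\bu}_1^+)\;\ge\;-\frac{\alpha}{\eta}+\frac{1}{2\eta}\|\widetilde{\bu}_1^+-\bu\|^2-\frac{1}{2\eta}\|\bu_1^+-\bu\|^2,
\]
i.e.\ the inner product $\langle g_1,\eta\delta\rangle$ is bounded \emph{jointly} with the difference of the quadratic terms, with no $\|g_1\|$ appearing. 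Adding this to $g_1^{\top}(\bu-\bu_1^+)\ge\frac{1}{\eta}\|\bu-\bu_1^+\|^2$ (the variational inequality at $\bz=\bu$) and using $\frac{1}{2}\|\bu-\widetilde{\bu}_1^+\|^2\le\|\bu-\bu_1^+\|^2+2\alpha$ (claim 1 plus $\|a+b\|^2\le2\|a\|^2+2\|b\|^2$) is what produces the $\frac{3}{4}$ coefficient and the $\frac{2\alpha}{\eta^2}$ remainder; this is the route the paper takes, and I do not see how to recover the stated constants from $\|\delta\|\le\sqrt{2\alpha}/\eta$ and Cauchy--Schwarz alone.
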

 \begin{proof}
We have:
\begin{eqnarray*}
\frac{1}{2}\|\tilde{\bu}^+_1 \,-\,  (\bu - \eta g_1)\|^2
    &=& \frac{1}{2}\|\max\{\bu - \eta g_1, \bell(\tilde{\theta}_{1}^{+})\} \,-\,  (\bu - \eta g_1)\|^2\\
    &=& \frac{1}{2}\|(\bell(\tilde{\theta}_{1}^{+}) \,-\, (\bu - \eta g_1))_+\|^2\\
    &\leq& 
    \frac{1}{2}\|(\bell(\theta^+_1) \,-\, (\bu - \eta g_1))_+\|^2 \,+\, \alpha \quad\quad \text{(Assumption~(\ref{eq:theta-tilde-1}))}\\
    &=& 
    \frac{1}{2}\|\max\{\bu - \eta g_1, \bell(\theta^+_1)\} \,-\,  (\bu - \eta g_1)\|^2 \,+\, \alpha\\
    &=& \frac{1}{2}\|\bu^+_1 \,-\,  (\bu - \eta g_1)\|^2 \,+\, \alpha,
\end{eqnarray*}
which implies that
\begin{equation}
\label{eqn:approx_prox_guarantee}
g_1^{\top}\tilde{\bu}^+_1+\frac{1}{2\eta}\|\tilde{\bu}^+_1-\bu\|^2 -g_1^{\top}\bu_1^+-\frac{1}{2\eta}\|\bu_1^+-\bu\|^2\leq \frac{\alpha}{\eta}\, .
\end{equation}
Part (1) now follows from 
\begin{align*}
\|\tilde{P}(\bu,g_1) \,-\, P(\bu,g_1)\| &=\frac{1}{\eta}\|\bu_1^+-\tilde{\bu}^+_1\|\\
&\leq \frac{\sqrt{2\eta}}{\eta}\sqrt{F_{g_1}(\tilde{\bu}^+_1)-F_{g_1}(\bu_1^+)-\nabla F_{g_1}(\bu_1^+)^{\top}(\tilde{\bu}^+_1-\bu_1^+)}\\
&\leq \frac{\sqrt{2\eta}}{\eta}\sqrt{\frac{\alpha}{\eta}}\leq \frac{\sqrt{2\alpha}}{\eta}\, ,
\end{align*}
where we used $\frac{1}{\eta}$-strong convexity of the objective $F_{g_1}(\bz):=g_1^{\top}\bz+\frac{1}{2\eta}\|\bz-\bu\|^2$ for $\bz,\bu\in\mathcal{U}$ and the fact that $\bu_1^+$ is the exact minimizer over the convex set $\mathcal{U}$, implying $\nabla F_{g_1}(\bu_1^+)^{\top}(\bz-\bu_1^+)\geq 0 \; \forall \bz\in\mathcal{U} $.\\
For part (4), since $\bu_1^+$ and $\bu_2^+$ are optimal points of function $F_{g_1}(\cdot)$ and $F_{g_2}(\cdot)$ over convex set $\mathcal{U}$ respectively, from optimality condition we have
\begin{equation}
\label{eqn:optimality_cond}
\Big(g_1+\frac{1}{\eta}(\bu_1^+-\bu)\Big)^{\top}(\bz-\bu_1^+)\geq 0\quad \text{and}\quad \Big(g_2+\frac{1}{\eta}(\bu_2^+-\bu)\Big)^{\top}(\bz-\bu_2^+)\geq 0 \quad \text{for all}\; \bz\in\mathcal{U}\, .
\end{equation}
Setting $\bz=\bu_2^+$ in the first and $\bz=\bu_1^+$ in the second equation and summing up we have
\[(g_1-g_2)^{\top}(\bu_2^+-\bu_1^+) \geq\frac{1}{\eta}\|\bu_2^+-\bu_1^+\|^2\, .\]
Therefore using Cauchy-Schwarz
\[\|P(\bu,g_1)-P(\bu,g_2)\|=\frac{1}{\eta}\|\bu_2^+-\bu_1^+\|\leq \|g_1-g_2\|\, .\]
Part (5) now follows immediately from part (1) and (4) by
\begin{align*}
\|\tilde{P}(\bu,g_1)-\tilde{P}(\bu,g_2)\| &\leq \|P(\bu,g_1)-P(\bu,g_2)\|+\|\tilde{P}(\bu,g_1)-P(\bu,g_1)+P(\bu,g_2)-\tilde{P}(\bu,g_2)\|\\
&\leq \|g_1-g_2\|+2\|\tilde{P}(\bu,g_1)-P(\bu,g_1)\|\\
&\leq  \|g_1-g_2\|+\frac{2\sqrt{2\alpha}}{\eta}\, .
\end{align*}
To see part (2), we plug in $\bz=\bu$ in the first equation of display \eqref{eqn:optimality_cond}, giving $g_1^{\top}(\bu-\bu_1^+)\geq \frac{1}{\eta}\|\bu-\bu_1^+\|^2$. 
Moreover from equation \eqref{eqn:approx_prox_guarantee} we know
\[g_1^{\top}(\bu^+_1-\tilde{\bu}^+_1) \geq -\frac{\alpha}{\eta}+\frac{1}{2\eta}\|\tilde{\bu}^+_1-\bu\|^2-\frac{1}{2\eta}\|\bu^+_1-\bu\|^2\, .\]
Consequently,
\[g_1^{\top}(\bu-\tilde{\bu}^+_1)=g_1^{\top}(\bu-\bu^+_1)+g_1^{\top}(\bu^+_1-\tilde{\bu}^+_1) \geq \frac{1}{\eta}\|\bu-\bu_1^+\|^2-\frac{\alpha}{\eta}+\frac{1}{2\eta}\|\tilde{\bu}^+_1-\bu\|^2-\frac{1}{2\eta}\|\bu^+_1-\bu\|^2\, .\]
Now to relate $\|\bu-\bu^+_1\|$ to $\|\bu^+_1-\tilde{\bu}^+_1\|$, we have
\begin{align*}
\frac{1}{2\eta}\|\bu-\tilde{\bu}^+_1\|^2 &\leq \frac{1}{\eta}\|\bu-\bu_1^+\|^2+\frac{1}{\eta}\|\bu^+_1-\tilde{\bu}^+_1\|^2\\
&\leq \frac{1}{\eta}\|\bu-\bu_1^+\|^2+2[F_{g_1}(\tilde{\bu}^+_1)-F_{g_1}(\bu_1^+)-\nabla F_{g_1}(\bu_1^+)^{\top}(\tilde{\bu}^+_1-\bu_1^+)]\\
&\leq \frac{1}{\eta}\|\bu-\bu_1^+\|^2+\frac{2\alpha}{\eta}\, .
\end{align*}
Putting things together $g_1^{\top}\tilde{P}(\bu,g_1) = \frac{1}{\eta}g_1^{\top}(\bu-\tilde{\bu}_1^+) \geq \frac{3}{4}\|\tilde{P}(\bu,g_1)\|^2-\frac{2\alpha}{\eta^2}$, as claimed.\\
Finally, for part (3) since $\|g_1\|\cdot \|\bu-\bu_1^+\|\geq g_1^{\top}(\bu-\bu_1^+)\geq \frac{1}{\eta}\|\bu-\bu_1^+\|^2$ and using part (2),
\begin{align*}
\|\tilde{P}(\bu,\,g_1 )\| = \frac{1}{\eta}\|\bu-\tilde{\bu}_1^+\|& \leq \frac{1}{\eta}\|\bu-\bu_1^+\|+\frac{1}{\eta}\|\bu_1^+-\tilde{\bu}_1^+\|\\
&\leq \|g_1\|+\frac{\sqrt{2\alpha}}{\eta}\, ,
\end{align*}
where we used part (1) for the last step. This concludes the proof of the lemma.
 \end{proof}
 
Equipped with the above results, we move on to prove Lemma \ref{lem:converge_in_u}, i.e.\ to show that the algorithm converges to an approximate stationary point of $\psi$ over $\cU$.
\begin{proof}[Proof of Lemma \ref{lem:converge_in_u}]
We will assume that the gradient estimates $\bw^t$ satsify $\E\left[\|\bw^t \,-\, \nabla\psi(\bell(\theta^t))\|^2\right] \leq \kappa_\epsilon, ~\forall t \in [T]$ and the projection step satisfies
$\frac{1}{2}\|(\bell(\theta^{t+1}) \,-\, \tilde{\bu}^t)_+\|^2 \,\leq\,
\min_{\theta \in \thetaspace}\, \frac{1}{2}\|(\bell(\theta) \,-\, \tilde{\bu}^t)_+\|^2 \,+\, \alpha, ~\forall t \in [T]$. 

 Let $\bu^{t+1} = \bell(\theta^{t+1})$ and $\tilde{\bu}^{t+1} = \max\{\bu^{t+1},\,  \bu^t - \eta\bw^t\}$ be the next iterate had we executed step (ii) of the projection given Lemma \ref{lem:u-project}. 

 Define $\delta^t := \bw^t - \nabla \psi(\bu^t)$. For any $g \in \R^K$, let the gradient mapping $P(\bu, g)$ and approximate gradient mapping $\tilde{P}(\bu, g)$ be defined as in Lemma \ref{lem:inexact_projection}. Note that $\tilde{\bu}^{t+1} = \bu^t - \eta\tilde{P}(\bu^t, \bw^t)$. 
\begin{eqnarray*}
\psi(\bu^{t+1}) 
    &\leq& \psi(\tilde{\bu}^{t+1})
    ~~~\text{(from monotonicity of $\psi$)}
    \\
    &\leq& \psi(\bu^t) \,-\, \eta\,\langle \nabla \psi(\bu^t),\, \tilde{P}(\bu^t, \bw^t) \rangle
    \,+\, \frac{\beta^2}{2}\eta^2 \|\tilde{P}(\bu^t, \bw^t)\|^2
    ~~~\text{(using smoothness of $\psi$)}
    \\
    &=& \psi(\bu^t) \,-\, \eta\,\langle \bw^t,\, \tilde{P}(\bu^t, \bw^t) \rangle
    \,+\, \eta\,\langle \bw^t \,-\, \nabla \psi(\bu^t),\, \tilde{P}(\bu^t, \bw^t) \rangle
    \,+\, \frac{\beta^2}{2}\eta^2 \|\tilde{P}(\bu^t, \bw^t)\|^2\\
    &=& \psi(\bu^t) \,-\, \eta\,\langle \bw^t,\, \tilde{P}(\bu^t, \bw^t) \rangle
    \,+\, \eta\,\langle \delta^t,\, \tilde{P}(\bu^t, \bw^t) \rangle
    \,+\, \frac{\beta^2}{2}\eta^2 \|\tilde{P}(\bu^t, \bw^t)\|^2\\
    &\leq& \psi(\bu^t) \,-\, \left(\frac{3}{4}\eta\,-\,
    \frac{\beta^2}{2}\eta^2\right)\|\tilde{P}(\bu^t, \bw^t)\|^2
    \,+\, \eta\,\langle \delta^t,\, \tilde{P}(\bu^t, \bw^t)\rangle+\frac{2\alpha}{\eta}
    ~~~\text{(from Lemma \ref{lem:inexact_projection}, statement 2)}
    \\
    &=& \psi(\bu^t) \,-\, \left(\frac{3}{4}\eta\,-\,
    \frac{\beta^2}{2}\eta^2\right)\|\tilde{P}(\bu^t, \bw^t)\|^2
    \,+\, \eta\,\langle \delta^t,\, \tilde{P}(\bu^t, \nabla \psi(\bu^t)) \rangle \,+\, \eta\,\langle \delta^t,\, \tilde{P}(\bu^t, \bw^t) \,-\, \tilde{P}(\bu^t, \nabla \psi(\bu^t))\rangle+\frac{2\alpha}{\eta}
    \\
    &\leq& \psi(\bu^t) \,-\, \left(\frac{3}{4}\eta\,-\,
    \frac{\beta^2}{2}\eta^2\right)\|\tilde{P}(\bu^t, \bw^t)\|^2
    \,+\, \eta\,\langle \delta^t,\, \tilde{P}(\bu^t, \nabla \psi(\bu^t)) \rangle \,+\,
    \eta\|\delta^t\|^2
    \,+\, 2\sqrt{2\alpha}\,\|\delta^t\|+\frac{2\alpha}{\eta}
    \\
    &\leq& \psi(\bu^t) \,-\, \left(\frac{3}{4}\eta\,-\,
    \frac{\beta^2}{2}\eta^2\right)\|\tilde{P}(\bu^t, \bw^t)\|^2
    \,+\, \eta\|\delta^t\|\Big(\|\nabla\psi(\bu^t)\|+\frac{\sqrt{2\alpha}}{\eta}\Big) \,+\,
    \eta\|\delta^t\|^2
    \,+\, 2\sqrt{2\alpha}\,\|\delta^t\|+\frac{2\alpha}{\eta} \\
    &\leq& \psi(\bu^t) \,-\, \left(\frac{3}{4}\eta\,-\,
    \frac{\beta^2}{2}\eta^2\right)\|\tilde{P}(\bu^t, \bw^t)\|^2
    \,+\, (\eta\,L \,+\,
    \sqrt{2\alpha})\|\delta^t\| \,+\,
    \eta\|\delta^t\|^2
    \,+\, 2\sqrt{2\alpha}\,\|\delta^t\|+\frac{2\alpha}{\eta} ,
\end{eqnarray*}
where the third-last inequality uses Lemma \ref{lem:inexact_projection}, statement 5 together with Cauchy-Schwarz and the second-last inequality uses Lemma \ref{lem:inexact_projection}, statement 3, and the fact that $\psi$ is $L$-Lipschitz. 
Summing up over $t = 1, \ldots, T$,
\begin{eqnarray*}
\left(\frac{3}{4}\eta\,-\,
    \frac{\beta^2}{2}\eta^2\right)\sum_{t=1}^T\|\tilde{P}(\bu^t, \bw^t)\|^2
    &\leq& 
    \psi(\bu^{1}) \,-\, \psi(\bu^{t+1}) \,+\, 
    \sum_{t=1}^T\left( (\eta\,L \,+\, 3\sqrt{2\alpha})\|\delta^t\| \,+\,
    \eta\|\delta^t\|^2+\frac{2\alpha}{\eta}\right).
\end{eqnarray*}
Taking expectations on both sides and using the assumption $0\leq \psi(\bu)\leq 1\; \forall \bu\in\mathcal{U}$,  
\begin{eqnarray*}
\left(\frac{3}{4}\eta\,-\,
    \frac{\beta^2}{2}\eta^2\right)\sum_{t=1}^T\E\left[\|\tilde{P}(\bu^t, \bw^t)\|^2\right]
    &\leq& 
    1\,+\,\sum_{t=1}^T\left((\eta\,L \,+\, 3\sqrt{2\alpha})\E\left[\|\delta^t\|\right] \,+\,
    \eta\E\left[\|\delta^t\|^2\right]+\frac{2\alpha}{\eta}\right) \\
    &\leq& 1  \,+\,  \sum_{t=1}^T\left((\eta\,L \,+\, 3\sqrt{2\alpha})\sqrt{\E\left[\|\delta^t\|^2\right]} \,+\,
    \eta\E\left[\|\delta^t\|^2\right]+\frac{2\alpha}{\eta}\right)\\
    &\leq& 1  \,+\, T\left( (\eta\,L \,+\, 3\sqrt{2\alpha})\sqrt{\kappa_\epsilon} \,+\,
    \eta\kappa_\epsilon+\frac{2\alpha}{\eta}\right),
\end{eqnarray*}
where we used the assumption on the gradient estimate error $\E\left[\|\delta^t\|^2\right]$ in the last step.
Rearranging we have
\begin{eqnarray*}
\frac{1}{T}\sum_{t=1}^T\E\left[\|\tilde{P}(\bu^t, \bw^t)\|^2\right]
    &\leq&  \frac{1/T \,+\,(\eta\,L \,+\, 3\sqrt{2\alpha})\sqrt{\kappa_\epsilon} \,+\, \eta\kappa_\epsilon+\frac{2\alpha}{\eta}}{\frac{3}{4}\eta\,-\,\frac{\beta^2}{2}\eta^2}.
\end{eqnarray*}
Using Lemma \ref{lem:inexact_projection}, statement 1, 
\begin{eqnarray*}
\frac{1}{T}\sum_{t=1}^T\E\left[\|{P}(\bu^t, \bw^t)\|^2\right]
    &\leq&  
    \frac{2}{T}\sum_{t=1}^T\E\left[\|\tilde{P}(\bu^t, \bw^t)\|^2\right]
    \,+\, \frac{2}{T}\sum_{t=1}^T\E\left[\|\tilde{P}(\bu^t, \bw^t) \,-\, {P}(\bu^t, \bw^t)\|^2\right]
    \\
    &\leq&
    \frac{
    2/T \,+\,
    2(\eta\,L \,+\, 3\sqrt{2\alpha})\sqrt{\kappa_\epsilon} \,+\, 2\eta\kappa_\epsilon+\frac{4\alpha}{\eta}}{\frac{3}{4}\eta\,-\,\frac{\beta^2}{2}\eta^2} \,+\, \frac{4\alpha}{\eta^2}.
\end{eqnarray*}
Setting stepsize $\eta =\frac{1}{\beta^2}$: 
\begin{eqnarray*}
\frac{1}{T}\sum_{t=1}^T\E\left[\|{P}(\bu^t, \bw^t)\|^2\right]
    &\leq& \frac{8\beta^2}{T} \,+\, 8L\sqrt{\kappa_\epsilon} \,+\, 8\kappa_\epsilon \,+\,
    24\beta^2\sqrt{2\alpha\kappa_\epsilon} \,+\, 20\alpha\beta^4\, .
\end{eqnarray*}
We can now bound the average gradient map norm across iterations:
\begin{eqnarray*}
\frac{1}{T}\sum_{t=1}^T\E\left[\|{P}(\bu^t, \nabla \psi(\bu^t))\|^2\right]&\leq&
\frac{2}{T}\sum_{t=1}^T\E\left[\|{P}(\bu^t, \bw^t)\|^2\right]
\,+\,
\frac{2}{T}\sum_{t=1}^T\E\left[\|{P}(\bu^t, \nabla \psi(\bu^t)) \,-\, P(\bu^t, \bw^t)\|^2\right]
\\
&\leq&
\frac{2}{T}\sum_{t=1}^T\E\left[\|P(\bu^t, \bw^t)\|^2\right]
\,+\,
\frac{2}{T}\sum_{t=1}^T\E\left[\| \nabla \psi(\bu^t) \,-\, \bw^t\|^2\right]
\\
    &\leq& 
    \frac{16\beta^2}{T} \,+\, 16L\sqrt{\kappa_\epsilon} \,+\, 16\kappa_\epsilon \,+\,
    48\beta^2\sqrt{2\alpha\kappa_\epsilon} \,+\, 40\alpha\beta^4 \,+\, 2\kappa_\epsilon
\end{eqnarray*}
where we used Lemma \ref{lem:inexact_projection}, statement 4 for the second inequality and the assumption on the gradient estimation error for the last inequality. Thus:
\begin{eqnarray*}
\min_{1 \leq t \leq T}\E\left[\|{P}(\bu^t, \nabla \psi(\bu^t))\|^2\right]
&\leq& 
    \frac{16\beta^2}{T} \,+\, 16L\sqrt{\kappa_\epsilon} \,+\, 18\kappa_\epsilon \,+\,
    48\beta^2\sqrt{2\alpha\kappa_\epsilon} \,+\, 40\alpha\beta^4.
\label{eq:min-t}
\end{eqnarray*}
Now picking $\alpha = \frac{1}{\beta^2 T}$ completes the proof.
\end{proof}
 
\subsubsection{Convergence in $\theta$-space}
\label{app-sub:u_to_theta}
We are now ready to prove Theorem \ref{thm:meta-result}. We translate the near-stationarity result in Lemma from $\bu$-space to $\theta$-space.

\begin{proof}[Proof of Theorem \ref{thm:meta-result}]

For a given $T$, let $t^* \in \amin{1 \leq t \leq T}\, \|P(\bu^{t},\nabla\psi(\bu^{t}))\|^2.$ 
Pick iterates $\theta^{t^*}$ and $\theta^{t^*+1}$ of Algorithm \ref{algo:pgd}. The corresponding iterates in the $\cU$-space are $\bu^{t^*} = \bell(\theta^{t^*})$ and $\bu^{t^*+1} = \bell(\theta^{t^*+1})$.

Further, let $\tilde{\bu}^{t^*+1} = \bu^{t^*}-\eta \nabla \psi(\bu^{t^*})$ be the un-projected next iterate, and $\hat{\bu}^{t^*+1} = \bu^{t^*}-\eta \cdot P(\bu^{t^*},\nabla\psi(\bu^{t^*}))$ be the one obtained after an exact projection, both using exact gradient $\nabla \psi(\bu^{t^*})$. 

We start with the assumption that (as promised by Lemma~\ref{lem:converge_in_u}):
\begin{equation*}
\E[\|P(\bu^{t^*},\nabla\psi(\bu^{t^*}))\|^2]\,=\, \frac{1}{\eta^2}\E[\|\bu^{t^*}-\eta \cdot P(\bu^{t^*},\nabla\psi(\bu^{t^*}))-\bu^{t^*}\|^2] \,=\,\frac{1}{\eta^2}\E[\|\hat{\bu}^{t^*+1}-\bu^{t^*}\|^2]
~\leq \epsilon^2
\end{equation*}
or equivalently,
\begin{equation}
\E[\|\hat{\bu}^{t^*+1}-\bu^{t^*}\|^2]
~\leq \eta^2\epsilon^2
\label{eq:assumption}
\end{equation}
and would like to bound the gradient norm of $\psi(\bell(\cdot))$ at $\theta^{t^*}$.

We start by translating \eqref{eq:assumption} to a guarantee in the $\theta$-space. We know that
\begin{equation}
\hat{\bu}^{t^*+1} \in\arg\min_{\bu\in\mathcal{U}}\, \|\bu-\tilde{\bu}^{t^*+1}\|^2.
\label{eq:uhat}
    \end{equation}
Put together \eqref{eq:assumption} and \eqref{eq:uhat}, and take expectation over randomness in $\bu^{t^*}$, 
\begin{eqnarray*}
\E[\|{\bu}^{t^*} \,-\, \tilde{\bu}^{t^*+1}\|^2] &\leq&
\E[\|\hat{\bu}^{t^*+1} \,-\, \tilde{\bu}^{t^*+1}\|^2] \,+\,
\E[\|\hat{\bu}^{t^*+1} \,-\, {\bu}^{t^*}\|^2] \,+\, 2\E[\|\hat{\bu}^{t^*+1} \,-\, \tilde{\bu}^{t^*+1}\| \|\hat{\bu}^{t^*+1} \,-\, {\bu}^{t^*}\|]\\
&\leq&
\E[\|\hat{\bu}^{t^*+1} \,-\, \tilde{\bu}^{t^*+1}\|^2]
\,+\, \eta^2\epsilon^2 
\,+\, 2\eta\epsilon\sqrt{\E[\|\hat{\bu}^{t^*+1} \,-\, \tilde{\bu}^{t^*+1}\|^2]} 
\\
&\leq&
\E[\|\hat{\bu}^{t^*+1} \,-\, \tilde{\bu}^{t^*+1}\|^2]
\,+\, \eta^2\epsilon^2 
\,+\, 2\eta\epsilon \sqrt{\E[\|{\bu^{t^*}} \,-\, \tilde{\bu}^{t^*+1}\|^2]}
\\
&=&
\E[\|\hat{\bu}^{t^*+1} \,-\, \tilde{\bu}^{t^*+1}\|^2]
\,+\, \eta^2\epsilon^2 
\,+\, 2\eta^2\epsilon \sqrt{\E[\|\nabla\psi(\bu^{t^*})\|^2]} ,
\end{eqnarray*}
where we used Cauchy-Schwarz for the second step. 
Using the fact that $\psi$ is $L$-Lipschitz:
\begin{equation}
\E[\|{\bu}^{t^*} \,-\, \tilde{\bu}^{t^*+1}\|^2] ~\leq~
\E[\|\hat{\bu}^{t^*+1} \,-\, \tilde{\bu}^{t^*+1}\|^2] \,+\, \epsilon',
\label{eq:ut-approx}
\end{equation}
where $\epsilon' = \eta^2(\epsilon ^2+ 2L\epsilon)$.

We also know that $\hat{\bu}^{t^*+1}$ can be equivalently obtained by performing an optimization in the $\theta$-space as follows:
\[
\hat{\theta}^{t^*+1} \in\arg\min_{\theta\in \mathbb{R}^d}\, \|\max\{\bell(\theta), \tilde{\bu}^{t^*+1}\}\,-\,\tilde{\bu}^{t^*+1}\|^2
\]
and setting $\hat{\bu}^{t^*+1} \,=\, \max\{\bell(\hat{\theta}^{t^*+1}), \tilde{\bu}^{t^*+1}\}$. So \eqref{eq:ut-approx} translates to the following guarantee in the $\theta$-space:
\begin{equation}
\E[\|\bell(\theta^{t^*})\,-\,\tilde{\bu}^{t^*+1}\|^2] ~\leq~
\E[\min_{\theta\in \mathbb{R}^d}\, \|\max\{\bell(\theta), \tilde{\bu}^{t^*+1}\}\,-\,\tilde{\bu}^{t^*+1}\|^2] \,+\, \epsilon',
\label{eq:theta-guarantee-1}
\end{equation}
where we have used $\bu^{t^*} = \bell(\theta^{t^*})$. Now since
\begin{eqnarray*}
\|\max\{\bell(\theta^{t^*}),\tilde{\bu}^{t^*+1}\} \,-\,\tilde{\bu}^{t^*+1}\|^2
~=~
\|(\bell(\theta^{t^*})\,-\,\tilde{\bu}^{t^*+1})_+\|^2
~\leq~
\|\bell(\theta^{t^*})\,-\,\tilde{\bu}^{t^*+1}\|^2,
\end{eqnarray*}
together with \eqref{eq:theta-guarantee-1} we have
\begin{equation}
\E[\|\max\{\bell(\theta^{t^*}),\tilde{\bu}^{t^*+1}\} \,-\,\tilde{\bu}^{t^*+1}\|^2]
~\leq~
    \E[\min_{\theta\in \mathbb{R}^d}\, \|\max\{\bell(\theta), \tilde{\bu}^{t^*+1}\}\,-\,\tilde{\bu}^{t^*+1}\|^2] \,+\, \epsilon'\, .
\label{eq:theta-guarantee-2}
\end{equation}

Having translated our initial assumption on the gradient mapping to $\theta$-space, we can now provide a guarantee on the gradient of $\psi(\bell(\cdot))$. Let $Q(\theta) \,:=\,\|\max\{\bell(\theta),\tilde{\bu}^{t^*+1}\} \,-\,\tilde{\bu}^{t^*+1}\|^2
\,=\, \|(\bell(\theta) \,-\,\tilde{\bu}^{t^*+1})_+\|^2 $. 

Taking as given that $Q$ is smooth in $\theta$ with smoothness parameter $\omega$ for now, by standard properties of smooth functions, we have for any $\theta'$:
\[
\|\nabla Q(\theta')\|^2 \,\leq\, 2\omega\cdot(Q(\theta') \,-\, \min_{\theta \in \R^d} Q(\theta)).
\]
Using the above property and \eqref{eq:theta-guarantee-2}, taking expectation on both sides, we have:
\[
\E[\|\nabla Q(\theta^{t^*})\|^2] \,\leq\, 2\omega \epsilon',
\]
or equivalently,
\[
\E\bigg[\left\|2\sum_{k=1}^K(\ell_k(\theta^{t^*})-\tilde{u}_k^{t^*+1})_+\nabla_\theta \ell_k(\theta^{t^*})\right\|^2\bigg] \,\leq\, 2\omega \epsilon',
\]
therefore
\[
4\eta^2\E\bigg[\left\|\sum_{k=1}^K(\nabla \psi_k(\bell^{t^*}))_+\nabla_\theta \ell_k(\theta^{t^*})\right\|^2\bigg] \,\leq\, 2\omega \epsilon',
\]
where we use the short-hand $\bell^{t^*} = \bell(\theta^{t^*})$. By monotonicity of $\psi$, the gradient of $\psi$ is always non-negative, and the above becomes:
\[
4\eta^2\E\bigg[\left\|\sum_{k=1}^K\nabla \psi_k(\bell^{t^*})\nabla_\theta \ell_k(\theta^{t^*})\right\|^2\bigg] \,\leq\, 2\omega \epsilon',
\]
and we have:
\[
\E[\|\nabla_\theta \psi(\bell(\theta^{t^*}))\|^2] \,\leq\, \omega \epsilon'/2\eta^2 \,=\, \omega (\epsilon^2  + 2L\epsilon)/2,
\]
as desired.
It remains to justify the smoothness of $Q(\theta)$. For  any $\theta_1,\theta_2 \in \mathbb{R}^d$,
\begin{align*}
    \|&\nabla Q(\theta_1)-\nabla Q(\theta_2)\| \\
    &= \Big\|2\sum_{k=1}^K(\ell_k(\theta_1)-\tilde{u}_k^{t^*+1})_+\cdot \nabla_\theta\ell_k(\theta_1) - 2\sum_{k=1}^K (\ell_k(\theta_2)-\tilde{u}_k^{t^*+1})_+\cdot \nabla_\theta\ell_k(\theta_2)\Big\|\\
    &\leq 2\sum_{k=1}^K \Big\|(\ell_k(\theta_1)-\tilde{u}_k^{t^*+1})_+\cdot (\nabla_\theta \ell_k(\theta_1)-\nabla_\theta\ell_k(\theta_2))\Big\|+\Big\|\big[(\ell_k(\theta_1)-\tilde{u}_k^{t^*+1})_+-(\ell_k(\theta_2)-\tilde{u}_k^{t^*+1})_+\big]\cdot \nabla_\theta\ell_k(\theta_2)\Big\|\\
    &\leq 2\sum_{k=1}^K |\ell_k(\theta_1)-\tilde{u}_k^{t^*+1}|\cdot \gamma \|\theta_1-\theta_2\|+|\ell_k(\theta_1)-\ell_k(\theta_2)|\cdot \|\nabla_\theta \ell_k(\theta_2)\|\\
    &= 2\sum_{k=1}^K |\ell_k(\theta_1)-\ell_k(\theta^{t^*})+\eta\nabla \psi_k(u^{t^*})|\cdot\gamma \|\theta_1-\theta_2\|+\Phi^2 \|\theta_1-\theta_2\|\\
    &\leq 2K\big[(G+\eta L)\cdot \gamma+\Phi^2\big]\cdot  \|\theta_1-\theta_2\| = 2K\Big[(G+\frac{L}{\beta^2})\cdot \gamma+\Phi^2\Big]\cdot \|\theta_1-\theta_2\|
\end{align*}
where we used $\gamma$-smoothness and $\Phi$-lipschitz property of $\ell_k$ and $\|\bell(\theta)\|\leq G$, together with $(a)_+-(b)_+ \leq |a-b|$, therefore $\omega = 2K\big[(G+\frac{L}{\beta^2})\cdot \gamma+\Phi^2\big]$.
%
%
\end{proof}

\subsection{Proof of Lemma \ref{lem:fd}}
Recall from Algorithm \ref{algo:finite-diff} that the finite difference estimate of the gradient of $\psi$ at $\theta'$ is given by:
\[
\hbw = \frac{1}{m}\sum_{j=1}^m\frac{M(\boldf_{\theta'} \,+\, \Delta^j, \by) \,-\, M(\boldf_{\theta'}, \by) }{\sigma}Z^j.
\]
\begin{replemma}{lem:fd}[Restated]
Let $M$ be as defined in Theorem \ref{thm:meta-result} and $|\epsilon(\theta)| \leq \bar{\epsilon}, \forall \theta$.
 Let $\hbw$ be returned by Algorithm \ref{algo:finite-diff} for a given $\theta'$, $m$ perturbations and $\sigma = \frac{\sqrt{\bar{\epsilon}}}{\sqrt{K}\beta^2}$. 
\begin{align*}
\E\left[\|\hbw \,-\, \nabla\psi(\bell(\theta'))\|^2\right] 
\,\leq\,
\cO\left(\frac{L^2K}{m} + \bar{\epsilon}K^2\beta^2\right),
\end{align*}
where the expectation is over the random perturbations.
\end{replemma}
We will find it useful to re-state results from \citet{Nesterov+17}, extended to our setting.
\begin{lemma}
\label{lem:nesterov}
Suppose $\psi$ is $L$-Lipschitz and $\beta$-smooth. Define $\psi_\sigma(\bu) := \E_{Z \sim \mathcal{N}(\0, \mathbf{I}_K)}\left[\psi(\bu \,+\, \sigma Z)\right]$. 
 Let $\hbw_1 = \frac{1}{m}\sum_{j=1}^m\frac{\psi(\bell(\boldf_\theta \,+\, \Delta^j, \by)) \,-\, \psi(\bell(\boldf_\theta, \by))}{\sigma}Z^j$, where $\Delta^j$ is as defined in Algorithm \ref{algo:finite-diff}. Then:
 \begin{enumerate}
     \item $\hbw_1$ is an unbiased estimate of the gradient of $\psi_\sigma$ at $\bell(\theta)$, i.e., 
 $\E[\hbw_1] \,=\, \nabla \psi_\sigma(\bell(\theta))$. 
    \item 
 $\displaystyle\E\left[\|\hbw_1-\E[\hbw_1]\|^2\right] \,\leq\,\frac{\sigma^2\beta^2}{m}(K+6)^3 \,+\, 
    \frac{4L^2}{m}(K+4).$ 
    \item $\displaystyle\|\nabla\psi_\sigma(\bell(\theta)) \,-\, \nabla\psi(\bell(\theta))\| \,\leq\, 
\frac{\sigma\beta^2}{2}(K+3)^{3/2}.$
 \end{enumerate}
\end{lemma}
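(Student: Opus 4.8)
The plan is to observe that, once the structural choice of $\Delta^j$ in Algorithm~\ref{algo:finite-diff} is unpacked, $\hbw_1$ is \emph{exactly} the $m$-sample Gaussian finite-difference estimator of $\nabla\psi$, and then to run the Gaussian-smoothing analysis of \citet{Nesterov+17} with the ambient dimension taken to be $K$ rather than $d$. Recall that $\Delta^j$ is chosen so that $\bell(\boldf_\theta + \Delta^j, \by) = \bell(\boldf_\theta, \by) + \sigma Z^j$; writing $\bu := \bell(\boldf_\theta, \by) \in \R_+^K$, this gives the exact identity $\psi(\bell(\boldf_\theta + \Delta^j, \by)) = \psi(\bu + \sigma Z^j)$, hence $\hbw_1 = \frac{1}{m}\sum_{j=1}^m \frac{\psi(\bu + \sigma Z^j) - \psi(\bu)}{\sigma} Z^j$. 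From this point on the proof never again refers to $\bell$, $\boldf_\theta$, or $d$: everything is a statement about the scalar function $\psi$ probed at the point $\bu$, and the dimension entering all bounds is $K$.

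For part~1, by linearity it suffices to show that the single-sample estimator $g(Z) := \frac{\psi(\bu+\sigma Z) - \psi(\bu)}{\sigma}Z$ has mean $\nabla\psi_\sigma(\bu)$. The contribution $\frac{\psi(\bu)}{\sigma}\E[Z]$ vanishes since $\E[Z] = \0$, and $\E\big[\frac{\psi(\bu+\sigma Z)}{\sigma}Z\big] = \nabla\psi_\sigma(\bu)$ is the Gaussian Stein identity: change variables $w = \bu + \sigma z$ in $\psi_\sigma(\bu) = (2\pi)^{-K/2}\int \psi(\bu + \sigma z)e^{-\|z\|^2/2}\,dz$ and differentiate under the integral sign, which is legitimate because $\psi$ is $L$-Lipschitz. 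For part~3, $\beta$-smoothness makes $\psi$ differentiable, so $\nabla\psi_\sigma(\bu) = \E[\nabla\psi(\bu + \sigma Z)]$ and therefore $\|\nabla\psi_\sigma(\bu) - \nabla\psi(\bu)\| \le \E\|\nabla\psi(\bu + \sigma Z) - \nabla\psi(\bu)\| \le \beta\sigma\,\E\|Z\| \le \beta\sigma\sqrt{K}$; refining the last step with the Gaussian-moment bookkeeping of \citet{Nesterov+17} gives the form stated in part~3.

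Part~2 is the crux. Since the $Z^j$ are i.i.d.\ and $\hbw_1$ is their average, $\E\|\hbw_1 - \E\hbw_1\|^2 = \frac{1}{m}\Var\!\big(g(Z)\big) \le \frac{1}{m}\E\|g(Z)\|^2$, so it remains to bound $\frac{1}{\sigma^2}\E\big[(\psi(\bu + \sigma Z) - \psi(\bu))^2\|Z\|^2\big]$ while keeping the $K$-dependence linear in the $L^2$ term and cubic in the curvature term. The naive Lipschitz estimate $|\psi(\bu + \sigma Z) - \psi(\bu)| \le L\sigma\|Z\|$ only yields $L^2\,\E\|Z\|^4 = O(L^2 K^2)$, which is too weak; following \citet{Nesterov+17}, the fix is the first-order Taylor split $\psi(\bu + \sigma Z) - \psi(\bu) = \sigma\langle\nabla\psi(\bu), Z\rangle + r(Z)$ with $|r(Z)| \le \frac{\beta\sigma^2}{2}\|Z\|^2$ by $\beta$-smoothness. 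Squaring, applying $(a+b)^2 \le 2a^2 + 2b^2$, multiplying by $\|Z\|^2$, and taking expectations reduces the estimate to the Gaussian moments $\E\big[\langle v, Z\rangle^2\|Z\|^2\big] = O\big((K+4)\|v\|^2\big)$ and $\E\big[\|Z\|^6\big] = O\big((K+6)^3\big)$, giving $\E\|g(Z)\|^2 = O\big((K+4)\|\nabla\psi(\bu)\|^2 + \sigma^2\beta^2(K+6)^3\big)$; finally $\|\nabla\psi(\bu)\| \le L$ and division by $m$ produce the claim. I expect this moment computation to be the main obstacle: optimal $K$-dependence rests on the Taylor-remainder split rather than a crude Lipschitz bound, and the precise constants come from the fourth- and sixth-order Gaussian moment identities. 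With all three parts in hand, Lemma~\ref{lem:fd} follows by decomposing $M = \psi\circ\bell + \epsilon$ so that $\hbw = \hbw_1 + \hbw_2$ with $\hbw_2$ a slack term whose squared norm is controlled by $\bar{\epsilon}^2$ and $\sigma^{-2}$, then combining the bias (part~3), variance (part~2), and slack bounds with a bias--variance decomposition and the choice $\sigma = \sqrt{\bar{\epsilon}}/(\sqrt{K}\beta^2)$.
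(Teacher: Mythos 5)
Your proposal is correct and follows essentially the same route as the paper, whose ``proof'' of this lemma is simply a citation to Nesterov \& Spokoiny (2017): you correctly reduce $\hbw_1$ to the standard $K$-dimensional Gaussian finite-difference estimator via the defining property of $\Delta^j$, and then reconstruct their Eq.~(21) (Stein identity) for part~1, Theorem~4 (Taylor split plus the moment identities $\E[\langle v,Z\rangle^2\|Z\|^2]=(K+2)\|v\|^2$ and $\E\|Z\|^6\le(K+6)^3$) for part~2, and Lemma~3 for part~3 --- with your part~3 argument via $\nabla\psi_\sigma(\bu)=\E[\nabla\psi(\bu+\sigma Z)]$ actually giving the slightly tighter $O(\sigma\sqrt{K})$ bound, which implies the stated one. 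The only thing to watch is the paper's idiosyncratic convention that ``$\beta$-smooth'' corresponds to a gradient-Lipschitz constant of $\beta^2$ elsewhere in the analysis (hence the $\beta^2$ in part~3), so the constants in your parts~2 and~3 should be expressed consistently with that convention; this is bookkeeping, not a gap.
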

\begin{proof}
See Eq.\ (21) in Nesterov et al.\ (2017)  for part 1. Theorem 4 of Nesterov et al.\ together with the fact that $\text{Var}(X) \leq \E[X^2]$ implies part 2. See
Lemma 3 of Nesterov et al.\ for part 3.
\end{proof}

\begin{proof}[Proof of Lemma \ref{lem:fd}]
We can write out the gradient estimate as:
\begin{eqnarray*}
\hbw &=& \frac{1}{m}\sum_{j=1}^m\frac{M(\boldf_\theta \,+\, \Delta^j, \by) \,-\, M(\boldf_\theta,\by) }{\sigma}Z^j
\\
&=&\frac{1}{m}\sum_{j=1}^m\frac{\psi(\bell(\boldf_\theta \,+\, \Delta^j, \by)) \,-\, \psi(\bell(\boldf_\theta, \by)) }{\sigma}Z^j
\,+\, \frac{1}{m}\sum_{j=1}^m\frac{\epsilon(\boldf_\theta \,+\, \Delta^j,\,\by) \,-\, \epsilon(\boldf_\theta,\,\by) }{\sigma}Z^j
\\
&=& \frac{1}{m}\sum_{j=1}^m\frac{\psi(\bell(\theta)+\sigma Z^j) \,-\, \psi(\bell(\theta)) }{\sigma}Z^j
\,+\, \frac{1}{m}\sum_{j=1}^m\frac{\epsilon(\boldf_\theta \,+\, \Delta^j,\,\by) \,-\, \epsilon(\boldf_\theta,\,\by) }{\sigma}Z^j\\
&:=& \hbw_1 + \hbw_2,
\end{eqnarray*}
where $\epsilon(\boldf_\theta,\,\by)$ is the unknown slack function in Section \ref{sec:re-formulation}, re-written in terms of the scores $\boldf_\theta$ and labels $\by$.

Let $\psi_\sigma$ be defined as in Lemma \ref{lem:nesterov}. 
Then the gradient estimate error can be expanded as:  
\begin{eqnarray*}
\E\left[\|\hbw \,-\, \nabla\psi(\bell(\theta))\|^2\right] &\leq&
    2\E\left[\|\hbw \,-\, \nabla\psi_\sigma(\bell(\theta))\|^2\right] \,+\,
    2\|\nabla\psi_\sigma(\bell(\theta)) \,-\, \nabla\psi(\bell(\theta))\|^2\\
    &\leq&
    4\E\left[\|\hbw_1 \,-\, \nabla\psi_\sigma(\bell(\theta))\|^2\right] \,+\,
    4\E\left[\|\hbw_2\|^2\right] \,+\,
    2\|\nabla\psi_\sigma(\bell(\theta)) \,-\, \nabla\psi(\bell(\theta))\|^2\\
    &\leq&
   4\E\left[\|\hbw_1 \,-\, \nabla\psi_\sigma(\bell(\theta))\|^2\right]  \,+\,
    \frac{16\bar{\epsilon}^2}{\sigma^2m}\sum_{j=1}^m \E\left[\|Z^j\|^2\right] \,+\,
    2\|\nabla\psi_\sigma(\bell(\theta)) \,-\, \nabla\psi(\bell(\theta))\|^2 \\
    &\leq&
    \frac{4\sigma^2}{m}\beta^2(K+6)^3 \,+\, 
    \frac{16}{m}L^2(K+4) \,+\, 
    \frac{16\bar{\epsilon}^2K}{\sigma^2} \,+\,
    \frac{\sigma^2}{2}\beta^4(K+3)^{3}, 
\end{eqnarray*}
where we used the fact that (1) $\hbw_1$ is an unbiased estimate of $\nabla\psi_\sigma(\bell(\theta))$ (see part 1 of Lemma \ref{lem:nesterov}); (2) the assumption that $|\epsilon(\theta)| \leq \bar{\epsilon}$; (3) $\|a_1+\cdots+a_m\|^2 \leq m(\|a_1\|^2+\cdots+\|a_m\|^2)$, and the last step follows from Parts 2--3 of Lemma \ref{lem:nesterov}. 
%
%

Setting $\sigma = \frac{\sqrt{\bar{\epsilon}}}{\sqrt{K}\beta^2}$ completes the proof.
\end{proof}

\subsection{Proofs and Discussion for Linear Interpolation Gradient Estimates}
\label{app:ls-proof}
\begin{replemma}{thm:ls}[Restated]
Let $M$ be defined as in Theorem \ref{thm:meta-result} and
$|\epsilon(\theta)| \leq \bar{\epsilon}, \forall \theta$. Assume each $\ell_k$ is 
$\Phi$-Lipschitz in $\theta$ w.r.t.\ the $L_\infty$-norm,
and $\|\bell(\theta)\|\leq G\,\, \forall \theta$.
Suppose for a given $\theta'$, $\sigma$ and perturbation count $m$, the expected covariance matrix for the left-hand-side of the linear system $\bH$ is well-conditioned with the smallest singlular value $\lambda_{\min}(\sum_{i=1}^m \E[\mathbf{H}_i\mathbf{H}_i^{\top}]) \geq \mu_{\min}=
\mathcal{O}(m\sigma^2\Phi^2)$.
Then for any 
 $\delta>0$, 
setting 
$\sigma = \frac{G^{1/3}\bar{\epsilon}^{1/3}}{\Phi K^{3/2}\log(d)^{2/3}\beta^{1/3}}$ and
$m = \frac{G^4K^9\log(d)^4\beta^2\log(K/\delta)}{\bar{\epsilon}^2}$, 
Algorithm \ref{algo:ls} returns  w.p.\ $\geq 1 - \delta$ (over draws of random perturbations)  a gradient estimate $\hbw$ that satisfies:
$$\|\hbw \,-\, \nabla\psi(\bell(\theta'))\|^2
\,\leq\,
\tilde{\cO}\left(G^{1/3}\bar{\epsilon}^{1/3}K^3\beta^{2/3}\right)\, .
$$
\end{replemma}

We first discuss the assumptions in Lemma  \ref{thm:ls} in Section \ref{app-sub:ls-assumptions}.
We then provide the proof for the high probability statement in the lemma in Section \ref{app-sub:ls-proof}. We then show how this can be translated to a bound on the expected gradient error via truncation in Section \ref{app-sub:ls-hp}. 

\subsubsection{Assumptions in Lemma \ref{thm:ls}}
\label{app-sub:ls-assumptions}
We discuss example settings where the assumptions in the lemma hold.

\paragraph{Correlation Assumption on  $\bH$.}
One of the key assumptions we make is that the matrix $\mathbf{H}$ is well-conditioned. Recall that $\mathbf{H}$ is a $m \times K$ matrix, where each row corresponds to a perturbation of the surrogates, and contains differences in the $K$ surrogates $\ell_1, \ldots, \ell_K$ at two independent perturbations to the model parameters $\theta$.
We assume that the smallest singular value of $\bH$'s covariance matrix $\sum_{i=1}^m \E[\mathbf{H}_i\mathbf{H}_i^{\top}]$ scales as $m\sigma^2\Phi^2$. This assumption essentially states that the perturbations on the $K$ surrogates are weakly correlated. The scaling factors $\sigma$ and $\Phi$ come from the fact that Gaussian perturbations on the model parameters $\theta$ have standard deviation $\sigma$ and the surrogates $\ell_k$ are $\Phi$-Lipschitz.

 As an example scenario where this assumption holds, consider a ML fairness task where the instances belong to $K$ non-overlapping protected groups. Further, assume that the group membership attribute is included in the feature vector, i.e., the $d$-dimensional feature vector $\bx = [g_1, \ldots, g_K, \tilde{x}_1, \ldots, \tilde{x}_{d-K}]$, where $g_k$ is a Boolean indicating if the instance belongs to group $k$, and $\tilde{x}_1, \ldots, \tilde{x}_{d-K}$ are group-independent features. A natural choice of surrogates for this application would be average losses computed on the $K$ individual groups. For example, with a linear model $\theta$, we could choose $\ell_k$ to be the average squared loss conditioned on examples from group $k$, i.e.,
$\ell_k(\theta) \,=\, \E_{(x,y) | x_k=1}[(\theta^\top x - y)^2]$.

Note that the first $K$ coordinates of the model vector $\theta$ correspond to weights on the $K$ Boolean group attributes. So adding noise $Z_k \in \R$ to the $k$-th coordinate of $\theta$ only affects scores on examples from the $k$-th group (i.e.,\ examples for which $x_k=1$), and hence only perturbs surrogate $\ell_k$. 
Specifically, adding $Z_k \in \R$ to the $k$-th coordinate of $\theta$ would perturb $\ell_k(\theta)$ to   $\ell_k(\theta) + C_kZ_k + Z_k^2$, where $C_k = 2\E_{(x,y) | x_k=1}[\theta^\top x - y]$, and leave the other surrogates $\ell_j, j \ne k$ unchanged. 


Now suppose we add independent $\sigma$-Gaussian noise to only the first $K$ coordinates of $\theta$. The expected covariance matrix as defined in the lemma statement then takes the form: 
\[
  \sum_{i=1}^m \E[\mathbf{H}_i\mathbf{H}_i^{\top}] \,=\, 
  \begin{bmatrix}
  \cO(m(C_1^2\sigma^2 + \sigma^4)) &  0 & \ldots & 0\\
  \vdots & \vdots & \vdots & \vdots\\
  0 & 0 & \ldots & \cO(m(C_K^2\sigma^2 + \sigma^4))
  \end{bmatrix}\,=\, 
  \begin{bmatrix}
  \Omega(m\sigma^2) &  0 & \ldots & 0\\
  \vdots & \vdots & \vdots & \vdots\\
  0 & 0 & \ldots & \Omega(m\sigma^2)
  \end{bmatrix}\, ,
\]
where recall that the $k$-th column of
$\bH$ contains the differences of $\ell_k(\theta)$ at two different $\sigma$-Gaussian perturbations on the first $K$ coordinates of $\theta$, and $C_k$'s are constants that are independent of the random perturbations. 

In the more general case, where we perturb all coordinates of $\theta$, the assumption on $\bH$ would still hold if there exists a subset of coordinates for each surrogate $\ell_k$ that when  perturbed  produce larger changes to $\ell_k$ than to the other surrogates.

\paragraph{Lipschitz Assumption on $\bell(\theta)$} Another key assumption we make is that the surrogates $\ell_k$ are $\Phi$-Lipschitz w.r.t.\ the $L_\infty$-norm. 
This allows us to produce perturbations in the $K$ surrogates by perturbing the model parameters $\theta$, and do so without  a strong dependence on the dimension of $\theta$ in the error bound. 
Note that the choice of the infinity norm results in a mild logarithmic dependence on the dimension $d$ in the bound. When the surrogates the are not 
$L_\infty$-Lipschitz, but are instead Lipschitz w.r.t.\ the $L_2$-norm,
we prescribe perturbing only a small number of $d' \ll d$ coordinates of $\theta$ that are most closely related to the surrogate (such as e.g. the group attribute coordinates in the fairness example above), and this would result in a bound that has a polynomial dependence on $d'$. 
\subsubsection{Proof of Lemma \ref{thm:ls}}
\label{app-sub:ls-proof}

We will make use of the fact that because we perturb the model parameters $\theta$ with Gaussian random noise, the resulting perturbations on the surrogates $\bell$ follow a sub-Gaussian distribution. We first state a few well-known facts about sub-Gaussian random vectors. 
\begin{lemma}[Properties of sub-Gaussian distribution]
~\vspace{-10pt}
\label{lem:useful_fact}
\begin{enumerate}[(i)]
\item Let $(Z_1,\cdots,Z_d)$ be a vector of i.i.d standard gaussian variables and $f:\mathbb{R}^N\rightarrow \mathbb{R}$ be $\Phi$-Lipschitz w.r.t.\ $L_2$-norm. Then the random variable $f(\sigma Z)-\E[f(\sigma Z)]$ is sub-Gaussian with parameter at most $\sigma\Phi$. 
\item Let $Z_1, \cdots, Z_K$ be $K$ (not necessarily independent) sub-Gaussian random variables with parameters at most $\sigma$. Then the random vector $(Z_1,\cdots,Z_K)$ is a sub-Gaussian random vector with parameter $\sigma K$.
\item For a sub-Gaussian random vector $Z\in\mathbb{R}^K$ with parameter at most $\sigma$, we have for any $p \in \mathbb{N}$: 
    \[\left(\E[\|Z \,-\, \E[Z]\|_2^p]\right)^{1/p} \,\leq\, 2\sqrt{2}\sigma\sqrt{K} \sqrt{p}.\]
\end{enumerate}
\end{lemma}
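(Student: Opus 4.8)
The plan is to establish the three claims in sequence, since each is a standard fact about sub-Gaussian variables; the only substantive ingredient is the dimension-free Gaussian concentration behind part~(i), while parts~(ii) and~(iii) reduce to elementary norm inequalities. For part~(i), I would reduce to the classical Gaussian Lipschitz concentration theorem. Observe that if $f$ is $\Phi$-Lipschitz with respect to the $L_2$-norm, then $g(z) := f(\sigma z)$ is $(\sigma\Phi)$-Lipschitz, and $f(\sigma Z) - \E[f(\sigma Z)] = g(Z) - \E[g(Z)]$ for $Z \sim \mathcal N(\0, \mathbf I)$. It therefore suffices to show that a $(\sigma\Phi)$-Lipschitz function of a standard Gaussian vector, recentered, has moment generating function bounded by $\exp(\lambda^2(\sigma\Phi)^2/2)$, i.e.\ is sub-Gaussian with parameter $\sigma\Phi$. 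This is exactly the Gaussian concentration inequality, which I would obtain via the Gaussian logarithmic Sobolev inequality together with the Herbst argument (or simply cite, e.g.\ Boucheron--Lugosi--Massart). The point worth emphasizing is that the resulting parameter is \emph{dimension-free}: it depends only on $\sigma$ and $\Phi$, not on the ambient dimension $d$.

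For part~(ii), the issue is that the coordinates $Z_1, \ldots, Z_K$ may be dependent, so I cannot add variances or multiply moment generating functions across coordinates. Instead I would work with the sub-Gaussian (Orlicz $\psi_2$) norm, which is a genuine norm and hence obeys the triangle inequality. Writing $\bar Z_k := Z_k - \E[Z_k]$, for any unit vector $u \in \R^K$ I would bound $\|\langle u, \bar Z\rangle\|_{\psi_2} = \big\|\sum_k u_k \bar Z_k\big\|_{\psi_2} \le \sum_k |u_k|\,\|\bar Z_k\|_{\psi_2} \le \sigma \sum_k |u_k|$, and then use the crude estimate $\sum_k |u_k| \le K\max_k|u_k| \le K$ (since $|u_k| \le \|u\|_2 = 1$). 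As this holds uniformly over unit directions $u$, the centered vector is sub-Gaussian with parameter $\sigma K$. The same conclusion follows from a Hölder-inequality argument applied directly to the moment generating functions, if one prefers to avoid the Orlicz-norm formalism.

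For part~(iii), I would reduce the vector moment to coordinate moments. Taking $u = e_k$ in the definition of a sub-Gaussian vector shows each centered coordinate $\bar Z_k$ is a scalar sub-Gaussian variable with parameter at most $\sigma$; the tail bound $\P(|\bar Z_k| > t) \le 2\exp(-t^2/2\sigma^2)$ then yields $(\E|\bar Z_k|^p)^{1/p} \le c\,\sigma\sqrt{p}$ via $\E|\bar Z_k|^p = \int_0^\infty p\,t^{p-1}\P(|\bar Z_k|>t)\,dt$ and a Gamma-function bound. To pass to the Euclidean norm I would apply Minkowski's inequality in $L^{p/2}$ to the sum of squares:
\[
\left(\E\big[\|\bar Z\|_2^p\big]\right)^{2/p} = \Big\|\sum_{k=1}^K \bar Z_k^2\Big\|_{L^{p/2}} \le \sum_{k=1}^K \big\|\bar Z_k^2\big\|_{L^{p/2}} = \sum_{k=1}^K \left(\E|\bar Z_k|^p\right)^{2/p} \le K\,(c\sigma\sqrt{p})^2.
\]
Taking square roots gives $(\E\|\bar Z\|_2^p)^{1/p} \le c\,\sigma\sqrt{K}\sqrt{p}$, and tracking the constant from the tail-to-moment step (the factor of $\sqrt 2$ arising from the $2\sigma^2$ in the Gaussian tail) yields the stated $2\sqrt 2$; since any looser absolute constant only strengthens the bound, I would not belabor its exact value.

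The main obstacle is genuinely concentrated in part~(i): the dimension-free sub-Gaussian concentration of a Lipschitz function of a Gaussian vector is the one nontrivial theorem here, and everything downstream---the weak-correlation and Lipschitz bounds feeding into Lemma~\ref{thm:ls}---ultimately rests on it. Parts~(ii) and~(iii) are then routine, the only mild care being the coordinate dependence in~(ii), which is handled by the triangle inequality of the $\psi_2$-norm rather than any independence assumption.
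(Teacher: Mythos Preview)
Your proposal is correct and aligns with the paper's treatment. The paper simply cites standard references for parts~(i) and~(iii) (Wainwright's textbook and a note of Jin, respectively) and only writes out part~(ii), which it proves by applying the generalized H\"older inequality $\E\big[\prod_k X_k\big] \le \prod_k \E[X_k^K]^{1/K}$ directly to the factored MGF $\prod_k \exp(\lambda v_k \bar Z_k)$---exactly the ``H\"older-inequality argument applied directly to the moment generating functions'' that you mention as an alternative to your primary $\psi_2$-norm route. One small caveat: your Orlicz-norm argument, taken literally, would incur an absolute constant from the equivalence between the $\psi_2$-norm and the MGF-based sub-Gaussian parameter, so to land on the stated parameter $\sigma K$ without a constant you do need to fall back on the H\"older version (which, as you note, is the same computation underlying the triangle inequality for the sub-Gaussian parameter).
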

\begin{proof}
For a proof of (1), see e.g.\ \citet{Wainwright19}, Chapter 2. For a proof of (3), see \citet{Jin}. We now prove (2).

For a random vector $(Z_1, \ldots, Z_K)$ where the coordinates $Z_k$'s are $\sigma$-sub-Gaussian and not necessarily independent, we have that for any $v\in\mathbb{S}^{K-1}$ and $\lambda \in \R$, 
\begin{align*}
    \E[\exp(\lambda v^{\top}(Z-\E[Z]))] &= \E\Big[\prod_{k=1}^{K}\exp\Big(\lambda v_k(Z_k-\E[Z_k])\Big)\Big]\\
    &\leq \prod_{k=1}^K \E\Big[\Big(\exp(\lambda v_k(Z_k-\E[Z_k]))\Big)^{K}\Big]^{1/{K}}\\
    &\leq \prod_{k=1}^K \exp\left(\frac{1}{2}\lambda^2K^2\sigma^2\right)^{1/K}  
    = \prod_{k=1}^K\exp\left(\frac{1}{2}\lambda^2\sigma^2K\right) \leq \exp\left(\frac{1}{2}\lambda^2\sigma^2K^2\right),
\end{align*}
where we have used H\"{o}lder's inequality for the second step.
\end{proof}

We can  write the optimization problem in Algorithm~\ref{algo:ls} as solving the following linear system 
\[\begin{bmatrix}
h'_{11}-h''_{11} & \cdots &h'_{1K}-h''_{1K}\\
\vdots & \cdots & \vdots \\
h'_{m1}-h''_{m1} &  \cdots & h'_{mK}-h''_{mK}
\end{bmatrix}\cdot \hbw = \begin{bmatrix}
\psi(\bell(\theta')+\mathbf{h}'_1)-\psi(\bell(\theta')+\mathbf{h}''_1) + \epsilon_{11} - \epsilon_{12}\\
\vdots\\
\psi(\bell(\theta')+\mathbf{h}'_m)-\psi(\bell(\theta')+\mathbf{h}''_m)
+ \epsilon_{m1} - \epsilon_{m1}
\end{bmatrix},\]
and use the resulting $\hbw\in\mathbb{R}^K$ as  the gradient estimate, where 
we denote $\mathbf{h}'_j :=\bell({\theta'}+\sigma Z_1^j)\,-\, \bell(\theta')\in\mathbb{R}^K$ and $\mathbf{h}''_j :=\bell({\theta'}+\sigma Z_2^j) \,-\, \bell(\theta')\in\mathbb{R}^K$ for $j\in[m]$, and
 $\epsilon_{j1} = \epsilon(\theta' + \sigma Z^j_1)$ and $\epsilon_{j2} =\epsilon(\theta' + \sigma Z^j_2)$.
We further denote 
$$\mathbf{L} = [\bell(\theta'); \ldots; \bell(\theta')] \in \R^{m\times K}
$$
$$
\mathbf{H}' =
[\mathbf{h}'_1;\cdots;\mathbf{h}'_m]\in\mathbb{R}^{m\times K},~~~
\mathbf{H}'' =
[\mathbf{h}''_1;\cdots\mathbf{h}''_m] \in \mathbb{R}^{m\times K}
$$ $$\boldsymbol{\epsilon}_1 = [\epsilon_{11}; \ldots; \epsilon_{m1}] \in \R^{m},~~~ \boldsymbol{\epsilon}_2 = [\epsilon_{12}; \ldots; \epsilon_{m2}] \in \R^{m},$$ and equivalently re-write the above linear system as:
\begin{equation}
(\mathbf{H}' - \mathbf{H}'')\cdot\bw \,=\, \psi(\mathbf{L} + \mathbf{H}') \,-\,  \psi(\mathbf{L} + \mathbf{H}'') \,+\, \boldsymbol{\epsilon}_1 \,-\, \boldsymbol{\epsilon}_2,
\label{eq:ls-linear-system}
\end{equation}
where the matrix $\mathbf{H}$ that we defined in the lemma statement is the same as $\mathbf{H}' - \mathbf{H}''$.

Below we state a lemma involving implications of our assumptions on the left-hand-side perturbation matrices $\mathbf{H}'$ and $\mathbf{H}''$.
\begin{lemma}[Properties of $\mathbf{H}'$ and $\mathbf{H}''$]
\label{lem:subgauss_perturb}
Suppose each $\ell_k(\theta)$ is $\Phi$-Lipschitz w.r.t.\ the $L_\infty$-norm and $\|\bell(\theta)\| \leq G, \forall \theta$. 
Then 
each $\mathbf{h}'_i$ and each $\mathbf{h}''_i$ is a sub-Gaussian vector with parameter at most $\sigma\Phi K$. 
The differences $\mathbf{h}'_i-\mathbf{h}''_i$ are also sub-Gaussian random vectors with parameter at most $2\sigma\Phi K$, and have mean zero. Moreover,  $\|\mathbf{h}'_i\|\leq 2G,\,
\|\mathbf{h}''_i\|\leq 2G,\,
\|\mathbf{h}'_i - \mathbf{h}''_i\|\leq 2G,
\,\forall i$.
\end{lemma}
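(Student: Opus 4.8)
The plan is to prove the three claims of Lemma~\ref{lem:subgauss_perturb} by reducing everything to the coordinate level and then invoking the general sub-Gaussian facts collected in Lemma~\ref{lem:useful_fact}, obtaining the norm bounds separately by a bare triangle-inequality argument. The starting observation is that, since $\|x\|_\infty \le \|x\|_2$ on $\R^d$, the assumption that each $\ell_k$ is $\Phi$-Lipschitz with respect to the $L_\infty$-norm already implies it is $\Phi$-Lipschitz with respect to the $L_2$-norm, which is the form required by Lemma~\ref{lem:useful_fact}(i).

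First I would handle $\mathbf{h}'_i$, with the argument for $\mathbf{h}''_i$ being identical. Its $k$-th coordinate is $\ell_k(\theta'+\sigma Z_1^i) - \ell_k(\theta')$, which agrees with the centered variable $\ell_k(\theta'+\sigma Z_1^i) - \E[\ell_k(\theta'+\sigma Z_1^i)]$ up to the deterministic constant $\E[\ell_k(\theta'+\sigma Z_1^i)] - \ell_k(\theta')$; since shifting by a constant leaves the sub-Gaussian parameter unchanged, Lemma~\ref{lem:useful_fact}(i) gives that this coordinate is sub-Gaussian with parameter at most $\sigma\Phi$. Applying Lemma~\ref{lem:useful_fact}(ii) across the $K$ coordinates then upgrades this to the statement that $\mathbf{h}'_i$ is a sub-Gaussian vector with parameter at most $\sigma\Phi K$.

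Next I would treat the difference. Since $\mathbf{h}'_i - \mathbf{h}''_i = \bell(\theta'+\sigma Z_1^i) - \bell(\theta'+\sigma Z_2^i)$ — the $\bell(\theta')$ terms cancel — and $Z_1^i, Z_2^i$ are i.i.d., the two remaining terms are identically distributed, so the difference has mean $\0$. Each coordinate is a sum of two sub-Gaussian variables each with parameter at most $\sigma\Phi$ (by the previous paragraph), hence sub-Gaussian with parameter at most $2\sigma\Phi$, and Lemma~\ref{lem:useful_fact}(ii) again yields that $\mathbf{h}'_i - \mathbf{h}''_i$ is a sub-Gaussian vector with parameter at most $2\sigma\Phi K$. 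For the norm bounds I would simply use $\|\bell(\theta)\| \le G$ with the triangle inequality: $\|\mathbf{h}'_i\| \le \|\bell(\theta'+\sigma Z_1^i)\| + \|\bell(\theta')\| \le 2G$, likewise for $\mathbf{h}''_i$, and $\|\mathbf{h}'_i - \mathbf{h}''_i\| = \|\bell(\theta'+\sigma Z_1^i) - \bell(\theta'+\sigma Z_2^i)\| \le \|\bell(\theta'+\sigma Z_1^i)\| + \|\bell(\theta'+\sigma Z_2^i)\| \le 2G$, where it matters to exploit the cancellation of $\bell(\theta')$ to get $2G$ rather than the looser $4G$.

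There is no real obstacle in this lemma: the only points that need a moment's care are the $L_\infty$-to-$L_2$ norm reduction, the observation that recentering by $\ell_k(\theta')$ instead of by the true mean does not affect the sub-Gaussian parameter, and the bookkeeping of the factor $K$ incurred in Lemma~\ref{lem:useful_fact}(ii) when passing from scalar to vector sub-Gaussian parameters — this last point being what ultimately contributes part of the loose $K$-dependence in Lemma~\ref{thm:ls}.
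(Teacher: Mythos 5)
Your proof is correct and follows essentially the same route as the paper, which simply invokes Lemma~\ref{lem:useful_fact}(i)--(ii) together with the observation that $\Phi$-Lipschitzness w.r.t.\ the $L_\infty$-norm implies $\Phi$-Lipschitzness w.r.t.\ the $L_2$-norm; you have merely filled in the routine details (the recentering by $\ell_k(\theta')$ versus the true mean, the i.i.d.\ argument for mean zero, and the triangle-inequality norm bounds), all of which check out.
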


The proof follows directly from Lemma \ref{lem:useful_fact}(i)--(ii) and the fact that {a function $\ell_k$ that is $\Phi$-Lipschitz w.r.t.\ the $L_\infty$-norm is also $\Phi$-Lipschitz w.r.t.\ the $L_2$-norm}.  We also have from the smoothness of $\psi$ that 
\begin{align}
 \big|\psi(\bell(\theta')+\mathbf{h}'_i)-[\psi(\bell(\theta'))+\nabla \psi(\bell(\theta'))^{\top}\mathbf{h}'_i]\big| &\leq \frac{\beta}{2}\|\mathbf{h}'_i\|_2^2.
 \label{eq:psi-smoothness-h}
\end{align}

With this in hand, we are ready to bound the error in the gradient estimate $\hbw$ compared to $\nabla \psi(\bell(\theta))$.
\begin{proof}[Proof of Lemma \ref{thm:ls}]
The least squares estimate for the linear system in \eqref{eq:ls-linear-system} is given by:
\begin{align*}
 \hbw &= \Big((\mathbf{H}'-\mathbf{H}'')^\top(\mathbf{H}'-\mathbf{H}'')\Big)^{-1}(\mathbf{H}'-\mathbf{H}'')^\top [\psi(\bL+\mathbf{H}')+\boldsymbol{\epsilon_1}-\psi(\bL+\mathbf{H}'')-\boldsymbol{\epsilon_2}]\\
    &=\Big((\mathbf{H}'-\mathbf{H}'')^\top(\mathbf{H}'-\mathbf{H}'')\Big)^{-1}(\mathbf{H}'-\mathbf{H}'')^\top\Big[(\mathbf{H}'-\mathbf{H}'')\nabla \psi(\bell)+\psi(\bL+\mathbf{H}')-\psi(\bL+\mathbf{H}'')\\ &\hspace{10cm}-(\mathbf{H}'-\mathbf{H}'')\nabla \psi(\bell)+\boldsymbol{\epsilon_1}-\boldsymbol{\epsilon_2}\Big]\\
    &= \nabla \psi(\bell(\theta'))+\Big((\mathbf{H}'-\mathbf{H}'')^\top(\mathbf{H}'-\mathbf{H}'')\Big)^{-1}(\mathbf{H}'-\mathbf{H}'')^\top\Big[\psi(\bL+\mathbf{H}')-\psi(\bL+\mathbf{H}'')\\ &\hspace{10cm}- (\mathbf{H}'-\mathbf{H}'')\nabla \psi(\bell)+\boldsymbol{\epsilon_1}-\boldsymbol{\epsilon_2}\Big]\, .
\end{align*}
The error in the least squares based gradient estimate is then:
\begin{align}
    \|&\hbw -\nabla \psi(\bell(\theta'))\|\nonumber\\ &\leq
    \underbrace{\Big\|\Big((\mathbf{H}'-\mathbf{H}'')^\top(\mathbf{H}'-\mathbf{H}'')\Big)^{-1}\Big\|_{\text{op}}}_{\text{term}_1}
    \underbrace{    
    \Big\|
    (\mathbf{H}'-\mathbf{H}'')^{\top}\Big[
    \psi(\bL+\mathbf{H}')-\psi(\bL+\mathbf{H}'')-(\mathbf{H}'-\mathbf{H}'')\nabla \psi(\bell)+\boldsymbol{\epsilon_1}-\boldsymbol{\epsilon_2}
    \Big]\Big\|_2}_{\text{term}_2}\, .
    \label{eq:grad-decomposition}
\end{align}

\textbf{Bounding the second term in \eqref{eq:grad-decomposition}}.
 We first bound the second term in \eqref{eq:grad-decomposition}. 
We have:
\begin{align*}
\displaystyle
\lefteqn{
    \|\psi(\bL+\mathbf{H}')-\psi(\bL+\mathbf{H}'')-(\mathbf{H}'-\mathbf{H}'')\nabla \psi(\bell)+\boldsymbol{\epsilon_1}-\boldsymbol{\epsilon_2}\|_2}\\
    &=\Big\|\psi(\bL+\mathbf{H}')-\mathbf{H}'\nabla\psi(\bell)-\psi(\bL)+\psi(\bL)+\mathbf{H}''\nabla \psi(\bell)-\psi(\bL+\mathbf{H}'')+\boldsymbol{\epsilon_1}-\boldsymbol{\epsilon_2}\Big\|_2\\
    &\leq \Big\|\psi(\bL+\mathbf{H}')-\mathbf{H}'\nabla\psi(\bell)-\psi(\bL)\Big\|_2
    +\Big\|\psi(\bL)+\mathbf{H}''\nabla \psi(\bell)-\psi(\bL+\mathbf{H}'')\Big\|_2+\|\boldsymbol{\epsilon_1}\|_2+\|\boldsymbol{\epsilon_2}\|_2\\
    &\leq \frac{\beta}{2}\|\mathbf{H}'\|_{\text{F}}^2+\frac{\beta}{2}\|\mathbf{H}''\|_{\text{F}}^2+2\sqrt{m}\bar{\epsilon} ,
\end{align*}
 where we used
 \eqref{eq:psi-smoothness-h} and the
 the assumption $|\epsilon(\theta)| \leq \bar{\epsilon} \;\forall \theta$. This in turn gives
 \begin{align*}
\text{term}_2 \,=\,
\Big\|&(\mathbf{H}'-\mathbf{H}'')^{\top}\Big[\psi(\bL+\mathbf{H}')-\psi(\bL+\mathbf{H}'')-(\mathbf{H}'-\mathbf{H}'')\nabla \psi(\bell)+\boldsymbol{\epsilon_1}-\boldsymbol{\epsilon_2}\Big]\Big\|_2\\
&\leq \sum_{j=1}^m \|\mathbf{h}'_j-\mathbf{h}''_j\|\cdot \frac{\beta}{2}(\|\mathbf{h}'_j\|_2^2+\|\mathbf{h}''_j\|_2^2)+2\sqrt{m}G\sqrt{m}\bar{\epsilon} 
  \end{align*}
 where each $\mathbf{h}'_j$ is of length $K$ with (correlated) subgaussian coordinates. Therefore using Cauchy-Schwarz,
 \[\E[\|\mathbf{h}'_j-\mathbf{h}''_j\|\cdot \|\mathbf{h}'_j\|_2^2] \leq  \sqrt{\E[\|\mathbf{h}'_j-\mathbf{h}''_j\|_2^2]}\cdot \sqrt{\E[\|\mathbf{h}'_j\|_2^4]}\, .\]
Note that 
$\E[h'_{ij}] \,=\, \E[\ell_j(\theta'+\sigma Z^i)]-\ell_j(\theta') 
\leq \sigma\Phi \E[\|Z^i\|_{\infty}]\leq \cO(\sigma\Phi\sqrt{\log(d)})$,
where we've used that the max of $d$ independent standard normal random variables scales as $\sqrt{\log(d)}$. Similarly, $\E[h''_{ij}] \leq \cO(\sigma\Phi\sqrt{\log(d)})$. 
Together with these facts and Lemma~\ref{lem:subgauss_perturb} and Lemma~\ref{lem:useful_fact}(iii) we have 
 \[
 \sqrt{\E[\|\mathbf{h}'_j\|_2^4]} 
 \leq \sqrt{8(4\sqrt{2}\sigma\Phi K^{3/2} )^4+ \cO(\sigma^2\Phi^2\log(d)K)^{2}} \leq \cO(\sigma^2\Phi^2K^3 (\log(d))^2)
 \]
where we used triangle inequality and $(a+b)^p \leq 2^{p-1} (a^p+b^p)$.
Similarly, we have:
 \[\sqrt{\E[\|\mathbf{h}'_j-\mathbf{h}''_j\|_2^2]} \leq 4\sigma\Phi K^{3/2}\, .\]
Now since $\|\mathbf{h}'_j\|_2\leq G$, we can apply Hoeffding's inequality to these bounded random variables to get 
\begin{align*}
\P \left(\sum_{j=1}^m\|\mathbf{h}'_j-\mathbf{h}''_j\|_2\cdot \|\mathbf{h}'_j\|_2^2  \,\geq\,
\cO(\sigma^3\Phi^3  K^{9/2}(\log(d))^2m)\,+\,mt\right)  \leq 2\exp\left(-\frac{2mt^2}{G^6}\right),
\end{align*}
which further gives us:
\begin{align}
\P \left(\text{term}_2  \,\geq\,
\cO(\sigma^3\Phi^3  K^{9/2}(\log(d))^2m\beta) 
\,+\,m\beta t\,+\, 2mG\bar{\epsilon}\right)
 \leq 2\exp\left(-\frac{2mt^2}{G^6}\right),
\label{eq:ls-second-term}
\end{align}
 
\textbf{Bounding the first term in \eqref{eq:grad-decomposition}}.
Now the first term in \eqref{eq:grad-decomposition} is simply
\[
\text{term}_1 \,=\,
\Big\|\Big((\mathbf{H}'-\mathbf{H}'')^\top(\mathbf{H}'-\mathbf{H}'')\Big)^{-1}\Big\|_{\text{op}}= \lambda_{\min}^{-1}\Big((\mathbf{H}'-\mathbf{H}'')^{\top}(\mathbf{H}'-\mathbf{H}'')\Big)\, .\]
Let us denote $\boldsymbol{\hat{\Sigma}}:= \sum_{i=1}^m (\mathbf{h}'_{i}-\mathbf{h}''_{i})(\mathbf{h}'_{i}-\mathbf{h}''_{i})^{\top}$ as the empirical covariance matrix.
%
 We now apply a matrix Chernoff inequality (see e.g.\ \citet{Tropp_2015}) to lower bound the smallest eigenvalue of $\boldsymbol{\hat{\Sigma}}$. We first note that the largest eigenvalue of this matrix is bounded above:
  \[\lambda_{\max}(\boldsymbol{\hat{\Sigma}}) = \max_{\|u\| = 1} \frac{1}{m}\sum_{i=1}^m \big((\mathbf{h}'_{i}-\mathbf{h}''_{i})^{\top}u\big)^2 \leq 4G^2\, ,\]
  This together with the matrix Chernoff bound gives us for $\mu_{\min} \leq \lambda_{\min}(\boldsymbol{\hat{\Sigma}})$, we have
  \[\P\left(\lambda_{\min}(\boldsymbol{\hat{\Sigma}}) \leq \frac{\mu_{\min}}{2}\right) \leq K\cdot \exp\Big(-\frac{ \mu_{\min}}{32G^2}\Big).\]
 The assumption
 $\mu_{\min} = 
\mathcal{O}(m\sigma^2\Phi^2)$ then yields:
\begin{align}
    \P\left(\text{term}_1 \leq \cO(m\sigma^2\Phi^2)\right) \leq K\cdot \exp\Big(-\frac{ m\sigma^2\Phi^2}{G^2}\Big).
\label{eq:ls-first-term}
\end{align}

Combining the above bound \eqref{eq:ls-first-term} with the bound on the second term \eqref{eq:ls-second-term} (picking $t=\sigma^3\Phi^3$), we get the following tail bound:
  \[\P\left(\|\hbw -\nabla \psi(\bell(\theta'))\| \geq \cO\left(\sigma\Phi K^{9/2}\log(d)^2\beta +\frac{G\bar{\epsilon}}{\sigma^2\Phi^2}\right)\right) \leq K\cdot \exp\Big(-\frac{m\sigma^2\Phi^2 }{G^2}\Big)+4\exp\Big(-\frac{2m\sigma^6\Phi^6}{G^6}\Big)\, .\]

Then  
for any 
$\delta>0$, setting $
\displaystyle\sigma = \frac{G^{1/3}\bar{\epsilon}^{1/3}}{\Phi K^{3/2}\log(d)^{2/3}\beta^{1/3}}$ and
$\displaystyle m = \frac{G^4K^9\log(d)^4\beta^2\log(K/\delta)}{\bar{\epsilon}^2}$,  Algorithm \ref{algo:ls} returns w.p.\ $\geq 1 - \delta$ (over draws of random perturbations) a gradient estimate $\hbw$ that satisfies:
$$\|\hbw \,-\, \nabla\psi(\bell(\theta'))\|^2
\,\leq\,
\cO\left(G^{1/3}\bar{\epsilon}^{1/3}K^3(\log(d))^{4/3}\beta^{2/3}\right),
$$
which completes the proof.
\end{proof}

\subsubsection{Translating to a Bound on the Expected Error}
\label{app-sub:ls-hp}
Lemma \ref{thm:ls} provides a high probability bound on the gradient estimation error. This means that with a small probability the gradient estimation error may not be bounded. To translate this high probability bound into a bound on the expected gradient error, we first truncate the estimated gradients to be in a bounded range:
\[
\text{trunc}(\bw) \,=\, 
\begin{cases}
\bw& \text{if}~ \|\bw\| \leq 2\sqrt{K}L\\
\0 & \text{otherwise}
\end{cases},
\]
where $L$ is the Lipschitz constant for $\psi$. 
\begin{corollary}
Under the assumptions in Lemma \ref{thm:ls},  for any 
 $\delta \in (0,1)$, 
 setting  $
\sigma = \frac{G^{1/3}\bar{\epsilon}^{1/3}}{\Phi K^{3/2}\log(d)^{2/3}\beta^{1/3}}$ and
$m = \frac{G^4K^9\log(d)^4\beta^2\log(K/\delta)}{\bar{\epsilon}^2}$, 
Algorithm \ref{algo:ls} returns  a gradient estimate $\hbw$ that satisfies:
\[
\E\left[\|\truncg \,-\, \nabla\psi(\bell(\theta'))\|^2\right] ~\leq~ \tilde{\cO}\left(G^{1/3}\bar{\epsilon}^{1/3}K^3\beta^{2/3}\right)\,+\, 10KL^2\delta.
\]
\end{corollary}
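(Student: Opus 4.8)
The plan is to split the expectation over the high-probability event $E$ on which Lemma~\ref{thm:ls} holds and over its complement, bounding the contribution of $E$ by the stated $\tilde{\cO}$ term and the contribution of $E^c$ by a crude worst-case estimate scaled by $\delta$. Write $B := \tilde{\cO}(G^{1/3}\bar{\epsilon}^{1/3}K^3\beta^{2/3})$ for the high-probability error bound; with the prescribed $\sigma$ and $m$, Lemma~\ref{thm:ls} gives $\P(E) \ge 1-\delta$ where $E = \{\|\hbw - \nabla\psi(\bell(\theta'))\|^2 \le B\}$.

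First I would record a deterministic bound valid everywhere (in particular on $E^c$): by construction $\text{trunc}(\cdot)$ either leaves $\hbw$ unchanged, in which case $\|\truncg\| \le 2\sqrt{K}L$ by definition, or sets it to $\0$; either way $\|\truncg\| \le 2\sqrt{K}L$. Since $\psi$ is $L$-Lipschitz, $\|\nabla\psi(\bell(\theta'))\| \le L$, so
\[
\|\truncg - \nabla\psi(\bell(\theta'))\|^2 \;\le\; \big(2\sqrt{K}L + L\big)^2 \;\le\; 10KL^2
\]
using $(2\sqrt{K}+1)^2 \le 10K$ for all $K \ge 1$. Next I would argue that on $E$ the truncation never hurts, i.e.\ $\|\truncg - \nabla\psi(\bell(\theta'))\|^2 \le B$. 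If the truncation is inactive then $\truncg = \hbw$ and this is exactly the defining event $E$. If it is active then $\truncg = \0$, and on $E$ the triangle inequality yields $2\sqrt{K}L < \|\hbw\| \le \|\nabla\psi(\bell(\theta'))\| + \|\hbw - \nabla\psi(\bell(\theta'))\| \le L + \sqrt{B}$, so $\sqrt{B} \ge (2\sqrt{K}-1)L \ge L$; hence $\|\truncg - \nabla\psi(\bell(\theta'))\|^2 = \|\nabla\psi(\bell(\theta'))\|^2 \le L^2 \le B$.

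Finally I would combine the two cases: decomposing the expectation,
\[
\E\!\left[\|\truncg - \nabla\psi(\bell(\theta'))\|^2\right] = \E\!\left[\|\truncg - \nabla\psi(\bell(\theta'))\|^2\,\mathbbm{1}\{E\}\right] + \E\!\left[\|\truncg - \nabla\psi(\bell(\theta'))\|^2\,\mathbbm{1}\{E^c\}\right] \le B\,\P(E) + 10KL^2\,\P(E^c) \le B + 10KL^2\delta,
\]
which is the claim. The only step requiring genuine care is the case analysis on $E$: one must observe that when the raw estimate $\hbw$ is large enough to be zeroed out, the target $\nabla\psi(\bell(\theta'))$ it is being compared against is itself forced to be small on $E$ (its norm is at most $\sqrt{B}$ there), so zeroing cannot increase the squared error beyond $B$. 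Everything else is a routine decomposition of the expectation over a $(1-\delta)$-probability event together with the boundedness facts $\|\truncg\|\le 2\sqrt K L$ and $\|\nabla\psi\|\le L$, so I do not anticipate a substantive obstacle.
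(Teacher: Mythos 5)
Your proof is correct and follows essentially the same route as the paper: decompose the expectation over the high-probability event of Lemma~\ref{thm:ls}, use the deterministic bound $\|\truncg - \nabla\psi(\bell(\theta'))\|^2 \le 10KL^2$ on the complement, and show that on the event the truncation cannot push the error above the $\tilde{\cO}(G^{1/3}\bar{\epsilon}^{1/3}K^3\beta^{2/3})$ bound. The only (harmless) difference is in the truncation-active case: the paper argues deterministically that truncation never increases the error, by lower-bounding the raw error via $\max_k|\hat g_k| \ge \|\hbw\|/\sqrt{K} > 2L$ so that $\|\hbw - \nabla\psi(\bell(\theta'))\|^2 \ge L^2 \ge \|\nabla\psi(\bell(\theta'))\|^2$, whereas you use the event itself plus the triangle inequality to get $\sqrt{B} \ge (2\sqrt{K}-1)L \ge L$ and hence $\|\nabla\psi(\bell(\theta'))\|^2 \le L^2 \le B$ --- both correctly close the same gap.
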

\begin{proof}
Because both the truncated gradient estimates and the true gradients are bounded, the gradient error is trivially bounded by:
\begin{equation}
    \|\truncg \,-\, \nabla\psi(\bell(\theta'))\|^2
    \,\leq\,
    2(\|\truncg\|^2 \,+\, \|\nabla\psi(\bell(\theta'))\|^2)
    \,\leq\,
    2(4KL^2 \,+\, L^2)
    \,\leq\, 10KL^2.
    \label{eq:ghat_trivial}
\end{equation}

In the case where $\|\bw\| \leq 2\sqrt{K}L$, the gradient error for the truncated $\bw$ is the same as that for $\bw$: 
\begin{equation}
    \|\truncg \,-\, \nabla\psi(\bell(\theta'))\|^2
    ~=~ \|\bw \,-\, \nabla\psi(\bell(\theta'))\|^2.
    \label{eq:ghat_le}
\end{equation}
When $\|\bw\| > 2\sqrt{K}L$, the gradient error for the truncated estimates $\truncg$ is upper bounded by:
\[
\|\truncg \,-\, \nabla\psi(\bell(\theta'))\|^2 \,=\, 
\|\nabla\psi(\bell(\theta'))\|^2 \,\leq\, L^2,
\]
whereas the the gradient error for the original estimates $\bw$ is lower bounded by:
\begin{eqnarray*}
\|\bw\,-\, \nabla\psi(\bell(\theta'))\|^2 &\geq& \max_{k \in [K]}\, \left(\hat{g}_k \,-\, \nabla_k\psi(\bell(\theta'))\right)^2
\,\geq\,
 \left(\max_{k \in [K]}|\hat{g}_k| \,-\, \max_{k \in [K]}|\nabla_k\psi(\bell(\theta'))|\right)^2\\ &\geq&
\left(\frac{1}{\sqrt{K}}(2\sqrt{K}L) \,-\, L\right)^2
\,=\, L^2\, .
\end{eqnarray*}
Therefore even in this case, the gradient error for 
$\truncg$ is bounded by that for $\bw$:
\begin{equation}
    \|\truncg \,-\, \nabla\psi(\bell(\theta'))\|^2 \,\leq\, L^2 \,\leq\,
    \|\bw \,-\, \nabla\psi(\bell(\theta'))\|^2.
    \label{eq:ghat_ge}
\end{equation}


Combining \eqref{eq:ghat_le} and \eqref{eq:ghat_ge} with the trivial upper bound in \eqref{eq:ghat_trivial}
allows us to convert the
the high probability result in Lemma \ref{thm:ls} to the following bound on the expected error. For any 
$\delta\in(0,1)$, setting $
\sigma = \frac{G^{1/3}\bar{\epsilon}^{1/3}}{\Phi K^{3/2}\log(d)^{2/3}\beta^{1/3}}$ and
$m = \frac{G^4K^9\log(d)^4\beta^2\log(K/\delta)}{\bar{\epsilon}^2}$, we have: 
\[
\E\left[\|\truncg \,-\, \nabla\psi(\bell(\theta'))\|^2\right] ~\leq~ \tilde{\cO}\left((1-\delta)G^{1/3}\bar{\epsilon}^{1/3}K^3\beta^{2/3}\right)\,+\, 10\delta KL^2,
\]
as desired.
\end{proof}

\if 
For a matrix $\mathbf{H}\in \mathbb{R}^{m\times K}$ with $m>K$ and independent rows $\mathbf{H}_i$ whose entries are zero-mean, sub-gaussian random variables with parameter $2\sigma\Phi$, suppose the second moment matrix $\boldsymbol\Sigma = \E[\mathbf{H}_i\mathbf{H}_i^
    {\top}]$ satisfies $\sum_{j\neq i}\boldsymbol\Sigma_{ij} \leq \boldsymbol\Sigma_{ii} = \mathcal{O}(\sigma^2\Phi^2) \; \forall i\in[K]$, then the smallest eigenvalue $\lambda_{\min}(\boldsymbol\Sigma)$ is bounded away from 0 from Gershgorin circle theorem. We make the assumption that the expected covariance matrix has $\lambda_{\min}$ bounded below by $\mu_{\min}$, i.e., $\lambda_{\min}(\sum_{i=1}^m \E[\mathbf{H}_i\mathbf{H}_i^{\top}]) \geq \mu_{\min}=\mathcal{O}(m\sigma^2\Phi^2)$. This essentially says (1) the subgaussian parameter $\sigma\Phi$ (and therefore the variance) is large enough; (2) the coordinates are only weakly correlated such that the covariance matrix is diagonally dominate. 
\fi

\section{Handling Non-smooth Metrics} \label{app:non-smooth}
For $\psi$ that is only $L$-Lipschitz and non-smooth, we extend the finite difference gradient estimate in Section \ref{sec:fd} with a two-step perturbation method, as detailed in Algorithm \ref{algo:finite-diff-2-step}. This approach can be seen as computing a finite-difference gradient estimate for a smooth approximation to the original $\psi$, given by
$\psi_{\sigma_1}(\bu) := \E\left[\psi(\bu \,+\, \sigma_1 Z_1)\right]$, where $Z_1 \sim \mathcal{N}(\0, \mathbf{I}_K)$. Since $\psi_{\sigma_1}$ is a convolution of $\psi$ with a Gaussian density kernel, it is always smooth. For this setting, we build on recent work by  \citet{Duchi}, and 
show that the two-step perturbation approach provides a gradient estimate for $\psi_{\sigma_1}$. 

\begin{lemma}[Two-step finite difference gradient estimate]
\label{lem:fd-2-step}
Let $M(\theta) = \psi(\bell(\theta)) + \epsilon(\theta)$, for a $\psi$ that is $L$-Lipschitz, and the worst-case slack $\max_{\theta \in \R^d}|\epsilon(\theta)|$ is the minimum among all such decompositions of $M$. Suppose $|\epsilon(\theta)| \leq \bar{\epsilon},\,\forall \theta$.
Let $\hbw$ be returned by Algorithm \ref{algo:finite-diff-2-step} for 
a fixed $\sigma_1 > 0$ and $\sigma_2 = \sqrt{\frac{\sigma_1}{  K^{3/2}L}}$.
Then:
\[
\E\left[\|\hbw \,-\, \nabla\psi_{\sigma_1}(\bell(\theta))\|^2\right] ~\leq~
\tilde{\cO}\left(
\frac{L^{7/4}K^{13/8}}{m\sigma_{1}^{1/4}} \,+\, 
\frac{LK^{5/2}\bar{\epsilon}^2}{\sigma_1}
\right).
\]
\end{lemma}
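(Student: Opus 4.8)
The plan is to split the estimate into a piece built from $\psi$ and a piece built from the slack $\epsilon$, using $M = \psi(\bell(\cdot)) + \epsilon$, and then to treat the first piece as a finite-difference estimate of the Gaussian convolution $\psi_{\sigma_1}$, which is smooth even though $\psi$ need not be. Concretely, writing the score perturbations that realize the surrogate shifts $\sigma_1 Z_1^j$ and $\sigma_1 Z_1^j + \sigma_2 Z_2^j$, I would write $\hbw = \hbw_1 + \hbw_2$, where $\hbw_1 = \frac{1}{m}\sum_{j=1}^m \frac{\psi(\bell(\theta) + \sigma_1 Z_1^j + \sigma_2 Z_2^j) - \psi(\bell(\theta) + \sigma_1 Z_1^j)}{\sigma_2}Z_2^j$ and $\hbw_2$ is the same expression with $\epsilon$ in place of $\psi$. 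Since $|\epsilon(\cdot)| \le \bar{\epsilon}$, each summand of $\hbw_2$ has norm at most $\frac{2\bar{\epsilon}}{\sigma_2}\|Z_2^j\|$, so by Cauchy--Schwarz $\E[\|\hbw_2\|^2] \le \frac{4\bar{\epsilon}^2}{\sigma_2^2}\E[\|Z_2\|^2] = \cO(\bar{\epsilon}^2 K/\sigma_2^2)$; with $\sigma_2^2 = \sigma_1/(K^{3/2}L)$ this becomes the $\tilde{\cO}(LK^{5/2}\bar{\epsilon}^2/\sigma_1)$ term.

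The point of the two-step construction is that $\hbw_1$ is a two-point finite-difference estimate, with step $\sigma_2$ along $Z_2^j$, of $\psi_{\sigma_1}$ --- but with each value $\psi_{\sigma_1}(\cdot)$ replaced by a single-sample estimate $\psi(\cdot + \sigma_1 Z_1^j)$, the \emph{same} $Z_1^j$ being shared by the two evaluation points. Conditioning on $Z_1^j$ and taking expectation over $Z_2^j$ gives $\nabla\psi_{\sigma_2}(\bell(\theta) + \sigma_1 Z_1^j)$ (this is the Stein/Gaussian-smoothing identity used in Lemma~\ref{lem:nesterov}), hence $\E[\hbw_1] = \nabla\psi_{\sqrt{\sigma_1^2+\sigma_2^2}}(\bell(\theta))$. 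I would then bound the bias $\|\nabla\psi_{\sqrt{\sigma_1^2+\sigma_2^2}}(\bell(\theta)) - \nabla\psi_{\sigma_1}(\bell(\theta))\|$ using that $\psi_{\sigma_1}$ is $\cO(L\sqrt{K}/\sigma_1)$-smooth, and in fact has bounded higher derivatives (obtained by differentiating the Gaussian density and using $L$-Lipschitzness of $\psi$, following \citet{Duchi}), which makes this a lower-order contribution.

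For the variance, $\E[\|\hbw_1 - \E\hbw_1\|^2] \le \frac{1}{m}\E[\|D_1\|^2]$ where $D_1$ is a single summand, and here the shared randomness is essential: by $L$-Lipschitzness of $\psi$, $|\psi(\bell(\theta) + \sigma_1 Z_1^j + \sigma_2 Z_2^j) - \psi(\bell(\theta) + \sigma_1 Z_1^j)| \le L\sigma_2\|Z_2^j\|$ \emph{regardless} of $Z_1^j$, so the $1/\sigma_2$ is absorbed and $\E[\|D_1\|^2] = \cO(L^2 K^2)$. A refinement following the two-point variance calculation of \citet{Duchi} --- using smoothness of $\psi_{\sigma_1}$ together with a Hermite/Stein decomposition of the estimator --- tightens the $K$- and $L$-exponents, and after substituting $\sigma_2 = \sqrt{\sigma_1/(K^{3/2}L)}$ yields the $\tilde{\cO}(L^{7/4}K^{13/8}/(m\sigma_1^{1/4}))$ term.

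Finally I would combine $\E[\|\hbw - \nabla\psi_{\sigma_1}(\bell(\theta))\|^2] \le 4\E[\|\hbw_1 - \E\hbw_1\|^2] + 4\|\E\hbw_1 - \nabla\psi_{\sigma_1}(\bell(\theta))\|^2 + 2\E[\|\hbw_2\|^2]$ and substitute $\sigma_2 = \sqrt{\sigma_1/(K^{3/2}L)}$ --- the choice that balances finite-difference discretization against the $\bar{\epsilon}^2/\sigma_2^2$ slack amplification --- to obtain the claimed bound. The main obstacle is the variance step: because the values fed into the finite difference are themselves noisy one-sample smoothing estimates, the off-the-shelf two-point variance bounds do not apply verbatim, so one has to use the $Z_1^j$-coupling, the $L$-Lipschitzness of $\psi$, and the smoothness of $\psi_{\sigma_1}$ simultaneously to keep the $1/\sigma_2$ factor from inflating the variance, and to make the $K$-dependence sharp enough that it balances cleanly against the slack term.
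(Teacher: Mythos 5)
Your proposal is correct and takes essentially the same route as the paper's proof: the same split $\hbw = \hbw_1 + \hbw_2$, the same identification of $\E[\hbw_1]$ with $\nabla\psi_{\sigma_1,\sigma_2}(\bell(\theta)) = \nabla\psi_{\sqrt{\sigma_1^2+\sigma_2^2}}(\bell(\theta))$, the same bias bound via the $\cO(\sqrt{K}L/\sigma_1)$-smoothness of $\psi_{\sigma_1}$, the same reliance on the two-point variance analysis of \citet{Duchi} for the variance of $\hbw_1$ (the paper's Lemma~\ref{lem:duchi}, part 3), and the same $\cO(\bar{\epsilon}^2 K/\sigma_2^2)$ bound on the slack term. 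Your labeling of the bias contribution as ``lower-order'' is no looser than the paper's own accounting of that term, so nothing essential is missing.
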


\begin{algorithm}[t]
\caption{Two-step Finite-difference Gradient Estimate}
\label{algo:finite-diff-2-step}
    \begin{algorithmic}[1]
   \STATE \textbf{Input:} $\theta \in \R^d, M, \ell_1,\cdots,\ell_k$, estimation accuracy $\epsilon$
   \STATE Draw $Z_1^1, \ldots, Z_1^m, Z_2^1, \ldots, Z_2^m \sim \mathcal{N}(\0, \mathbf{I}_K)$
   \STATE Find $\Delta_1^j \in \R^n$ s.t.\ $\bell(\boldf_\theta \,+\, \Delta_1^j, \by) \,=\, \bell(\boldf_\theta, \by) \,+\, \sigma_1 Z_1^j$ for $j=1, \ldots, m$
   \STATE Find $\Delta_2^j \in \R^n$ s.t.\ $\bell(\boldf_\theta \,+\, \Delta_2^j, \by) \,=\, \bell(\boldf_\theta, \by) \,+\, \sigma_1 Z_1^j \,+\, \sigma_2 Z_2^j$ for $j=1, \ldots, m$
   \STATE $\displaystyle \hbw = \frac{1}{m}\sum_{j=1}^m\frac{M(\boldf_\theta \,+\, \Delta_2^j,\, \by) \,-\, M(\boldf_\theta \,+\, \Delta_1^j,\,\by) }{\sigma_2}Z_2^j$
   \STATE \textbf{Output}:  $\hbw$ 
 \end{algorithmic}
\end{algorithm}

Drawing upon the result of Theorem~\ref{thm:meta-result}, we can repeat the analysis on the smooth function $\psi_{\sigma_1}(\cdot)$ to get the following convergence guarantee for Algorithm \ref{algo:pgd}.
\begin{corollary}[Convergence of Algorithm \ref{algo:pgd} for non-smooth $\psi$]
\label{cor:non-smooth} 
Let $M(\theta) = \psi(\bell(\theta)) + \epsilon(\theta)$, for a $\psi$ that is monotonic, and $L$-Lipschitz, and the worst-case slack $\max_{\theta \in \R^d}|\epsilon(\theta)|$ is the minimum among all such decompositions of $M$.

Suppose each $\ell_k$ is $\gamma$-smooth and $\Phi$-Lipschitz in $\theta$ with $\|\bell(\theta)\|\leq G, \, \forall \theta$. Suppose the gradient $\hbw^t$ are estimated with Algorithm \ref{algo:pgd} for a choice $\sigma_1 > 0$, number of perturbation $m$, and $\sigma_2 = \sqrt{\frac{\sigma_1}{K^{3/2}L}}$. Suppose the projection step satisfies
$\|(\bell(\theta^{t+1}) - \tilde{\bu}^t)_+\|^2 \leq
\min_{\theta \in \thetaspace}\|(\bell(\theta)- \tilde{\bu}^t)_+\|^2 \,+\, \mathcal{O}(\frac{\sigma_1^2}{TKL^2}), ~\forall t \in [T]$. Set stepsize $\eta = \frac{\sigma_1^2}{KL^2}$.

Then  Algorithm \ref{algo:pgd} converges to an approximate stationary point of the smooth approximation $\psi_{\sigma_1}(\bell(\cdot))$:
 \begin{align*}
 \min_{1\leq t\leq T}&\E\left[\|\nabla \psi_{\sigma_1}(\bell(\theta^t))\|^2\right] \leq
C\bigg(
 \frac{\sqrt{K}L}{\sigma_1\sqrt{T}} + \sqrt{\kappa} + \sqrt{L}\kappa^{1/4} 
 \bigg),
\end{align*}
where the expectation is over the randomness in the gradient estimates, and 
$C = \cO\big(KL\big(\gamma\big(G+\frac{\sigma_1^2}{KL}\big)+\Phi^2\big)\big)$ and $\kappa = \tilde{\cO}\Big(
\frac{L^{7/4}K^{13/8}}{m\sigma_{1}^{1/4}} \,+\, 
\frac{LK^{5/2}\bar{\epsilon}^2}{\sigma_1}
\Big)$.
\end{corollary}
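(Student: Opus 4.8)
The plan is to obtain Corollary~\ref{cor:non-smooth} as a direct corollary of Theorem~\ref{thm:meta-result}, applied not to $\psi$ itself but to its Gaussian smoothing $\psi_{\sigma_1}(\bu) := \E_{Z_1 \sim \mathcal{N}(\0,\mathbf{I}_K)}[\psi(\bu + \sigma_1 Z_1)]$, together with the gradient-error bound of Lemma~\ref{lem:fd-2-step}. Since the convolution samples points with negative coordinates, I would first extend $\psi$ to a monotone function bounded in $[0,1]$ on all of $\R^K$; the proof of Theorem~\ref{thm:meta-result} only ever invokes monotonicity and the bound $0 \le \psi \le 1$ on the iterates, which lie in $\cU \subseteq \R_+^K$, so this extension is harmless. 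The first task is then to record the structural properties that $\psi_{\sigma_1}$ inherits: it is monotone (if $\bu \le \bu'$ coordinatewise then $\bu + \sigma_1 Z_1 \le \bu' + \sigma_1 Z_1$ pointwise, so the expectations are ordered), it takes values in $[0,1]$, it is $L$-Lipschitz (by the triangle inequality under the expectation), and, crucially, it is $\beta$-smooth with $\beta = \cO(\sqrt{K}L/\sigma_1)$. This last fact is the standard Gaussian-smoothing estimate for Lipschitz functions: using the identity $\nabla\psi_{\sigma_1}(\bu) = \sigma_1^{-1}\E[Z_1\,\psi(\bu + \sigma_1 Z_1)]$ (integration by parts against the Gaussian density) together with the $L$-Lipschitzness of $\psi$, one bounds $\|\nabla\psi_{\sigma_1}(\bu) - \nabla\psi_{\sigma_1}(\bu')\|$ by $\cO(\sqrt{K}L/\sigma_1)\,\|\bu - \bu'\|$.

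Next I would observe that Algorithm~\ref{algo:finite-diff-2-step} is precisely a \emph{one-step} finite-difference estimator for $\psi_{\sigma_1}$: the first perturbation $\sigma_1 Z_1^j$, realized on the model scores through $\Delta_1^j$, implements the convolution that turns $\psi$ into $\psi_{\sigma_1}$, while the second perturbation $\sigma_2 Z_2^j$ is the finite-difference probe. Lemma~\ref{lem:fd-2-step}, with the prescribed $\sigma_2 = \sqrt{\sigma_1/(K^{3/2}L)}$, then gives $\E[\|\hbw^t - \nabla\psi_{\sigma_1}(\bell(\theta^t))\|^2] \le \kappa$ at every iterate $\theta^t$, with $\kappa = \tilde{\cO}\big(L^{7/4}K^{13/8}/(m\sigma_1^{1/4}) + LK^{5/2}\bar{\epsilon}^2/\sigma_1\big)$, which is exactly the $\kappa$ appearing in the corollary.

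With these ingredients in hand, the corollary follows by invoking Theorem~\ref{thm:meta-result} verbatim, reading $\psi_{\sigma_1}$ for $\psi$ and $\beta = \cO(\sqrt{K}L/\sigma_1)$ for its smoothness parameter. I would check that every hypothesis lines up: the surrogate assumptions ($\gamma$-smooth, $\Phi$-Lipschitz, $\|\bell\| \le G$) are untouched; the gradient-estimate hypothesis is supplied by the previous paragraph; the stepsize $\eta = 1/\beta^2$ becomes $\eta = \sigma_1^2/(KL^2)$, as stated; and the required projection accuracy $\cO(1/(\beta^2 T))$ becomes $\cO(\sigma_1^2/(KL^2 T))$, again as stated. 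The theorem's conclusion $\min_{1 \le t \le T}\E[\|\nabla\psi_{\sigma_1}(\bell(\theta^t))\|^2] \le C(\beta/\sqrt{T} + \sqrt{\kappa} + \sqrt{L}\kappa^{1/4})$ then reads, with $\beta/\sqrt{T} = \cO(\sqrt{K}L/(\sigma_1\sqrt{T}))$ and $C = \cO(KL(\gamma(G + L/\beta^2) + \Phi^2)) = \cO(KL(\gamma(G + \sigma_1^2/(KL)) + \Phi^2))$, precisely the claimed bound.

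The genuinely non-routine part is thus not the convergence analysis --- which is inherited wholesale from Theorem~\ref{thm:meta-result} --- but the bookkeeping around the smoothing: pinning down the $\cO(\sqrt{K}L/\sigma_1)$ smoothness constant of $\psi_{\sigma_1}$ and then verifying that the substitution $\beta = \sqrt{K}L/\sigma_1$ propagates consistently into the stepsize, the projection tolerance, and the constant $C$. A secondary point to handle cleanly is the domain mismatch noted above, which I would dispatch via the monotone bounded extension of $\psi$, observing that the minimizer of $\psi_{\sigma_1}$ over $\cU$ and the associated stationarity notion are unaffected because $\cU \subseteq \R_+^K$ and the theorem's proof only touches $\psi$-values at the algorithm's iterates.
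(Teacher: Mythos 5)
Your proposal is correct and follows essentially the same route as the paper's own proof: verify that Gaussian convolution preserves monotonicity, range, and Lipschitzness, establish the $\cO(\sqrt{K}L/\sigma_1)$ smoothness of $\psi_{\sigma_1}$, feed in the gradient-error bound of Lemma~\ref{lem:fd-2-step}, and apply Theorem~\ref{thm:meta-result} to $\psi_{\sigma_1}$ with $\beta = \sqrt{K}L/\sigma_1$ and $\eta = 1/\beta^2 = \sigma_1^2/(KL^2)$. Your additional care about extending $\psi$ monotonically to all of $\R^K$ before convolving is a detail the paper glosses over, but it does not change the argument.
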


The above result  guarantees convergence to the stationary point of the smoothed metric $\psi_{\sigma_1}(\bell(\cdot))$ and not the original metric $\psi(\bell(\cdot))$. However, as long as the surrogate functions $\bell$ are continuously differentiable, by taking $\sigma_1\rightarrow 0$ and allowing $T$ to increase as $\sigma_1$ decreases, the algorithm can be made to converge to a stationary point of the original metric $\psi(\bell(\cdot))$, in the sense of Clark-subdifferential  (see e.g.\ \citet{Garmanjani+13}).

\subsection{Proof of Lemma \ref{lem:fd-2-step}}
We will find it useful to re-state results from \citet{Duchi} and \citet{Nesterov+17}, extended to our setting. 
\begin{lemma}
\label{lem:duchi}
Suppose $\psi$ is $L$-Lipschitz.
Define 
$\psi_{\sigma_1}(\bu) := \E_{Z_1 \sim \mathcal{N}(\0, \mathbf{I}_K)}\left[\psi(\bu \,+\, \sigma_1 Z_1)\right]$ and 
$\psi_{\sigma_1, \sigma_2}(\bu) := \E_{Z_2 \sim \mathcal{N}(\0, \mathbf{I}_K)}\left[\psi_{\sigma_1}(\bu \,+\, \sigma_2 Z_2)\right]$. 
 Let $\hbw_1 = \frac{1}{m}\sum_{j=1}^m\frac{\psi(\bell(\boldf_\theta \,+\, \Delta_2^j, \by)) \,-\, \psi(\bell(\boldf_\theta \,+\, \Delta_1^j, \by)) }{\sigma_2}Z_2^j$, where $\Delta_1^j, \Delta_2^j$ are as defined in Algorithm \ref{algo:finite-diff-2-step}. Then:
 \begin{enumerate}
     \item $\hbw_1$ is an unbiased estimate of the gradient of $\psi_{\sigma_1,\sigma_2}$ at $\bell(\theta)$, i.e., 
 $\E[\hbw_1] \,=\, \nabla \psi_{\sigma_1,\sigma_2}(\bell(\theta))$. 
 \item $\psi_{\sigma_1}(\cdot)$ is smooth with smoothness parameter $\displaystyle\frac{\sqrt{K}L}{\sigma_1}$ and Lipschitz with constant $L$.
    \item 
 $\displaystyle 
 \E\left[\|\hbw_1-\E[\hbw_1]\|^2\right] \,\leq\,\frac{CL^2K}{m}\left(\sqrt{\frac{\sigma_2}{\sigma_1}}K\,+\,\log K\,+\, 1\right)$
for some constant $C$.
    \item $\displaystyle\|\nabla\psi_{\sigma_1, \sigma_2}(\bell(\theta)) \,-\, \nabla\psi_{\sigma_1}(\bell(\theta))\| \,\leq\, 
\frac{\sigma_2}{2}\frac{\sqrt{K}L}{\sigma_1}(K+3)^\frac{3}{2}.$
 \end{enumerate}
 \begin{proof}
Part 1 follows by trivially observing
\[\E_{Z_1,Z_2}[\hbw_1]= \E_{Z_2}\Big[\frac{\psi_{\sigma_1}(\bu+\sigma_2Z_2)-\psi_{\sigma_1}(\bu)}{\sigma_2}Z_2\Big] = \nabla \psi_{\sigma_1,\sigma_2}(\bu)  \]
where we invoked part 1 of Lemma~\ref{lem:nesterov}. See Lemma 2 of \citet{Nesterov+17}  for part 2. Part 2 together with Lemma 2 in \citet{Duchi} give the result in part 3. See
Lemma 3  of \citet{Nesterov+17} for part 4.
 \end{proof}
\end{lemma}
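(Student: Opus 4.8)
The plan is to prove all four parts by identifying, at each step, the right Gaussian-smoothed function to which a known zeroth-order result applies, exploiting the two-layer perturbation structure of Algorithm~\ref{algo:finite-diff-2-step}. The preliminary observation I would establish first is that the definitions of $\Delta_1^j, \Delta_2^j$ make the score perturbations land exactly at the intended surrogate offsets: writing $\bu = \bell(\boldf_\theta, \by)$, we have $\bell(\boldf_\theta + \Delta_1^j, \by) = \bu + \sigma_1 Z_1^j$ and $\bell(\boldf_\theta + \Delta_2^j, \by) = \bu + \sigma_1 Z_1^j + \sigma_2 Z_2^j$, so that
\[
\hbw_1 \,=\, \frac{1}{m}\sum_{j=1}^m \frac{\psi(\bu + \sigma_1 Z_1^j + \sigma_2 Z_2^j) \,-\, \psi(\bu + \sigma_1 Z_1^j)}{\sigma_2}\, Z_2^j .
\]
This identity is what lets every subsequent step operate on the inner-smoothed $\psi_{\sigma_1}$ rather than on the raw $\psi$.

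For part 1, I would condition on the draws $Z_1^j$, which are independent of the $Z_2^j$. Taking $\E_{Z_1}$ inside the sum collapses each numerator to $\psi_{\sigma_1}(\bu + \sigma_2 Z_2^j) - \psi_{\sigma_1}(\bu)$ by the definition of $\psi_{\sigma_1}$, leaving the single-step Gaussian forward-difference estimator for $\psi_{\sigma_1}$ with smoothing radius $\sigma_2$. Part~1 of Lemma~\ref{lem:nesterov} (the identity $\E[\frac{f(\bu+\sigma_2 Z)-f(\bu)}{\sigma_2} Z] = \nabla f_{\sigma_2}(\bu)$, applied to $f = \psi_{\sigma_1}$) then gives $\E[\hbw_1] = \nabla \psi_{\sigma_1,\sigma_2}(\bell(\theta))$, as displayed in the proof sketch. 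Part 2 is a direct invocation of Nesterov's smoothing bounds \citep{Nesterov+17}: Lipschitz continuity of $\psi_{\sigma_1}$ with constant $L$ is immediate because it is an average of $L$-Lipschitz shifts of $\psi$, while the smoothness estimate $\frac{\sqrt{K}L}{\sigma_1}$ follows from their Lemma~2 with dimension $K$, Lipschitz constant $L$, and smoothing radius $\sigma_1$. Part 4 then falls out by applying the Nesterov gradient-approximation bound (part~3 of Lemma~\ref{lem:nesterov}) to $f = \psi_{\sigma_1}$ with smoothing radius $\sigma_2$: since $\psi_{\sigma_1}$ is $\frac{\sqrt{K}L}{\sigma_1}$-smooth by part~2 and $\nabla(\psi_{\sigma_1})_{\sigma_2} = \nabla\psi_{\sigma_1,\sigma_2}$, this yields exactly the stated estimate for $\|\nabla\psi_{\sigma_1,\sigma_2}(\bell(\theta)) - \nabla\psi_{\sigma_1}(\bell(\theta))\|$.

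The main work, and the step I expect to be the obstacle, is part 3, the second-moment bound. Here I would invoke Lemma~2 of \citet{Duchi}, which controls the second central moment of a single two-point finite-difference gradient estimate in terms of the dimension $K$, the Lipschitz constant, and the ratio $\sigma_2/\sigma_1$ of the two smoothing radii, once the inner-smoothed function is known to be $\frac{\sqrt{K}L}{\sigma_1}$-smooth (the content of part~2). Feeding in the Lipschitz constant $L$ and this smoothness parameter gives a per-sample bound of order $L^2 K(\sqrt{\sigma_2/\sigma_1}\,K + \log K + 1)$, and since $\hbw_1$ averages $m$ i.i.d.\ terms, the second central moment of the average scales down by $1/m$, producing the claimed estimate. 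The delicate points are to confirm that Duchi's lemma is applied to the correct object — because $\hbw_1$ evaluates the non-smooth $\psi$, not $\psi_{\sigma_1}$, one must reuse the conditioning viewpoint from part~1 so that the outer $Z_1$-averaging converts the estimator's statistics into those of a finite-difference estimator of the smooth $\psi_{\sigma_1}$ — and to track the powers of $K$ and the $\sqrt{\sigma_2/\sigma_1}$ factor through the sub-Gaussian and smoothness bookkeeping so that the constants match the stated form.
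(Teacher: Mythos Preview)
Your proposal is correct and follows essentially the same route as the paper: part~1 by conditioning on the inner perturbations and invoking part~1 of Lemma~\ref{lem:nesterov}, part~2 by Nesterov's smoothing lemma, part~3 by combining the smoothness from part~2 with Lemma~2 of \citet{Duchi}, and part~4 by applying Nesterov's Lemma~3 to the smoothed function $\psi_{\sigma_1}$. The only difference is that you spell out more of the bookkeeping (the rewriting of $\hbw_1$ in terms of $\bu$, the conditioning argument, and the $1/m$ scaling), whereas the paper's proof is a terse list of citations.
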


Now we are ready to bound the MSE in gradient estimate.
\begin{proof}[Proof of Lemma \ref{lem:fd-2-step}]
We can write out the gradient estimate as:
\begin{eqnarray*}
\hbw &=& \frac{1}{m}\sum_{j=1}^m\frac{M(\boldf_\theta \,+\, \Delta_2^j,\,\by) \,-\, M(\boldf_\theta\,+\, \Delta_1^j,\,\by) }{\sigma_2}Z_2^j
\\
&=&\frac{1}{m}\sum_{j=1}^m\frac{\psi(\bell(\boldf_\theta \,+\, \Delta_2^j,\,\by)) \,-\, \psi(\bell(\boldf_\theta \,+\, \Delta_1^j,\,\by))}{\sigma_2}Z_2^j
\,+\, \frac{1}{m}\sum_{j=1}^m\frac{\epsilon(\boldf_\theta \,+\, \Delta_2^j,\,\by) \,-\, \epsilon(\boldf_\theta\,+\, \Delta_1^j,\,\by) }{\sigma_2}Z_2^j
\\
&=& \frac{1}{m}\sum_{j=1}^m\frac{\psi(\bell(\theta)+\sigma_1Z_1^j+\sigma_2Z_2^j) \,-\, \psi(\bell(\theta)+\sigma_1 Z_1^j)}{\sigma_2}Z_2^j
\,+\, \frac{1}{m}\sum_{j=1}^m\frac{\epsilon(\boldf_\theta \,+\, \Delta_2^j,\,\by) \,-\, \epsilon(\boldf_\theta\,+\, \Delta_1^j,\,\by) }{\sigma_2}Z_2^j\\
&:=& \hbw_1 + \hbw_2,
\end{eqnarray*}
where $\epsilon(\boldf_\theta,\,\by)$ is the unknown slack function in Section \ref{sec:re-formulation}, re-written in terms of the scores $\boldf_\theta$ and labels $\by$.

Let $\psi_{\sigma_1}$ and $\psi_{\sigma_1,\sigma_2}$ be defined as in Lemma \ref{lem:duchi}. 
Then the gradient estimate error can be expanded as:
\begin{eqnarray*}
\E\left[\|\hbw \,-\, \nabla\psi_{\sigma_1}(\bell(\theta))\|^2\right] &\leq&
    2\E\left[\|\hbw \,-\, \nabla\psi_{\sigma_1,  \sigma_2}(\bell(\theta))\|^2\right] \,+\,
    2\|\nabla\psi_{\sigma_1,  \sigma_2}(\bell(\theta)) \,-\, \nabla\psi_{\sigma_1}(\bell(\theta))\|^2\\
    &\leq&
    4\E\left[\|\hbw_1 \,-\, \nabla\psi_{\sigma_1,  \sigma_2}(\bell(\theta))\|^2\right] \,+\,
    4\E\left[\|\hbw_2\|^2\right] \,+\,
    2\|\nabla\psi_{\sigma_1,  \sigma_2}(\bell(\theta)) \,-\, \nabla\psi_{\sigma_1}(\bell(\theta))\|^2\\
    &\leq&
    4\E\left[\|\hbw_1 \,-\, \nabla\psi_{\sigma_1,  \sigma_2}(\bell(\theta))\|^2\right]  \,+\,
    \frac{16\bar{\epsilon}^2}{ m\sigma_2^2}\sum_{j=1}^m \E\left[\|Z_2^j\|^2\right] \,+\,
    2\|\nabla\psi_{\sigma_1,  \sigma_2}(\bell(\theta)) \,-\, \nabla\psi_{\sigma_1}(\bell(\theta))\|^2 \\
    &\leq&
    \frac{CL^2K}{m}\left(\sqrt{\frac{\sigma_2}{\sigma_1}}K\,+\,\log K\,+\, 1\right) \,+\, 
    \frac{16\bar{\epsilon}^2K}{\sigma_2^2} \,+\,
    \frac{\sigma^2_2}{2}\frac{KL^2}{\sigma^2_1}(K+3)^3,
\end{eqnarray*}
where we used that (1) $\hbw_1$ is an unbiased estimate of $\nabla\psi_{\sigma_1, \sigma_2}(\bell(\theta))$ (see part 1 of Lemma \ref{lem:duchi}); (2) boundness assumption $|\epsilon(\theta)| \leq \bar{\epsilon}$; (3) $\|a_1+\cdots+a_m\|^2 \leq m(\|a_1\|^2+\cdots+\|a_m\|^2)$, and the last step follows from Parts 3--4 of Lemma \ref{lem:duchi}. 


Setting $\sigma_{2} = \sqrt{\frac{\sigma_1}{K^{3/2}L}}$ completes the proof.
\end{proof}

\subsection{Proof of Corollary \ref{cor:non-smooth}}
\begin{proof}
We begin by observing that convolution operation preserves monotonicity, convexity, and range of the function. Let $g_{\sigma_1}(\cdot)$ denotes Gaussian density function with variance $\sigma_1^2$, since $ \psi_{\sigma_1}(\bu)$ is a positively-weighted linear combination of shifted $\psi(\cdot)$, i.e., 
\[ \psi_{\sigma_1}(\bu) = \int _{\mathbb{R}^K} \psi(\bu-\bz) \cdot g_{\sigma_1}(\bz) \, d\bz=\int _{\mathbb{R}^K} \psi(\bz) \cdot g_{\sigma_1}(\bu-\bz) \, d\bz\, ,\]
Lipschitz property and convexity follows immediately from those on $\psi(\cdot)$. Moreover, since $g_{\sigma_1}$ is a probability distribution, we always have $\max |\psi_{\sigma_1}(\bu)| \leq \max |\psi(\bu)|$.
Taking derivatives, we have if $\psi(\cdot)$ is monotonic,
\[\frac{\partial \psi_{\sigma_1}(\bu)}{\partial u_i} = \nabla \psi_{\sigma_1}(\bu)^{\top}\mathbf{e}_i = \int _{\mathbb{R}^K} \nabla\psi(\bz)^{\top}\mathbf{e}_i \cdot g_{\sigma_1}(\bu-\bz) \, d\bz > 0\] 
therefore $\psi_{\sigma_1}(\cdot)$ is also monotonic.
Moreover, from Lemma~\ref{lem:duchi} we know $\psi_{\sigma_1}(\cdot)$ is smooth with parameter $\beta = \frac{\sqrt{K}L}{\sigma_1}$ and is $L$-Lipschitz, and that the mean-squared-error in gradient estimate $\hbw$ is bounded by $\kappa = \tilde{\cO}\Big(
\frac{L^{7/4}K^{13/8}}{m\sigma_{1}^{1/4}} \,+\, 
\frac{LK^{5/2}\bar{\epsilon}^2}{\sigma_1}
\Big)$ from Lemma~\ref{lem:fd-2-step}.
Applying 
Theorem \ref{thm:meta-result}
on the smoothed metric $\psi_{\sigma_1}(
\cdot)$ with $\eta =\frac{1}{\beta^2}= \frac{\sigma_1^2}{KL^2}$ then completes the proof.
\end{proof}

\section{Surrogate PGD as Optimizing a Linear Combination of Surrogates}
\label{app:prox}
In this section, we provide an interpretation of Algorithm \ref{algo:pgd} as optimizing an \textit{adaptively chosen} linear combination of the surrogates $\bell(\theta)$ with an additional proximal penalty like term. Recall that Step 6 of the surrogate projected gradient descent algorithm in Algorithm \ref{algo:pgd} solves the following optimization problem:
\begin{equation}
       \theta^{t+1} \in \argmin{\theta \in \thetaspace}\,\|\big(\bell(\theta) \,-\, \tilde{\bu}^{t+1}\big)_+\|^2.
\label{eq:pgd-step-6}
\end{equation}
\begin{lemma}
The optimization problem in \eqref{eq:pgd-step-6} is equivalent to: 
\[
\theta^{t+1} \in \argmin{\theta \in \thetaspace}\, \big\langle\bw^t,\bell(\theta)\big\rangle \,+\,
\mathbb{D}(\theta, \theta^t),
\]
where 
$
\mathbb{D}(\theta, \theta^t) \,=\,
\frac{1}{2\eta}\big\|\bell(\theta)-\bell(\theta^t)\big\|^2+\frac{1}{2\eta}\big\|(\bell(\theta) \,-\, \bell(\theta^t) \,+\, \eta\, \bw^t)_+\big\|^2-\frac{1}{2\eta}\big\|( \bell(\theta^t) \,-\, \eta\, \bw^t-\bell(\theta) \,)_+\big\|^2$.
\end{lemma}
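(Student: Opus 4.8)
The plan is to show that the two objectives appearing in the lemma differ only by a strictly positive multiplicative constant and an additive term that does not depend on $\theta$, so that their sets of minimizers coincide. Introduce the shorthand $a := \bell(\theta)$, $b := \bell(\theta^t)$, $g := \bw^t$, and recall from the gradient-update step of Algorithm~\ref{algo:pgd} that $\tilde{\bu}^{t+1} = \bu^t - \eta\bw^t = b - \eta g$. Setting $v := a - b + \eta g$, the objective in \eqref{eq:pgd-step-6} is exactly $F_1(\theta) := \|v_+\|^2$, where $(\cdot)_+$ and $(\cdot)_-$ denote the elementwise positive and negative parts.

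The next step is to rewrite $\mathbb{D}(\theta,\theta^t)$ in terms of $v$ using two elementary facts about positive and negative parts. First, $(\bell(\theta^t) - \eta\bw^t - \bell(\theta))_+ = (-v)_+ = v_-$, so the last term of $\mathbb{D}$ equals $\tfrac{1}{2\eta}\|v_-\|^2$; second, in each coordinate exactly one of $v_{k,+}, v_{k,-}$ vanishes, which gives the Pythagorean identity $\|v\|^2 = \|v_+\|^2 + \|v_-\|^2$. With these, $\mathbb{D}(\theta,\theta^t) = \tfrac{1}{2\eta}\big(\|a-b\|^2 + \|v_+\|^2 - \|v_-\|^2\big)$, so the second objective is $F_2(\theta) := \langle g, a\rangle + \tfrac{1}{2\eta}\big(\|a-b\|^2 + \|v_+\|^2 - \|v_-\|^2\big)$.

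Then I would carry out the substitution $a = v + b - \eta g$ inside $2\eta F_2$. Expanding $2\eta\langle g,a\rangle$ and $\|a-b\|^2 = \|v - \eta g\|^2$, the cross terms $\pm 2\eta\langle g, v\rangle$ cancel, leaving $2\eta\langle g,a\rangle + \|a-b\|^2 = \|v\|^2 + 2\eta\langle g,b\rangle - \eta^2\|g\|^2$. Adding $\|v_+\|^2 - \|v_-\|^2$ and replacing $\|v\|^2 + \|v_+\|^2 - \|v_-\|^2$ by $2\|v_+\|^2$ via the Pythagorean identity yields $2\eta F_2(\theta) = 2\|v_+\|^2 + 2\eta\langle \bw^t, \bell(\theta^t)\rangle - \eta^2\|\bw^t\|^2 = 2F_1(\theta) + \big(2\eta\langle \bw^t, \bell(\theta^t)\rangle - \eta^2\|\bw^t\|^2\big)$. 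Since the bracketed quantity is independent of $\theta$ and $\eta > 0$, we get $F_2 = \tfrac{1}{\eta}F_1 + (\text{const})$, hence $\argmin_\theta F_1 = \argmin_\theta F_2$, which is the claim.

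The computation is entirely routine; there is no real obstacle. The only point requiring care is the sign bookkeeping around $(\cdot)_+$ versus $(\cdot)_-$ — specifically recognizing that the third term of $\mathbb{D}$ is the squared \emph{negative} part $\tfrac{1}{2\eta}\|v_-\|^2$ rather than an additional penalty, so that it combines with $\|v\|^2$ through $\|v\|^2 - \|v_-\|^2 = \|v_+\|^2$ without leaving a residual $\theta$-dependent term.
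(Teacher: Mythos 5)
Your proof is correct: the reduction of $2\eta F_2$ to $2\|v_+\|^2$ plus a $\theta$-independent constant checks out (the cross terms $\pm 2\eta\langle g,v\rangle$ do cancel, and $\|v\|^2+\|v_+\|^2-\|v_-\|^2 = 2\|v_+\|^2$ is the right Pythagorean bookkeeping), and it is essentially the same argument as the paper's, which runs the identical algebra in the forward direction via the identity $(x)_+=\tfrac{x+|x|}{2}$ and $\langle v,|v|\rangle=\|v_+\|^2-\|v_-\|^2$. Your version is if anything slightly more careful, since it makes the positive multiplicative factor $\tfrac{1}{\eta}$ and the exact additive constant $\langle \bw^t,\bell(\theta^t)\rangle-\tfrac{\eta}{2}\|\bw^t\|^2$ explicit before concluding the argmin sets coincide.
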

Thus \eqref{eq:pgd-step-6} can be seen as minimizing a sum of linear combination of the surrogates and  (roughly speaking)  a term penalizing some form of distance between the current iterate $\theta^{t+1}$ and the previous iterate $\theta^{t}$.
\begin{proof}
Expanding the optimization problem in \eqref{eq:pgd-step-6}:
\begin{align*}
\theta^{t+1} \in \argmin{\theta \in \thetaspace}\,\|\big(\bell(\theta) \,-\, (\bell(\theta^t) \,-\, \eta\, \bw^t)\big)_+\|^2.
\end{align*}
Using the identity $(x)_+ = \frac{x+|x|}{2}$, we can write the objective in the above problem as
\begin{align*}
    \frac{1}{4}&\Big\|\bell(\theta) \,-\, \bell(\theta^t) \,+\, \eta\, \bw^t+ |\bell(\theta) \,-\, \bell(\theta^t) \,+\, \eta\, \bw^t| \Big\|^2\\
    &=\frac{1}{2}\Big\|\bell(\theta) \,-\, \bell(\theta^t) \,+\, \eta\, \bw^t\Big\|^2+\frac{1}{2}\Big\langle \bell(\theta) \,-\, \bell(\theta^t) \,+\, \eta\, \bw^t, |\bell(\theta) \,-\, \bell(\theta^t) \,+\, \eta\, \bw^t|\Big\rangle
    \end{align*}
which by ignoring constant terms and noticing that the second term is positive for the coordinates for which $\ell_k(\theta) > \ell_k(\theta^t) \,-\, \eta\, \bw_k^t$ and negative otherwise, we have that
\[\theta^{t+1} \in \argmin{\theta \in \thetaspace}\, \Big\langle\bw^t,\bell(\theta)\Big\rangle+\frac{1}{2\eta}\Big\|\bell(\theta)-\bell(\theta^t)\Big\|^2+\frac{1}{2\eta}\Big\|(\bell(\theta) \,-\, \bell(\theta^t) \,+\, \eta\, \bw^t)_+\Big\|^2-\frac{1}{2\eta}\Big\|( \bell(\theta^t) \,-\, \eta\, \bw^t-\bell(\theta) \,)_+\Big\|^2,\]
as desired.
\end{proof}

\section{Additional Experimental Details}
\label{app:expts}

\subsection{Choice of Hyper-parameters}
For the inner projection step in Algorithm \ref{algo:pgd}, we run Adagrad with a fixed step-size of 1.0 for 100 iterations. 
We used Adagrad as the optimization method for each of the baselines (including logistic regression, and the Relaxed F-measure approach and the Generalized Rates approach in Section \ref{sec:expt-fm}).
We tuned the hyper-parameters such as the step size $\eta$ for the proposed surrogate PGD algorithm and for the baseline Adagrad solvers, and the perturbation parameter $\sigma$ for gradient estimation in Algorithm \ref{algo:ls} using a held-out validation set. 

For the F-measure experiments in Section \ref{sec:expt-fm}, we chose the step sizes from the range $\{0.05, 0.1, 0.5, 1.0, 5.0\}$ and $\sigma$ from the range $\{0.05, 0.1, 0.5\}$.  For the ranking experiments in Section \ref{sec:expt-ranking}, we chose the step sizes from $\{0.001, 0.005, 0.01\}$ and found a fixed $\sigma$ of 1.5 to  work well across all runs. For the proxy label experiments in Section \ref{sec:expt-proxy}, we chose the step sizes from the range $\{0.01, 0.05, 0.1, 0.5, 1.0\}$ and $\sigma$ from the range $\{0.01, 0.05, 0.1, 0.5, 1.0\}$. 

For the larger KDD Cup 2008 dataset in the ranking experiments in Section \ref{sec:expt-ranking}, we used minibatches of size 100 to estimate gradients.

\subsection{Choice of Number of Perturbations}
In all our experiments, we chose to use 1000 perturbations to estimating gradients for $\psi$ in the proposed algorithm as this was a sufficiently large number that worked well across all experiments. But for many experiments, we could get comparable results with fewer perturbations. For example for the experiments in Sec 6.1, with as few as 10 perturbations, our approach achieved a test G-mean of 0.801, a comparable value to what we report for the proposed method  in Table \ref{tab:gmean} (0.803). Similarly, for the macro F-measure experiments in Table \ref{tab:fm-clean}, we got comparable results with just 10 perturbations. We report these comparisons in Table \ref{tab:fm-clean-perturbations}. For the larger KDD Cup 2008 dataset in the ranking experiments in Section \ref{sec:expt-ranking}, we estimated gradients with minibatches of size 100 and only perturbed the examples within a batch for the gradient computation.

\begin{table}[t]
\centering
\caption{Average test macro F-measure across groups with clean features. \textit{Higher} is better. We compare the results for the proposed method with 10 and 1000 perturbations to estimate gradients.}
\vspace{3pt}
\label{tab:fm-clean-perturbations}
\begin{tabular}{lccccc}
	\hline
& \#perturbations = 10 & \#perturbations = 1000 \\\hline
Business &0.796 &{0.796} \\
COMPAS	 &0.630 & 0.629 \\
Adult	 &0.661 & {0.665} \\
Default	 &0.532 & 0.533 \\
\hline
\end{tabular}
\end{table}
\begin{figure}[H]
    \centering
    \includegraphics[scale=0.4]{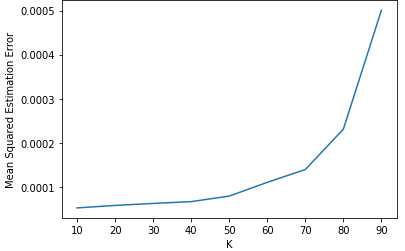}
    \caption{Mean squared estimation error for gradients estimated by the local linear interpolation approach in Algorithm \ref{algo:ls} for a synthetic $K$-dimensional gradient estimation problem, as $K$ varies.}
    \label{fig:dependence-k}
\end{figure}

\subsection{Dependence of the Gradient Estimation Error on $K$}
\label{sec:effect-of-k}
While the error bound for the linear interpolation based gradient estimation approach in Lemma \ref{thm:ls} has a strong dependence on the number of surrogates $K$, we find that in our simulations, the dependence on $K$ is less severe. This is evident from the plot shown in Figure \ref{fig:dependence-k}, where we consider the toy problem of estimating the gradient of the function $f(z) = \left(\prod_{k=1}^K z_k\right)^{1/K}$, where $z \in \R^{K}_+$, and we draw each coordinate $z_k$ from $0.1 + \text{Unif}(0, 0.9)$,  
We adopt the local linear interpolation based approach in Algorithm \ref{algo:ls} to estimate gradients for $f$ and evaluate the mean squared error for the gradient estimates w.r.t. the true gradient of $f$.
We use 100 perturbations, and report the average estimation errors over 100 random draws of $z$ and over 100 random trials for each draw of $z$. The figure plots the error as the input dimension $K$ varies.
~\\[7cm]
\end{document}